  \newcommand{\tomas}[1]{[\textcolor{red}{TG: #1}] }
  \newcommand{\gf}[1]{[\textcolor{purple}{GF: #1}] }
  \newcommand{\ar}[1]{[\textcolor{green}{AR: #1}] }
  \newcommand{\tomas}[1]{}
  \newcommand{\gf}[1]{}
  \newcommand{\ar}[1]{}
\title{Private Evolution Converges}
\author{Tomas Gonzalez \\
  Carnegie Mellon University \\
  \texttt{tcgonzal@andrew.cmu.edu} \\
  \And
  Giulia Fanti \\
  Carnegie Mellon University\\
  \texttt{gfanti@andrew.cmu.edu} \\%
  \And
  Aaditya Ramdas \\
  Carnegie Mellon University \\
  \texttt{aramdas@cs.cmu.edu} \\%
}
\begin{document}

\maketitle

\begin{abstract} 
Private Evolution (PE) is a promising training-free method for differentially private (DP) synthetic data generation. While it achieves strong performance in some domains (e.g., images and text), its behavior in others (e.g., tabular data) is less consistent. To date, the only theoretical analysis of the convergence of PE depends on unrealistic assumptions about both the algorithm’s behavior and the structure of the sensitive dataset. In this work, we develop a new theoretical framework to understand PE’s practical behavior and identify sufficient conditions for its convergence. For $d$-dimensional sensitive datasets with $n$ data points from a convex and compact domain, we prove that under the right hyperparameter settings and given access to the Gaussian variation API proposed in \cite{PE23}, PE 
produces an $(\varepsilon, \delta)$-DP synthetic dataset with expected 1-Wasserstein distance $\tilde{O}(d(n\varepsilon)^{-1/d})$ from the original; this establishes worst-case convergence of the algorithm as $n \to \infty$. Our analysis extends to general Banach spaces as well. We also connect PE to the Private Signed Measure Mechanism, a method for DP synthetic data generation that has thus far not seen much practical adoption. We demonstrate the practical relevance of our theoretical findings in experiments. 
\end{abstract}

\section{Introduction}
 
Many modern machine learning applications rely on user data, making data privacy protection a central concern. In this regard, differential privacy (DP) \cite{dwork2006:calibrating, dwork2014DPfoundations} has become a \emph{de facto} standard for safeguarding sensitive information of individuals. Given a private dataset, many problems---including regression \cite{sheffet2019DPlinearregression, chaudhuri2008DPlogisticregression}, deep learning \cite{abadi2016DPdeeplearning}, and stochastic optimization \cite{bassily2021DPSO, arora2023DPnonconvex}---can be performed in a DP manner. While it is possible to design a different DP algorithm for each specific task,  a reasonable alternative is to generate a DP \emph{synthetic} dataset that preserves many of the statistical properties of the sensitive dataset. By the post-processing property of DP \cite{dwork2014DPfoundations}, the DP synthetic data can then be fed into existing non-DP algorithms without incurring additional privacy loss; this avoids modifying existing ML pipelines and enables data sharing with third parties (e.g. for reproducibility). 

Recently, \cite{PE23} introduced Private Evolution (PE), a promising new  framework for DP synthetic data generation that relies on public, pretrained data generators~\cite{pe2,pe3,hou24PE_federated,swanberg2025PEtabulardata,zou2025PEmultipleLLMs, hou2025PEandPO}. PE is currently competitive with---and sometimes improves on---state-of-the-art models in terms of Fréchet inception distance (FID) and downstream task performance in  settings such as images and text \cite{PE23,pe2, pe3,hou24PE_federated}. In addition, PE is training-free, whereas current state-of-the-art approaches typically  train (or finetune) a generative model on the sensitive dataset using DP-SGD~\cite{xie2018DPgan, long2021gpate, dockhorn2022dpdifussionmodels, cao2021DPGMwithsinkhorn}. However, in some settings, including tabular data~\cite{swanberg2025PEtabulardata}, 
and image data with mismatched distributions~\cite{gong2025dpimagebench}, PE has achieved limited success. To better understand when PE works, it is crucial to improve our theoretical understanding of the  algorithm. 

At a high level, PE works as follows (Figure \ref{fig:PE_figure}). First, it creates a synthetic data set $S_0$ with an API that is independent of the sensitive dataset $S$ (e.g a foundation model trained on public data). Then, iteratively it refines the synthetic data, creating $S_1,S_2,...$, where $S_t$ is obtained from $S_{t-1}$ by generating variations $V_t$ of the dataset $S_{t-1}$ and then privately selecting the samples that are closest to $S$. Doing so, the synthetic datasets gets `closer' to $S$ over time. 

\begin{wrapfigure}{r}{0.5\textwidth}
  \begin{center}
    \centering
\begin{tikzpicture}[
    every node/.style={font=\small}, 
    dot/.style={circle, minimum size=3pt, inner sep=0pt, fill=#1},
    arrow/.style={-{Latex}}, scale=0.55]

\foreach \i in {1,2,3} {
    \node (S\i) at (0, -\i) [dot=red] {};
    \node (S0\i) at (0.9, -\i) [dot=blue] {};
}
\node at (0, 0) {$S$};
\node at (0.9, 0) {$S_0$};
\draw[thick] (-0.5, -3.5) rectangle (1.5, 0.5);

\draw[->, thick] (1.5, -1.5) -- (2.1, -1.5);

\foreach \i in {1,2,3} {
    \node (S1\i) at (2.6, -\i) [dot=red] {};
}
\node at (3.5, -1.0) [dot=blue] {};
\node at (3.5, -2) [dot=blue] {};
\node at (3.5, -3.0) [dot=blue] {};
\node at (3.6, -1.3) [dot=green!60!black] {};
\node at (3.4, -0.7) [dot=green!60!black] {};
\node at (3.6, -2.2) [dot=green!60!black] {};
\node at (3.4, -1.8) [dot=green!60!black] {};
\node at (3.2, -3.1) [dot=green!60!black] {};
\node at (3.7, -2.7) [dot=green!60!black] {};

\draw[arrow] (S11) -- (3.4, -0.7); 
\draw[arrow] (S12) -- (3.4, -1.8); 
\draw[arrow] (S13) -- (3.2, -3.1); 

\node at (2.6, 0) {$S$};
\node at (3.5, 0) {$V_1$};
\draw[thick] (2.1, -3.5) rectangle (4.1, 0.5);

\draw[->, thick] (4.1, -1.5) -- (4.6, -1.5);

\foreach \i/\dy in {1/0.1, 2/-0.1, 3/-0.2} {
    \node (S2\i) at (5.1, -\i) [dot=red] {};
}

\node (S1a1) at (5.9, -0.7) [dot=blue] {};
\node (S1a2) at (5.9, -1.8) [dot=blue] {};
\node (S1a3) at (5.7, -3.1) [dot=blue] {};

\node at (5.1, 0) {$S$};
\node at (6, 0) {$S_1$};
\draw[thick] (4.6, -3.5) rectangle (4.6 + 2, -3.5 + 4);

\draw[->, thick] (6.6, -1.5) -- (7.2, -1.5);

\foreach \i in {1,2,3} {
    \node (S3\i) at (7.7, -\i) [dot=red] {};
}
\node at (8.5, -0.7) [dot=blue] {};
\node at (8.5, -1.8) [dot=blue] {};
\node at (8.3, -3.1) [dot=blue] {};
\foreach \i/\dx/\dy in {1/-0.1/0.1, 2/0.1/-0.3} {
    \node at (8.5+2*\dx, -\i+\dy+0.2) [dot=green!60!black] {};
    \node (S2a\i) at (8.3+\dx, -\i+\dy+0.1) [dot=green!60!black] {};
    \draw[arrow] (S3\i) -- (8.3+\dx, -\i+\dy+0.1);
}
\foreach \i/\dx/\dy in {3/0.2/-0.1} {
    \node at (8.5+2*\dx, -\i+\dy+0.2) [dot=green!60!black] {};
    \node (S2a\i) at (8.3+\dx, -\i+\dy+0.1) [dot=green!60!black] {};
}
\draw[arrow] (S33) -- (8.3, -3.1);

\node at (7.7, 0) {$S$};
\node at (8.6, 0) {$V_2$};
\draw[thick] (7.2, -3.5) rectangle (9.2, 0.5);

\node at (9.8, -1.5) {$\cdots$};

\foreach \i/\dx/\dy in {1/0.2/-0.5, 2/0.3/0.0, 3/0.25/0.3} {
    \node at (10.9, -\i) [dot=red] {};
    \node at (10.9+\dx, -\i+\dy) [dot=blue] {};
}
\node at (10.9, 0) {$S$};
\node at (11.8, 0) {$S_T$};
\draw[thick] (10.4, -3.5) rectangle (10.4+ 2, 0.5);

\end{tikzpicture}
  \end{center}
  \caption{High-level illustration of private evolution (PE). $S$ represents the sensitive data, shown in red. $S_t$ are the synthetic datasets, shown in blue, and are created with the variations from $V_t$ (in green) that are closest to $S$.}
  \label{fig:PE_figure}
\end{wrapfigure}

In \cite{PE23}, the authors 
study a theoretically tractable variant of PE, analyzing its convergence in terms of the Wasserstein distance, with the aim of understanding its behavior and justifying its empirical success. However, this theoretically tractable version of PE 
differs in many significant ways from the algorithm that is used in experiments; further, the convergence analysis in \cite{PE23} makes the unrealistic assumption that every point in $S$ is repeated many times.
We start by showing that the proof technique in \cite{PE23} is inherently limited by this multiplicity assumption; if we remove the assumption, then PE can only be $\eps$-DP with a parameter $\eps$ that scales as $d\log(1/\eta)$, where $d$ is the ambient dimension of the data and $\eta$ is the final accuracy of the algorithm in terms of the Wasserstein distance. This is impractical in even moderate dimensions. 

In this paper, we address the limitations in prior analysis of PE by providing a convergence analysis that does not require strong assumptions about the nature of the dataset and more closely matches the PE algorithm used in practice. We formally prove worst-case\footnote{Note that convergence is proven in the worst-case over problem instances for a specific set of hyperparameters and APIs.
} convergence guarantees in terms of the $1$-Wasserstein distance. We provide an informal version of our Theorem \ref{thm:PEconvergence_euclidean} below. 

\begin{theorem}[Convergence of PE (Informal)]\label{thm:PE_informal}
    Consider a data domain $\om \subset \RR^d$ with $\ell_2$ diameter $D$. For any dataset $S \in \om^n$ and $0<\eps, \delta<1$, there exist APIs and a parameter setting such that PE (Algorithm \ref{alg:pe}) is $(\eps,\del)$-DP and it outputs a synthetic dataset $S'$ satisfying
    \[\Exp[W_1(\mu_S, \mu_{S'})] \leq \tilde O\left(dD \left(\frac{\sqrt{\log(1/\del)}}{n\eps}\right)^{1/d}\right),\]
    where $\mu_S$ is the empirical distribution of the dataset $S$ and similar for $S'$, and $W_1$ is the 1-Wasserstein distance between distributions (Definition~\ref{def:wasserstein_dist}).
\end{theorem}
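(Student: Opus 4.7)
My plan is to instantiate Private Evolution with parameters that cause a single voting round to implement --- essentially exactly --- the Gaussian histogram mechanism on a fixed $\eta$-cover of $\om$, and then to analyze this as a Private Signed Measure Mechanism (PSMM)-style procedure, consistent with the PE--PSMM connection advertised in the abstract. Concretely, I would take a single PE iteration ($T=1$) and choose $S_0$ so that $V_1$ is a deterministic $\eta$-net of $\om$ of cardinality $K \asymp (D/\eta)^d$ (which exists by a standard volumetric covering argument). Then the voting step --- which privately counts, for each $v \in V_1$, the number of points in $S$ whose nearest variation is $v$ --- becomes the Gaussian mechanism applied to the Voronoi histogram of $S$ induced by $V_1$. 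The subsequent resampling step produces a synthetic dataset $S'$ whose empirical measure is (up to truncation/renormalization) the normalized noisy histogram over $V_1$.

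The next step is a triangle-inequality decomposition. Let $\mu_S^{\mathrm{bin}}$ denote the empirical measure obtained by mapping each $x \in S$ to its nearest point in $V_1$; since every point moves at most $\eta$, we have $W_1(\mu_S, \mu_S^{\mathrm{bin}}) \le \eta$. It remains to control $W_1(\mu_S^{\mathrm{bin}}, \mu_{S'})$, which is the distortion induced by Gaussian noise of scale $\sigma \asymp \sqrt{\log(1/\del)}/\eps$ applied to the $K$ histogram counts (the vector of counts has $\ell_2$-sensitivity $1$, giving $(\eps,\del)$-DP by the Gaussian mechanism).

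The principal technical obstacle is this noise term. A naive bound via $W_1 \le \operatorname{diam}(\om) \cdot \mathrm{TV}$ yields $DK\sigma/n$, and balancing against $\eta$ with $K \asymp (D/\eta)^d$ produces the suboptimal exponent $1/(d+1)$. To match the $1/d$ exponent, one must instead exploit the local geometry of the cover: positive and negative Gaussian fluctuations in neighboring Voronoi cells can be cancelled by transport of length $O(\eta)$ rather than $O(D)$. This is the PSMM-style argument I would port to the PE setting, obtaining (up to log factors) $\Exp[W_1(\mu_S^{\mathrm{bin}}, \mu_{S'})] \le \tilde O(\eta K \sigma/n)$. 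Two auxiliary subtleties must be handled: (i) PE's truncation and renormalization of the noisy counts to a valid probability distribution, which I would show contributes only lower-order error for $n$ large enough that $\sum_v \tilde c_v = n(1 + o(1))$; and (ii) sampling a discrete synthetic dataset from the noisy histogram, which introduces a finite-sample $W_1$ gap that vanishes as the output size grows.

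Finally, balancing the binning term $\eta$ against the noise term $\tilde O(\eta K \sigma/n) = \tilde O(D^d \eta^{1-d} \sqrt{\log(1/\del)}/(n\eps))$ gives the optimal choice $\eta \asymp D\bigl(\sqrt{\log(1/\del)}/(n\eps)\bigr)^{1/d}$, at which both contributions are of order $\tilde O(dD(\sqrt{\log(1/\del)}/(n\eps))^{1/d})$, matching the statement of Theorem~\ref{thm:PE_informal}.
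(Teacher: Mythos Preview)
Your approach is genuinely different from the paper's and, while it can be patched into a proof of the informal statement, it sidesteps the actual PE analysis.

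The paper does \emph{not} reduce PE to a single PSMM step. It runs Algorithm~\ref{alg:pe} with the prescribed random $\vars$ API (Gaussian perturbations at geometrically spaced scales, equation~\eqref{eq:variations_api}) for $T = \Theta(\log(n\eps/\sqrt{\log(1/\delta)}))$ rounds and proves a per-round \emph{contraction}
\[
\Exp_t\big[W_1(\mu_S,\mu_{S_{t+1}})\big] \le (1-\gamma)\, W_1(\mu_S,\mu_{S_t}) + \alpha + 2|V_t|\,\sigma\, G_{|V_t|}(\fbl) + \tilde O\bigl(n_s^{-1/\max\{2,d\}}\bigr),
\]
with contraction factor $\gamma = \Theta(1/d)$ coming from the $\vars$ API (Lemmata~\ref{lem:improvement_lemma} and~\ref{lem:vars_decreases_wasserstein}). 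The noise term is controlled by a Gaussian-complexity/Dudley-chaining bound (Lemma~\ref{lem:BL_error_noisy_histogram}, Corollary~\ref{cor:Gauss-complexity-euclideanball}), and the last term by empirical-Wasserstein rates (Lemma~\ref{lem:empiricalOTrates}). Unrolling the recursion and tuning $\alpha,n_s,T,\sigma$ yields the bound; the leading factor $d$ is exactly $1/\gamma$.

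Your route instead takes $T=1$ and picks $S_0$ so that $V_1$ is a fixed $\eta$-net, making PE degenerate to Gaussian-noise PSMM. Two remarks. First, with $\vars$ fixed to~\eqref{eq:variations_api}, $V_1 = \vars(S_0)$ is random; the only way to force $V_1=S_0$ is to take $\alpha \ge D$, which collapses $\vars(z)$ to $\{z\}$. You should say this explicitly, since otherwise ``choose $S_0$ so that $V_1$ is deterministic'' is not achievable under the stated API. Second, Algorithm~\ref{alg:pe} uses the $\bl$ projection (Line~\ref{step:bLprojection}), not truncation/renormalization; your subtlety~(i) is for the wrong post-processing, and the $\bl$-projection analysis is actually cleaner (one uses $\bl(\tilde\mu,\mu')\le\bl(\hat\mu,\tilde\mu)$ directly). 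With these fixes your one-step argument goes through and in fact delivers the bound \emph{without} the factor $d$; your final line inserts the $d$ gratuitously. The paper itself notes (Section~\ref{sec:PEandPSMM}) that one-step PSMM beats iterative PE's bound by a factor of $d$.

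What the two approaches buy: yours is shorter and nominally tighter, but it proves nothing about Private \emph{Evolution} --- you have switched the evolution off. The paper's contraction argument is the point: it explains why iterating helps, where the $d$ comes from, and how to set $T$, $n_s$, and $\alpha$ in the non-degenerate regime actually used in practice (Section~\ref{sec:simulations}).
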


As a corollary, we find sufficient conditions on the APIs used by PE to understand its convergence in more general settings, such as Banach spaces. We also identify strong connections between PE and the Private Signed Measure Mechanism (PSMM), an algorithm for DP synthetic data generation under pure DP \cite{Vershynin23DPsyntheticdata} that has theoretical guarantees. Hence, our work bridges the theory and practice of PE in both directions: we provide theory for a practical version of PE, and show how PE naturally arises as a practical version of PSMM.

\paragraph{Contributions} Our work significantly improves the theory of PE presented in \cite{PE23}, providing a more realistic theoretically tractable version of PE and eliminating unrealistic assumptions in the convergence analysis. More concretely, we make the following contributions. 

\begin{itemize}

    \item We prove a lower bound (Lemma \ref{lem:lower_bound_eta_closeness}) that establishes that without the multiplicity assumption on the data, the convergence proof of PE provided in \cite{PE23} only works for undesirable privacy parameters under pure DP. This indicates that a new analysis for PE is needed. 
    
    \item We propose a new theoretically tractable variant of PE, Algorithm \ref{alg:pe}. Under this model, we formally prove worst-case convergence rates with respect to the $1$-Wasserstein distance as $|S| \to \infty$; see Theorem \ref{thm:PEconvergence_euclidean}. We identify sufficient conditions for convergence of PE in more general settings such as Banach spaces (Appendix \ref{sec:PE-nBanac}).

    \item We draw connections between PE and the Private Signed Measure Mechanism (PSMM), an algorithm for DP synthetic data generation~\cite{Vershynin23DPsyntheticdata}. We exploit this connection by using tools from the analysis of PSMM to prove the convergence of PE. Finally, we also show how PE arises naturally when trying to make PSMM `practical'. See Section \ref{sec:PEandPSMM} for details.  

    \item Our theory offers an explanation for phenomena observed in prior practical applications of PE---such as its sensitivity to initialization---and offers guidance for future use, including principled parameter selection based on theoretical insights. See Section \ref{sec:simulations} for details.
\end{itemize}

\section{Preliminaries}

\begin{definition}[Differential Privacy \cite{dwork2006:calibrating}]
    A randomized algorithm $\mathcal{A}$ is $(\eps,\del)$-differentially private if for any pair of datasets $S$ and $S'$ differing in at most one data point and any event $\mathcal{E}$ in the output space, 
    $\mathbb{P}[\mathcal{A}(S)\in\mathcal{E}] \leq e^{\eps}\mathbb{P}[\mathcal{A}(S')\in \mathcal{E}] + \del$. 
\end{definition}

\begin{definition}[$1$-Wasserstein distance \cite{villani2008optimal}]\label{def:wasserstein_dist}
    Let $(\Omega, \rho)$ be a metric space and $\mu, \nu$ two probability measures over it. 
    The $1$-Wasserstein distance between $\mu, \nu$ is defined as
    \[W_1(\mu,\nu) = \inf_{\gamma \in \Gamma(\mu,\nu)} \int_{\Omega\times\Omega} \rho(x,y)d\gamma(x,y),\]
    where $\Gamma(\mu,\nu)$ is the set of 
    distributions over $\om^2$ whose marginals are $\mu$ and $\nu$, respectively.
\end{definition}
Next, we formally introduce the problem of DP Synthetic Data studied in this paper. 
\begin{definition}[DP Synthetic Data]\label{def:DPsyntheticdata}
    Let us denote by $\mu_S$ the empirical distribution of a dataset $S$, given by $\frac{1}{|S|}\sum_{z \in S} \delta_{z}$. Let $(\om, \rho)$ be the data metric space. Then, given a dataset $S$ containing sensitive information, the problem of DP Synthetic Data generation consists of designing an $(\varepsilon, \delta)$-DP algorithm $\A : \om^n \to \om^m$ that returns a DP synthetic dataset $S'$ with the property that $W_1(\mu_S, \mu_{S'})$ is small either in expectation or with high probability, with respect to the randomness in $\A$. 
\end{definition}

In the past, there have been different theoretical formulations of the DP Synthetic Data problem (see Section \ref{sec:related_work}). In most of them, a fixed set of queries $\Q$ is used to measure the accuracy of the synthetic data: the answer to a query $q \in \Q$ should be similar when querying the sensitive dataset $S$ and the synthetic dataset $S'$. In other words, $\max_{q\in \Q} |q(S) - q(S')|$ should be small. By Kantorovich duality \cite{villani2008optimal}, we can alternatively define the $1$-Wasserstein distance as 
\[W_1(\mu_S, \mu_{S'}) = \sup_{f \in \fbl} |\Exp_{Z\sim \mu_S}[f(Z)] - \Exp_{Z\sim \mu_{S'}}[f(Z)]|,\]
where $\fbl = \{f : \om \to \RR : \|f\|_{\infty} \leq \diam(\om) \text{ and } f(z_1) - f(z_2) \leq \rho(z_1,z_2) \forall z_1,z_2 \in \om\}$ is the set of bounded and $1$-Lipschitz functions from $\om$ to $\RR$. Hence, in Definition \ref{def:DPsyntheticdata} $W_1(\mu_S, \mu_{S'})$ being small implies that the synthetic dataset approximately preserves the answers of $S$ to all Lipschitz queries, making the synthetic dataset $S'$ useful for any downstream task with Lipschitz loss. 

\paragraph{Notation.} Samples lie in the metric space $(\om, \rho)$. The diameter of $\om$ is denoted by $\operatorname{diam}(\om) = \sup_{z_1, z_2 \in \om}\rho(z_1,z_2)$. $\cP(\om)$ is the space of probability measures supported on $\om$. A dataset is a set of elements from $\om$. For a dataset $S$, $|S|$ represents its cardinality, $S[i]$ its $i$-th element and $\mu_S$ the empirical distribution $\frac{1}{|S|}\sum_{z \in S} \delta_{z}$, where $\delta_z$ is the dirac delta mass at $z$. The bounded Lipschitz distance between the signed measures $\mu,\nu$ supported in $\om$ is $\bl(\mu, \nu) : = \sup_{f\in\fbl} \int f du - \int f dv$. For probability measures $\mu, \nu$, $\bl(\mu, \nu) = W_1(\mu,\nu)$. $\Delta_d$ is the standard probability simplex in $\RR^d$. $v[i]$ indicates the $i$-th coordinate of a vector $v\in\RR^d$. $\texttt{nint}(x)$ denotes the nearest integer to $x$.   

\section{Background: Private Evolution (PE) and Prior Theory}\label{sec:prior_pe_theory_details}
\def\HiLi{\leavevmode\rlap{\hbox to \hsize{\color{yellow!50}\leaders\hrule height .8\baselineskip depth .5ex\hfill}}}

Lin \emph{et al.}  present the only prior convergence analysis of PE \cite{PE23}. Their analysis applies to a  variant of PE (that we call \emph{theoretical PE}), which is outlined in Algorithm~\ref{alg:previous_theoretical_pe}.
Below, we first describe Algorithm~\ref{alg:previous_theoretical_pe} and explain how it differs  from  how PE is used in practice (Algorithm~\ref{alg:practical_pe}, which we call \emph{practical PE}).
Then, we explain the limitations of prior analysis. 

\begin{algorithm}
\caption{(Theoretical) Private Evolution; steps in \textcolor{blue}{blue} differ from practical PE  \cite{PE23}}
\label{alg:previous_theoretical_pe}
\begin{algorithmic}[1]
\STATE \textbf{Input:} sensitive dataset: $S \in \Omega^n$, number of iterations: $T$, noise multiplier: $\sigma$, distance function: $\rho(\cdot, \cdot)$, multiplicity parameter $B \ge 1$, threshold $H > 0$
\STATE \textbf{Output:} DP synthetic dataset: $S_T \in \cup_{m\in \mathbb{N}}\Omega^{m}$

\STATE \textcolor{blue}{$S_0 \leftarrow \rand(n)$}
\FOR{$t = 1 \hdots T$}
    \STATE \textcolor{blue}{$V_t \leftarrow \vars(S_{t-1})$}
    \STATE $\hat\mu_t \leftarrow \nn(S,V_t, \rho) \in \Delta_{|V_t|}$ (see Alg.~\ref{alg:dp-nn-hist}) 
    \STATE $\tilde \mu_t \leftarrow \hat\mu_t + \N(0, \sigma^2I_{|V_t|})$  \label{alg:noise}
    \STATE \textcolor{blue}{$\mu'_{t}[i] \leftarrow \tilde \mu_t[i] \mathbbm{1}_{(\tilde \mu_t[i] \ge H)}$ for every $i\in [|V_t|]$} \label{step:thresholding_pe23}
    \STATE \textcolor{blue}{$S_{t} \leftarrow \cup_{i\in[|V_t|]} S_i$, where $S_i$ is a multiset containing $V[i]$ $ \texttt{nint}(n\mu'_{t}[i]/B)B$ times}\label{line:newsynset}
\ENDFOR
\RETURN $S_{T}$ 
\end{algorithmic}
\end{algorithm}

PE (both the practical variant, and the theoretical variant in  Algorithm~\ref{alg:previous_theoretical_pe}) generates a DP synthetic dataset by iteratively refining an initial random dataset using a DP nearest-neighbor histogram (Algorithm~\ref{alg:dp-nn-hist}). 
To do so, it requires access to two APIs that are independent of the sensitive dataset $S$, $\rand$ and $\vars$\footnote{In practice, generative models trained on public data and simulators can be used as APIs.}. For our theory, we use the same theoretical API models proposed in \cite{PE23}, which we detail below.
\begin{itemize}
    \item $\rand(n_s)$ returns $n_s$ data points from the same domain as $S$. In practice, it might generate random samples from a pretrained foundation model. In Algorithm~\ref{alg:previous_theoretical_pe},  $\rand(n_s)$ returns $n_s$ data points sampled according to any distribution in $\cP(\om^{n_s})$. 
    \item For a dataset $S_t$, $\vars(S_t) = \cup_{z\in S_t}\vars(z)$, where $\vars(z)$ returns a set of variations of a data point $z$. In practical PE, a variation of $z$ is another point $z'$ which is close to $z$ in some logical sense. In the case of images, $z'$ could be an image with a similar embedding to that of $z$.
    However, in Algorithm~\ref{alg:previous_theoretical_pe}, $\vars(z)$ returns a set of variations of $z$ calculated as:
    \begin{equation}
        \{z\} \cup \big(\cup_{l \in \{1,...,\lceil\log_2(\diam(\om)/\alpha)\rceil\}, k \in [2]} \{\proj(z + N^{k,l})\}\big),\label{eq:variations_api}
    \end{equation}
    where $\alpha > 0$ is a small constant and $N^{k,l}\overset{iid}{\sim} \N(0,\sigma_{l}^2I_d)$ with $\sigma_l = \frac{\alpha 2^{l-1}}{\sqrt{\pi}[(\sqrt{d} + \log(2))^2 + \log(2)]}$. Note that noise is added at different scales. This is key for the theory to work, because it allows to `explore' the data domain more effectively. Recall that for a dataset $S_t$, $\vars(S_t) = \cup_{z \in S_t} \vars(z)$. 
    That is, the API generates $O(|S_t|\log(\text{diam}(\Omega)/\alpha))$ variations by adding independent, spherical Gaussian noises with increasing variances to each sample $z$. 
\end{itemize} 

In Algorithm~\ref{alg:previous_theoretical_pe}, the initial synthetic dataset $S_0$ is created with $\rand$. 
The algorithm then iteratively creates synthetic datasets $S_1,S_2,...$ as follows. 
At each iteration, variations $V_t$ are created from the current synthetic dataset with $\vars$, and a nearest neighbor histogram indicating how many elements from $S$ have a certain variation from $V_t$ as nearest neighbor. This histogram is privatized by adding Gaussian noise. 
After thresholding small entries, a new synthetic dataset $S_t$ is constructed {\it deterministically} by including variations with multiplicity proportional to the noisy histogram (Line~\ref{line:newsynset}). This process repeats for $T$ steps, gradually aligning the synthetic data with the private dataset $S$. 

Practical PE and Algorithm~\ref{alg:previous_theoretical_pe} differ in significant ways, which are highlighted in \textcolor{blue}{blue} in Algorithm~\ref{alg:previous_theoretical_pe}. Most notably, Algorithm~\ref{alg:previous_theoretical_pe} creates the next synthetic dataset $S_t$ deterministically by adding variations to the next dataset proportionally to the entries of the DP histogram.
In practical PE, the next synthetic dataset is instead constructed by sampling with replacement from a distribution defined by the histogram. This seemingly small difference is important for their proof technique, discussed next. 
More differences between the two variants of PE are discussed in Appendix~\ref{app:comparison}.

\paragraph{Limitations of the utility analysis of Algorithm~\ref{alg:previous_theoretical_pe} in \cite{PE23}.} The theoretical analysis in \cite{PE23}  relies on first showing that noiseless PE (Algorithm~\ref{alg:previous_theoretical_pe} with $\sigma = 0$) converges, then arguing that when each data point from the sensitive dataset $S$ is repeated $B$ times, with high probability the noisy version behaves like noiseless PE for large $B$. 

In the noiseless case the elements of $S$ and $S_0$ are matched: assign $S_0[i]$ to $S[i]$. Then iteratively, for each $i$, $S_t[i]$ is chosen as the element from $\vars(S_{t-1})$ that is closest to $S[i]$, 
which is likely to be closer to $S[i]$ than $S_{t-1}[i]$. After enough steps, $\rho(S[i],S_T[i])\leq \eta$ for all $i \in [n]$; we say the datasets are $\eta$-close, written as $S =_\eta S_T$.
To handle noise, \cite{PE23} assumes every real data point is repeated $B$ times. This boosts the signal in the noisy step (Line~\ref{alg:noise}), allowing the algorithm to behave like the noiseless version. 
In other words, although the proof assume each data point is repeated $B$ times, the noise scale provides a DP guarantee for neighboring datasets that differ in a \emph{single} sample. 
This setup is unrealistic and sidesteps the core challenge of differentially private learning. More details in Appendix~\ref{app:comparison}. 

We provide a formal lower bound giving evidence that the proof technique in \cite{PE23} is fundamentally limited, in the sense that their unrealistic assumption is necessary for their proof technique to work. 
More concretely, in Lemma~\ref{lem:lower_bound_eta_closeness} we show that if we remove the multiplicity assumption, then any $\eps$-DP algorithm can only output a DP synthetic dataset $S_T$ that is $\eta$-close to $S$ when $\eps = \om(d\log(1/\eta))$, or equivalently, $\eta = \om(e^{-\eps/d})$. Proof in Appendix \ref{app:missing-proofs}.

\begin{lemma}[Lower bound for $\eta$-closeness]\label{lem:lower_bound_eta_closeness}
    Consider a metric (data) space $(\om, \rho)$ and a fixed dataset $S \in \om^n$. Let $\eta > 0$ and $\A: \om^n \to \om^n$ be such that $\Prob_\A\big[S =_\eta \A(S)\big] \ge 1-\tau$ for some $\tau\in(0,1/4)$. Suppose $\A$ is $(\eps,\delta)$-DP for some $\delta \in (0,1-3\tau)$. Then, denoting $\M(\Omega, \rho; 2\eta)$ the $2\eta$ packing number of $\om$ w.r.t~$\rho$, we must have
    \[\varepsilon \ge \log(\M(\Omega, \rho; 2\eta)).\]
\end{lemma}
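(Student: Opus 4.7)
The plan is a standard DP packing lower bound. First, I would take a maximum $2\eta$-packing $\{z_1,\ldots,z_M\}\subseteq\om$, where $M:=\M(\om,\rho;2\eta)$. By the triangle inequality, the closed $\eta$-balls around the $z_j$'s are pairwise disjoint. For each $j\in[M]$, I construct a witness dataset $S^{(j)}$ by replacing the first entry of $S$ with $z_j$, so that the $M$ datasets $S^{(1)},\ldots,S^{(M)}$ are pairwise one-sample neighbors of each other, and any output that is $\eta$-close to $S^{(j)}$ must place its first coordinate inside the ball around $z_j$.

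Second, I would define the pairwise disjoint events $\mathcal{E}_j:=\{S'\in\om^n:\rho(S'[1],z_j)\le\eta\}$ for $j\in[M]$. The $\eta$-closeness hypothesis, applied on each $S^{(j)}$, gives $\Prob[\A(S^{(j)})\in\mathcal{E}_j]\ge 1-\tau$. Fixing $k=1$, the $\varepsilon$-DP guarantee between the neighbors $S^{(1)}$ and $S^{(j)}$ yields
\[
\Prob[\A(S^{(1)})\in\mathcal{E}_j]\;\ge\;e^{-\varepsilon}\Prob[\A(S^{(j)})\in\mathcal{E}_j]\;\ge\;e^{-\varepsilon}(1-\tau)
\]
for $j=2,\ldots,M$, while $\Prob[\A(S^{(1)})\in\mathcal{E}_1]\ge 1-\tau$ is immediate. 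Summing over $j$ and using the disjointness of the $\mathcal{E}_j$'s gives
\[
1\;\ge\;\sum_{j=1}^{M}\Prob[\A(S^{(1)})\in\mathcal{E}_j]\;\ge\;(1-\tau)\bigl(1+(M-1)e^{-\varepsilon}\bigr),
\]
which rearranges to $e^{\varepsilon}\ge(M-1)(1-\tau)/\tau$, the claimed inequality.

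The main obstacle is conceptual rather than technical: as written, the $\eta$-closeness hypothesis is placed only on the fixed $S$, whereas the argument above invokes it on each of the $M$ perturbed neighbors $S^{(j)}$. This is the standard convention for DP utility lower bounds (the algorithm is accurate on arbitrary inputs, not only the specific $S$ under study), and without it only the trivial bound can be extracted from DP alone. A minor boundary detail is that the closed $\eta$-balls are pairwise disjoint only when the packing separation is strict ($>2\eta$); this can be handled either by working with a packing of separation $2\eta+\xi$ for small $\xi>0$ and taking $\xi\to 0$, or by replacing the closed balls with open ones in the definition of $\mathcal{E}_j$, neither of which affects the final inequality.
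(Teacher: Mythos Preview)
Your argument is correct and lands on the same inequality as the paper. Both proofs are packing arguments: build neighboring datasets from a $2\eta$-packing, observe that $\eta$-closeness forces the output into disjoint regions indexed by the packing, and leverage $\varepsilon$-DP between neighbors. The execution differs slightly. The paper fixes $S=\{z_1\}^n$, notes that $\Prob[\A(S)\in\cup_{k\ge2}B(k)]\le\tau$, applies pigeonhole to find a single $k^*$ with $\Prob[\A(S)\in B(k^*)]\le\tau/(M-1)$, and then invokes DP once between $S$ and the neighbor $S'=\{z_1\}^{n-1}\cup\{z_{k^*}\}$. You instead sum the DP inequality over all $M-1$ neighbors and use disjointness of the events $\mathcal{E}_j$. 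Your route is the textbook ``sum-to-one'' packing bound; the paper's route trades the summation for a pigeonhole step and a single DP comparison. Both require the $\eta$-closeness hypothesis on at least one constructed neighbor (the paper needs it only for $S$ and $S'$, you need it for all $S^{(j)}$), so your caveat about the hypothesis being stated for a fixed $S$ applies equally to the paper's own proof. Your remark on the closed-ball boundary issue is also apt and applies to both arguments; the paper simply does not address it.
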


\begin{remark}
    The packing number $\M(\Omega, \rho; 2\eta)$ is of the order $(1/\eta)^d$ when $\om \subset \RR^d$ and $\diam(\om) \leq 2$ (see Lemma $5.5$ and Example $5.8$ in \cite{wainwright2019HDS}). 
\end{remark}  

\section{Convergence of Private Evolution}\label{sec:PEconvergence} 

We have established several limitations of prior utility analysis of PE: the algorithmic variant of PE analyzed in \cite{PE23} differs from what is done in practice, and  the analysis itself is unlikely to extend beyond very weak privacy regimes, as evidenced by Lemma \ref{lem:lower_bound_eta_closeness}.  
In this section, we introduce a new theoretically tractable version of PE along with its utility convergence 
guarantees. The algorithm we analyze more closely reflects how PE works in practice and is amenable to worst-case utility analysis. 

\subsection{Convergence of PE in  Euclidean space}
We start by presenting the details of the algorithm that we analyze. 
Consider a metric (data) space $(\Omega, \rho)$ and a private dataset $S \in \Omega^n$.  
Even though we will use the notation $(\Omega, \rho)$ throughout this section, we assume in this subsection that $\om \subset \RR^d$ is a convex and compact with $\diam(\om) \leq D$ and $\rho(\cdot,\cdot)$ is the $\ell_2$ distance. 
We also assume Algorithm~\ref{alg:pe} uses the same APIs as Algorithm~\ref{alg:previous_theoretical_pe}, as described in Section~\ref{sec:prior_pe_theory_details}. 
\begin{algorithm}
\caption{Private Evolution with $\bl$ projection; steps in \textcolor{blue}{blue} differ from practical PE  \cite{PE23}}
\label{alg:pe}
\begin{algorithmic}[1]
\STATE \textbf{Input:} sensitive dataset: $S \in \Omega^n$, number of iterations: $T$, number of generated samples: $n_s$, noise multiplier: $\sigma$, distance function: $\rho(\cdot, \cdot)$
\STATE \textbf{Output:} DP synthetic dataset: $S_T \in \Omega^{n_s}$
\STATE \textcolor{black}{$S_0 \leftarrow \rand(n_s)$}\label{step:randomAPI} 
\FOR{$t = 1 \hdots T$}
    \STATE \textcolor{blue}{$V_t \leftarrow \vars(S_{t-1})$ }
    \label{step:create_vars}
    \STATE $\hat\mu_t \leftarrow \nn(S,V_t, \rho) \in \Delta_{|V_t|}$ (see Alg.~\ref{alg:dp-nn-hist}) \label{step:1nn_histogram} 
    \STATE $\tilde \mu_t \leftarrow \hat\mu_t + \N(0, \sigma^2I_{|V_t|})$ \label{step:DP1nn_histogram}
    \STATE \textcolor{blue}{$\mu'_{t} \in \arg\min_{\mu \in \Delta_{|V_t|}}\bl(\tilde\mu_t,\mu)$}\label{step:bLprojection}
    \STATE \textcolor{black}{$S_{t} \leftarrow n_s \text{ samples with replacement from } \mu'_{t}$}\label{step:sampling_w_replace} 
\ENDFOR
\RETURN $S_{T}$
\end{algorithmic}
\end{algorithm}

Algorithm~\ref{alg:pe} iteratively refines an initial random dataset $S_0$ using DP nearest-neighbor information. At each iteration, it generates variations $V_t$ from the current synthetic dataset $S_{t-1}$ and computes a histogram $\hat\mu_t$ indicating how often each candidate is the nearest neighbor of points in the sensitive dataset $S$. Gaussian noise is added to this histogram to ensure differential privacy, resulting in a noisy vector $\tilde\mu_t$. Rather than using thresholding like Algorithm~\ref{alg:previous_theoretical_pe} in Line~\ref{step:thresholding_pe23}, this algorithm projects $\tilde\mu_t$ onto the probability simplex using the $\bl$ distance in Line~\ref{step:bLprojection}, yielding a valid distribution $\mu_t'$ over $V_t$ that remains close to the noisy estimate under the BL metric. The next synthetic dataset $S_t$ is created {\it randomly} by sampling $n_s$ points from $V_t$ according to $\mu_t'$ in Line~\ref{step:sampling_w_replace}, as opposed to Algorithm~\ref{alg:previous_theoretical_pe} which creates $S_t$ deterministically in Line~\ref{line:newsynset}. This process is repeated for $T$ iterations, after which the final synthetic dataset $S_T$ is returned. Note that unlike Algorithm~\ref{alg:previous_theoretical_pe}, our version receives the number of synthetic data points as input. We provide a comparison between Algorithms~\ref{alg:previous_theoretical_pe} and~\ref{alg:pe} in Appendix \ref{app:comparison}.  

\paragraph{Comparison between Algorithm~\ref{alg:pe} and practical PE.} Algorithm~\ref{alg:pe} is nearly identical to the practical PE implementation in \cite{PE23}, differing only in how the noisy histogram $\tilde\mu_t$ is post-processed (highlighted in \textcolor{blue}{blue}). We project $\tilde\mu_t$—a signed measure from Gaussian noise added to a histogram—onto $\cP(V_t)$ by minimizing the bounded Lipschitz distance, which can be done via linear programming (e.g., Algorithm 2 from \cite{Vershynin23DPsyntheticdata}). This is crucial for obtaining worst-case convergence guarantees. In contrast, \cite{PE23} applies a simpler heuristic: thresholding $\tilde\mu_t$ at some $H > 0$ and re-normalizing, which is more efficient but can discard all votes if values fall below the threshold, making it unsuitable for worst-case analysis. In Section~\ref{sec:beyond_worst_case}, we provide data-dependent accuracy bounds for a related noisy histogram obtained from adding Laplace noise and thresholding. These bounds are vacuous in the worst case but can be much tighter in favorable regimes, e.g., if the private data is highly clustered. 

\paragraph{Main Result: Convergence of PE.} 
We are now ready to present our main result: an upper bound on expected utility of Algorithm~\ref{alg:pe}.
We provide a proof sketch, and the full proof is in Appendix~\ref{app:PEconvergence_proof}. 

\begin{theorem}[Convergence of PE]\label{thm:PEconvergence_euclidean} 
    Let $(\Omega, \|\cdot\|_2)$ with $\om \subset \RR^d$ closed and convex and $\diam(\om) \leq D$ be the sample space. Fix $\sigma \in (0,1)$ and let $\gamma = \frac{1}{8\pi[(\sqrt{d} + \log(2))^2 + \log(2)]}$. Suppose $\alpha = D\sigma^{1/\max\{d,2\}}$ in \eqref{eq:variations_api}, and let $S_T$ be the output of Algorithm~\ref{alg:pe} run on input $S \in \Omega^n$,$T \ge \bigg\lceil \log\bigg(\frac{\gamma}{D\sigma^{1/\max\{d,2\}}}\bigg)\big/\gamma\bigg\rceil, n_s  = \sigma^{-1} (2\lceil \log_2(D/\alpha)\rceil + 1)^{1/\max\{d,2\}-1}$, $\sigma, \|\cdot\|_2$. Then $$\Exp\left[W_1(\mu_{S}, \mu_{S_T})\right] \leq \tilde{O}(d D \sigma^{1/d}).$$
    Furthermore, if $\eps, \del \in (0,1)$, and we instead run Algorithm~\ref{alg:pe} with the same parameter setting as above except $T = \bigg\lceil \log\bigg(\frac{\gamma (n\eps)^{1/\max\{d,2\}}}{\big(4\sqrt{\log(1/\delta)}\big)^{1/\max\{d,2\}}}\bigg)\big/\gamma \bigg\rceil$ and $\sigma = \frac{4\sqrt{T\log(1.25/\delta)}}{n\eps}$, then the algorithm is $(\eps,\del)$-DP and its output $S_T$ satisfies 
    \[\Exp\left[W_1(\mu_{S}, \mu_{S_T})\right] \leq \tilde{O}\left(dD\left(\frac{\sqrt{\log(n\eps/\sqrt{\log(1/\delta)})\log(1/\delta)}}{n\eps}\right)^{1/d}\right).\]
\end{theorem}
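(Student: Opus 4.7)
The plan is to track the bounded-Lipschitz distance $\bl(\mu_S, \mu_{S_t})$, which coincides with $W_1$ on probability measures, and to derive a per-round recursion of the schematic form
$$\Exp[\bl(\mu_S,\mu_{S_t})] \;\le\; (1-\gamma)\,\Exp[\bl(\mu_S,\mu_{S_{t-1}})] + E_{\text{noise}} + E_{\text{samp}},$$
where $\gamma$ is a dimension-only contraction constant produced by the multi-scale variations API. Unrolling $T$ times drives the initial $\bl(\mu_S,\mu_{S_0})\le D$ down geometrically to a floor of order $(E_{\text{noise}}+E_{\text{samp}})/\gamma$, and the stated choices of $T$, $n_s$ and $\alpha$ are exactly those that set this floor to $\tilde O(dD\sigma^{1/d})$. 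The recursion itself is obtained by inserting $\hat\mu_t$ and $\mu_t'$ and applying the triangle inequality:
$$\bl(\mu_S,\mu_{S_t}) \;\le\; \bl(\mu_S,\hat\mu_t) + \bl(\hat\mu_t,\mu_t') + \bl(\mu_t',\mu_{S_t}).$$

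The first term is a pure nearest-neighbor error: because $\hat\mu_t$ places mass $1/n$ on the nearest point of $V_t$ to each $S[i]$, it equals $\tfrac{1}{n}\sum_i \min_{v\in V_t}\rho(S[i],v)$. The noise schedule in \eqref{eq:variations_api} is constructed precisely so that at whichever dyadic level $l$ matches the current match distance $r$, a pair of Gaussian draws around $S_{t-1}[i]$ falls inside a ball of radius $r/2$ around $S[i]$ with probability bounded away from zero uniformly in $d$; averaging this small-ball event across the $n$ matched pairs delivers the multiplicative contraction $(1-\gamma)$ down to the floor scale $\alpha$. The second term I would handle exactly as in PSMM: since $\hat\mu_t$ is itself feasible for the BL projection in Line~\ref{step:bLprojection}, the optimizer $\mu_t'$ satisfies $\bl(\tilde\mu_t,\mu_t')\le\bl(\tilde\mu_t,\hat\mu_t)$, and hence $\bl(\hat\mu_t,\mu_t')\le 2\bl(\tilde\mu_t,\hat\mu_t)$. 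The right-hand side is the BL norm of the Gaussian signed measure $\sum_i\xi_i\delta_{V_t[i]}$ with $\xi_i\overset{iid}{\sim}\N(0,\sigma^2)$, a Gaussian process indexed by $\fbl$; following the chaining argument of \cite{Vershynin23DPsyntheticdata} against the $\varepsilon$-covering of $\om$ (with $\log N(\om,\rho;\varepsilon)\lesssim d\log(D/\varepsilon)$) gives a Dudley-integral bound of order $\tilde O(dD\sigma^{1/d})$, which is the source of the $d$ in the final rate. The third term is the empirical-measure error of drawing $n_s$ iid samples from $\mu_t'$, which is $O(n_s^{-1/\max\{d,2\}})$ by standard Wasserstein convergence (Fournier--Guillin); the prescribed $n_s$ balances it against $E_{\text{noise}}$.

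With the one-step bound in hand, choosing $T=\lceil\log(\gamma/\alpha)/\gamma\rceil$ and $\alpha = D\sigma^{1/\max\{d,2\}}$ brings the geometric factor $(1-\gamma)^T D$ below the floor and yields the first (non-private) utility bound. For the $(\eps,\del)$-DP claim, the only sensitive primitive per round is the 1-NN histogram in Line~\ref{step:1nn_histogram}, whose $\ell_2$-sensitivity with respect to swapping one row of $S$ is $\sqrt{2}/n$ (at most two bins shift by $1/n$); adding $\N(0,\sigma^2 I_{|V_t|})$ thus makes each round zCDP with parameter $O(1/(n\sigma)^2)$, and $T$-fold composition together with the standard zCDP-to-approximate-DP conversion pins down the prescribed $\sigma = 4\sqrt{T\log(1.25/\delta)}/(n\eps)$. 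Substituting this $\sigma$ into the utility bound and using the private-regime value of $T$ gives the final rate $\tilde O(dD(\sqrt{\log(1/\del)}/(n\eps))^{1/d})$.

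The main obstacle is the per-round contraction: showing that the $(\sqrt d+\log 2)^2$ normalization in $\sigma_l$ actually beats the curse of dimensionality and yields a contraction constant $\gamma$ that is dimension-free (so that $1/\gamma$ does not eat into the final rate), regardless of how the current matching is arranged. A secondary difficulty is that $V_t$ depends on the entire past history, so the Gaussian noise vector in Line~\ref{step:DP1nn_histogram} is not independent of the partition on which the chaining bound is applied; importing the PSMM chaining bound from \cite{Vershynin23DPsyntheticdata} into the iterative setting requires conditioning on $V_t$ and verifying that the covering bookkeeping goes through at the per-round level before composing.
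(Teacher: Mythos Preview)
Your overall plan matches the paper's: the same triangle-inequality decomposition into the NN-histogram contraction term, the BL-projection step bounded by twice the Gaussian-noise BL-norm, and the empirical-measure sampling error, followed by unrolling and balancing. Two specific points need correction, however.

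First, the contraction mechanism. A small-ball argument does not give a dimension-free probability: with the prescribed $\sigma_l$ of order $r/d$, the Gaussian $\N(0,\sigma_l^2 I_d)$ concentrates on a thin shell and the probability that $z_1+N^{k,l}$ lands in a fixed ball of radius $r/2$ around $z_2$ decays with $d$. The paper instead bounds $\Exp[\min_{z\in\vars(z_1)}\|z-z_2\|_2^2]$ by expanding the square and combining the lower bound $\Exp[\max_{k\in[2]}\langle N^{k,l},z_1-z_2\rangle] \geq \sigma_l\|z_1-z_2\|/\sqrt{\pi}$ with a chi-squared upper bound on $\Exp[\max_k\|N^{k,l}\|^2]$. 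This second-moment argument yields $\gamma = 1/\bigl(8\pi[(\sqrt{d}+\log 2)^2+\log 2]\bigr) = \Theta(1/d)$, \emph{not} a dimension-free constant.

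Second, and consequently, the source of the $d$ in the final rate is precisely this $1/\gamma = \Theta(d)$ multiplying the recursion floor, not the Dudley integral. The chaining for the noise term is over $\fbl$, whose $\|\cdot\|_\infty$-metric entropy satisfies $\log N(\fbl,\|\cdot\|_\infty;\beta) \leq C\,(D/\beta)^d\log(D/\beta)$ (note that $N(\om,\rho;\beta)$, not $\log N(\om,\rho;\beta)$, appears as the factor); after balancing with $n_s$ as in the statement, the combined noise plus sampling error is $\tilde O(D\sigma^{1/\max\{d,2\}})$ with no extra dimensional factor. The final bound is then $(1/\gamma)\cdot\tilde O(D\sigma^{1/d}) = \tilde O(dD\sigma^{1/d})$. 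Your worry that ``$1/\gamma$ does not eat into the final rate'' is therefore misplaced: it does, and it is exactly the stated $d$.
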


Some comments are in order. First, note that $n_s\geq 1$ for $\sigma \in (0,1)$ and to ensure $\sigma \in (0,1)$ under DP we only need $\sqrt{T\log(1/\delta)}\lesssim n\eps$ . Second, our analysis suggests that we should set number of evolution steps $T=\tilde O(\log(n\eps))$  and the number of synthetic samples $n_s=\tilde O(n\eps/\sqrt{T})$, which we use in Section~\ref{sec:simulations}.\footnote{In the nonprivate setting $(\epsilon=\infty)$, one can skip the noise addition and sampling steps and set $n_s = n$, since $n_s$ does not need to be chosen to balance the error terms that arise from DP. This results in a nonprivate algorithm similar to that of \cite{PE23} with an expected $W_1$ error of $2\alpha$ after $T \gtrsim d\log(D/\alpha)$ evolution steps without DP constraints, similar to Theorem $1$ of \cite{PE23}.} Finally, our rates are vacuous for $d \gtrsim \log(n)$. This is expected when using $W_1$ as a utility metric, as evidenced by the lower bound of $n^{-1/d}$ under pure DP from \cite[Theorem 8.5]{boedihardjo2024private}.

\begin{proof}[Proof Sketch of~\ref{thm:PEconvergence_euclidean}]
Given dataset $S_t$ and variations $V_t$, in iteration $t$, PE constructs: the NN histogram $\hat \mu_{t+1}$, the noisy signed measure $\tilde \mu_t$ and the projected probability measure $\mu'_t$. 
Finally, $S_{t+1}$ is sampled from $\mu_t'$. It is possible to prove the following inequality that involves these measures:
\[W_1(\mu_S,\mu_{S_{t+1}} ) \leq W_1(\mu_S,\hat \mu_{t+1}) + 2 \bl(\hat \mu_{t+1} ,\tilde \mu_{t+1} ) + W_1(\mu'_{t+1} , \mu_{S_{t+1}} ).\]
At a high level, the convergence of PE follows from two facts: 
\begin{itemize}
    \item First, by creating variations of $S_t$ and selecting the closest ones to $S$, the Wasserstein distance to $S$ is reduced. That is, $W_1(\mu_S,\hat \mu_{t+1})$ is noticeably smaller than $W_1(\mu_S, \mu_{S_t})$. 
    \item Second, the progress made by creating variations is not affected by the noise from DP and sampling since $\bl(\hat \mu_{t+1} ,\tilde \mu_{t+1} ) $ and $ W_1(\mu'_{t+1} , \mu_{S_{t+1}} )$ are both small. 
\end{itemize}
These statements are made rigorous in our proof. Regarding the first statement, Lemmata~\ref{lem:improvement_lemma} and~\ref{lem:vars_decreases_wasserstein} give, 
$\Exp_t[W_1(\mu_S,\hat \mu_{t+1} )] \leq (1-\gamma)W_1(\mu_S, \mu_{S_{t}}) + \alpha$
for some $\gamma = \Theta(1/d)$, where $\Exp_t$ denotes the expectation conditioned on the randomness of the algorithm up to iteration $t$. 

To bound $\bl(\hat \mu_{t+1} ,\tilde \mu_{t+1})$, we use the fact that $\tilde \mu_{t+1} = \hat \mu_{t+1} + Z$ with $Z \sim \N(0, \sigma^2I_{|V_t|})$, which by definition implies 
$\bl(\hat \mu_{t+1} ,\tilde \mu_{t+1}) = \sup_{f\in\fbl} \int_\Omega f (d\hat \mu_{t+1} - d(\hat \mu_{t+1} + Z)) = \sup_{f\in\fbl} \sum_{i \in [|V_{t+1}|]} f(V_{t+1}[i])Z_i$,
which is the supremum of a Gaussian process. The expectation of this term can be bounded using empirical process theory; Lemma~\ref{lem:BL_error_noisy_histogram} states that 
$\Exp_t[\bl(\hat \mu_{t+1} ,\tilde \mu_{t+1})] \leq |V_t| \sigma G_{|V_t|}(\fbl)$,
where $G_{|V_t|}(\fbl)$ is a Gaussian complexity term \cite{bartlett2002rademacher}. This term can be bounded using Corollary~\ref{cor:Gauss-complexity-euclideanball}, which is a consequence of Lemma~\ref{lem:gaussian_comp_generalbound}. We use Dudley's chaining to prove these results. We remark that our technique prove this bound resembles \cite{Vershynin23DPsyntheticdata}, with the difference that they deal with a Laplacian complexity arising from the analysis of a pure-DP algorithm that uses the Laplace mechanism \cite{dwork2014DPfoundations}. 

Finally, to control $W_1(\mu'_{t+1} , \mu_{S_{t+1}})$ we use results from the literature of the convergence of empirical measures in the Wasserstein distance \cite{lei2020empiricalwasserstein, fournier2015empiricalwasserstein} that quantify the Wasserstein distance between a measure and the empirical measure of iid samples from it; see Lemma~\ref{lem:empiricalOTrates}. We obtain
$\Exp_t[W_1(\mu'_{t+1} , \mu_{S_{t+1}})] \leq \tilde O \big(D n_s^{-1/\max\{2,d\}}\big)$. Putting all the inequalities together, we conclude that
\[\Exp_t[W_1(\mu_S, \mu_{S_{t+1}} )] \leq (1-\gamma)W_1(\mu_S, \mu_{S_{t}}) + \alpha + 2|V_t|\sigma  G_{|V_t|}(\fbl) + \tilde O \big(D n_s^{-1/\max\{2,d\}}\big).\]
Recursing this inequality, we can get a bound on $\Exp[W_1(\mu_S,\mu_{S_T})]$ while carefully balancing the number of variations with the number of synthetic samples so that the error terms are as small as possible. This finishes the proof of $\Exp\left[W_1(\mu_{S}, \mu_{S_T})\right] \leq \tilde{O}(d D \sigma^{1/d})$. 

Regarding the second result, for privacy we use the fact that $\sigma = \frac{\sqrt{4T\log(1.25/\delta)}}{n\eps}$ is enough to ensure $(\eps,\del)$-DP of an adaptive composition of $T$ Gaussian mechanisms where each of them adds noise to a NN histogram with $\ell_2$ sensitivity $\sqrt{2}/n$ (\cite{dong2022gaussianDP}, Corollary 3.3). For convergence, we plug in the expression for $T$ in $\sigma = \frac{\sqrt{4T\log(1.25/\delta)}}{n\eps}$ and then $\sigma$ in $\Exp\left[W_1(\mu_{S}, \mu_{S_T})\right] \leq \tilde{O}(d \sigma^{1/d})$.
\end{proof}

\begin{remark}
    Most of our proof techniques work in metric spaces. Under some conditions on the APIs, the analysis of Algorithm \ref{alg:pe} can be extended to more general Banach spaces (Appendix \ref{sec:PE-nBanac}).  
\end{remark}

\subsection{Beyond worst-case analysis}\label{sec:beyond_worst_case}

Recall that our Algorithm~\ref{alg:pe} differs from practical PE in the BL projection onto the space of probability distributions (Line~\ref{step:bLprojection}). 
To connect our analysis more to the original PE algorithm, we can swap this BL projection for a different noisy histogram step, which  works by thresholding and re-normalizing---similarly to practical PE. We derive a data-dependent bound on the $1$-Wasserstein distance between this modified noisy histogram and the non-private one. The bound indicates that in favorable cases, such as when the data is highly clustered, this histogram is closer to the non-private one than the one we would obtain with Gaussian noise and $\bl$ projection, but is looser in the worst case.  

In our updated histogram step, we add Laplace noise with scale $2/[n\eps]$ to the entries of the (non-DP) NN histogram that are positive, and then threshold the noisy entries at $H = 2\log(1/\delta)/(n\eps) + 1/n$. More concretely, let $\hat\mu = \nn(S,V, \rho)$ be the non-DP histogram between a dataset $S$ and a set of variations $V \in \Omega^{m}$ (see Algorithm \ref{alg:dp-nn-hist}). The noisy histogram $\tilde\mu$ is then given by $\tilde\mu [i] = (\hat\mu[i] + L_i)\mathbbm{1}_{(\hat\mu[i]>0, \hat\mu[i] + L_i \ge H)}$ where $\{L_i\}_{i\in[m]} \overset{iid}{\sim} \text{Lap}(2/n\eps)$.
While this algorithm was proven to be $(\eps, \del)$-DP in Theorem $3.5$ of \cite{Vadhan2017complexityofDP}, to the best of our knowledge, its utility guarantee in terms of $W_1$ distance is new. Its proof can be found in Appendix~\ref{app:missing-proofs}. 

\begin{proposition}\label{lem:data_dependent_error_noisy_histogram}
    Let $(\om,\rho)$ be a metric (data) space and suppose $\diam(\om) \leq D$. Given $S\in \om^n$ and $V \in \Omega^{m}$, let $\tilde\mu$ be the noisy histogram as described above and $\mu' = \tilde\mu/\|\tilde\mu\|_1$ if $\|\tilde\mu\|_1>0$ and $\mu' = \tilde\mu$ otherwise. Then, the procedure generating $\mu'$ is $(\eps,\del)$-DP w.r.t~$S$ and for any $\beta \in (0,1)$
    \[\Exp[W_1(\hat\mu ,\mu')] \leq \frac{2|\tilde I|}{n\eps} L_{|\tilde I|}(\fbl) + 2DH\big(|\hat I| + |\tilde I|\beta\big) + \frac{2D\sqrt{2|\tilde I|}}{n\eps},\]
    where $L_{|\tilde I|}(\fbl)$ is the Laplacian complexity of $\fbl$, $\tilde I = \{i \in [m] : \hat\mu[i] > 0\}$ is the set of positive entries in the NN histogram $\hat\mu$, $\hat I = \left\{i \in \tilde I: \hat\mu[i] = O\left(H+\frac{\log(|\tilde I|/\beta)}{n\eps}\right)\right\}$ is the set of entries in the NN histogram below $O\left(H+\frac{\log(|\tilde I|/\beta)}{n\eps}\right)$ and $H = 2\log(1/\delta)/(n\eps) + 1/n$ is the threshold. 
\end{proposition}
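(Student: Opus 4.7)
The plan is to decompose $W_1(\hat\mu, \mu') = \bl(\hat\mu, \mu')$ via the triangle inequality into $\bl(\hat\mu, \tilde\mu) + \bl(\tilde\mu, \mu')$ and bound each piece separately. Note that every surviving entry of $\tilde\mu$ satisfies $\tilde\mu[i] \ge H > 0$ while the rest are zero, so $\tilde\mu$ is a non-negative measure with $\|\tilde\mu\|_1 = \sum_i \tilde\mu[i]$. When this total mass is positive, $\mu' = \tilde\mu/\|\tilde\mu\|_1$, and expanding $\int f\, d\mu' = \int f\, d\tilde\mu / \|\tilde\mu\|_1$ together with $\|f\|_\infty \le D$ gives $\bl(\tilde\mu, \mu') \le D\, \big|\, \|\tilde\mu\|_1 - 1\, \big|$. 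Letting $I = \{i \in \tilde I : \hat\mu[i] + L_i \ge H\}$ denote the random surviving set, one has $\|\tilde\mu\|_1 - 1 = \sum_{i \in I} L_i - \sum_{i \in \tilde I \setminus I} \hat\mu[i]$; the zero-mean sum $\sum_{i \in \tilde I} L_i$, which upper bounds the Laplace part up to a correction handled by the good event below, contributes the term $2D\sqrt{2|\tilde I|}/(n\eps)$ via Cauchy--Schwarz applied to its variance $8|\tilde I|/(n\eps)^2$.

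For the privatization distance $\bl(\hat\mu, \tilde\mu)$, the key identity is $\int f\, d(\hat\mu - \tilde\mu) = \sum_{i \in \tilde I \setminus I} f(V[i])\, \hat\mu[i] - \sum_{i \in I} f(V[i]) L_i$, which separates the signal loss (dropped small entries) from the noise. I would rewrite the noise sum as $\sum_{i \in \tilde I} f(V[i]) L_i - \sum_{i \in \tilde I \setminus I} f(V[i]) L_i$; taking the supremum over $f \in \fbl$ and expectation, the first piece equals $\frac{2|\tilde I|}{n\eps} L_{|\tilde I|}(\fbl)$ by the definition of the Laplacian complexity together with the symmetry of the Laplace distribution at scale $2/(n\eps)$. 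This is exactly the leading term in the stated bound.

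The main obstacle, and the glue between the remaining pieces, is the good-event argument controlling the random set $\tilde I \setminus I$ of dropped entries. Define $E = \{\tilde I \setminus I \subseteq \hat I\}$, i.e.\ no ``large'' entry (one with $i \in \tilde I \setminus \hat I$) is thresholded. For such $i$, by definition $\hat\mu[i]$ exceeds $H$ by at least a constant multiple of $\log(|\tilde I|/\beta)/(n\eps)$; the Laplace tail bound $\Prob[L_i < -t] = \tfrac{1}{2} e^{-t n\eps/2}$ then gives $\Prob[\hat\mu[i] + L_i < H] \le \beta/|\tilde I|$, and a union bound over $\tilde I \setminus \hat I$ yields $\Prob[E^c] \le \beta$. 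On $E$, $\sum_{i \in \tilde I \setminus I} \hat\mu[i] \le \sum_{i \in \hat I} \hat\mu[i] \le |\hat I| \cdot O(H)$ by the definition of $\hat I$, producing the $2DH|\hat I|$ piece of the middle term; on $E^c$, the worst-case contribution to $\bl(\hat\mu, \tilde\mu)$ is absorbed into the $2DH|\tilde I|\beta$ piece after accounting for how large the total dropped mass (including Laplace tails) can be and multiplying by $\beta$.

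Combining the three bounds, taking expectations, and invoking the triangle inequality yields the stated inequality; the $(\eps, \del)$-DP guarantee follows directly from Theorem 3.5 of \cite{Vadhan2017complexityofDP}, which shows that adding Laplace noise only to positive histogram entries followed by the threshold $H = 2\log(1/\delta)/(n\eps) + 1/n$ is $(\eps, \del)$-differentially private.
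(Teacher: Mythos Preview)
Your proposal is correct and follows essentially the same route as the paper. The only cosmetic difference is that the paper inserts one extra intermediate measure $\nu$ (the noisy histogram \emph{before} thresholding) and uses a three-term triangle inequality $\bl(\hat\mu,\nu)+\bl(\nu,\tilde\mu)+\bl(\tilde\mu,\mu')$, whereas you use the two-term split $\bl(\hat\mu,\tilde\mu)+\bl(\tilde\mu,\mu')$ and then algebraically regroup the first piece into $\sum_{\tilde I\setminus I} f(\hat\mu[i]+L_i)-\sum_{\tilde I} fL_i$; these are identical decompositions. Likewise, the paper's high-probability event is $\{|L_i|\le L(\beta)\ \forall i\}$ rather than your $\{\tilde I\setminus I\subseteq \hat I\}$, but the former implies the latter and both are used for the same purpose: bounding the expected number of thresholded entries by $|\hat I|+|\tilde I|\beta$. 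The Laplacian-complexity term, the Cauchy--Schwarz bound on $|\sum_{\tilde I}L_i|$, and the appeal to Vadhan's Theorem~3.5 for privacy all match exactly.
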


\paragraph{Examples.}
To demonstrate when this proposition yields a tighter result, consider the case
    when $|\tilde I|$ and $|\hat I|$ are small. As an extreme example, suppose $S$ consists of one data point repeated $n$ times. Then, $|\tilde I|= 1$ and $|\hat I| = 0$, leading to an error of 
    $O(\frac{D\log(1/\delta)}{n\eps} + \frac{D}{n})$. A similar intuition is valid for highly clustered datasets. On the other extreme, if every positive entry of $\hat\mu$ is $1/n$, then $|\tilde I|= |\hat I| = n$, leading to a trivial error bound. See Appendix \ref{sec:simulations_threshold} for simulations.

\section{Connections between PE and Private Signed Measure Mechanism}\label{sec:PEandPSMM}  

In this section, we show connections between PE and the Private Signed Measure Mechanism (PSMM), a DP synthetic data method with formal $W_1$ guarantees from \cite{Vershynin23DPsyntheticdata}.
Despite its theoretical guarantees, PSMM is not currently used for high-dimensional, real datasets. 

\paragraph{PSMM as a one-step version of PE.} PSMM takes a dataset $S$ and a partition $\{\om_i\}_{i \in [m]}$ of $\om$. It privately counts how many points fall into each bin $\om_i$, and then builds a DP distribution over $\om$ proportional to these counts (details in Appendix~\ref{app:psmm}). Synthetic data is  sampled from this distribution.

We can reinterpret PSMM in the language of our framework. Instead of a partition, we can provide a discrete support $\om' \subset \om^m$ such that the Voronoi partition induced by $\om'$ coincides with $\{\om_i\}_{i \in [m]}$. Counting how many elements fall into each $\om_i$ is then equivalent to counting how many points in $S$ have $\om'[i]$ as their nearest neighbor. The resulting private distribution can thus be viewed as a DP nearest-neighbor histogram over $\om'$. We believe that the reason behind this 
(modified PSMM) algorithm working is that the NN histogram solves a 1-Wasserstein minimization problem (Proposition \ref{prop:1NN_minimizes_W1}; proof in Appendix~\ref{app:missing-proofs}), a property that also plays a key role in our analysis of PE. 

Viewed this way, PSMM resembles a single PE iteration where $\om'$ plays the role of the variations. If $\om'$ is roughly a $(n\varepsilon)^{-1/d}$-net of $\om$, PSMM achieves $\mathbb{E}[W_1(S, S')] \leq O((\sqrt{\log(1/\delta)}/n\varepsilon)^{1/d})$, improving over PE’s bound in Theorem~\ref{thm:PEconvergence_euclidean} by a factor of $d$. However, under the PE setup, where $\om'$ must be sampled from $\rand$, this net condition is hard to ensure, explaining PE’s slower worst-case convergence.

\begin{proposition}\label{prop:1NN_minimizes_W1}
Consider $S \in \om^n, V \in \om^m$. Let $\mu^* = \nn(S,V, \rho)$ (see Alg. \ref{alg:dp-nn-hist}). Then, $\mu^* \in \arg\min_{\mu\in \Delta_m} W_1 \left(\mu_S, \sum_{i\in[m]} \mu[i] \delta_{V[i]}\right)$.  
\end{proposition}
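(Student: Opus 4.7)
}

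The plan is to derive a pointwise lower bound on $W_1(\mu_S, \nu_\mu)$ that depends only on $S$ and $V$, and then exhibit a coupling realized by $\mu^*$ that saturates this bound. Here $\nu_\mu := \sum_{i\in[m]} \mu[i]\delta_{V[i]}$, and by definition of $\nn$ (Algorithm \ref{alg:dp-nn-hist}) we have $\mu^*[i] = \frac{1}{n}\#\{j \in [n] : i = \pi(j)\}$, where $\pi : [n] \to [m]$ is the nearest-neighbor assignment $\pi(j) \in \arg\min_{i \in [m]} \rho(S[j], V[i])$ (with any fixed tie-breaking).

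First, I would show the lower bound
\[W_1(\mu_S, \nu_\mu) \;\geq\; \frac{1}{n}\sum_{j=1}^n \min_{i \in [m]} \rho(S[j], V[i]) \qquad \forall \mu \in \Delta_m.\]
Any coupling $\gamma \in \Gamma(\mu_S, \nu_\mu)$ is supported on $\{S[j]\}_{j\in[n]} \times \{V[i]\}_{i\in[m]}$ (after accounting for repetitions), so it can be written as a non-negative matrix $(\gamma_{j,i})$ with $\sum_i \gamma_{j,i} = 1/n$ for all $j$ and $\sum_j \gamma_{j,i} = \mu[i]$ for all $i$. Then for each $j$,
\[\sum_{i\in[m]} \gamma_{j,i}\, \rho(S[j], V[i]) \;\geq\; \Big(\min_{i\in[m]} \rho(S[j], V[i])\Big) \sum_{i\in[m]} \gamma_{j,i} \;=\; \frac{1}{n}\min_{i\in[m]} \rho(S[j], V[i]).\]
Summing over $j$ and taking the infimum over $\gamma$ yields the claim.

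Next, I would exhibit a coupling for $\mu = \mu^*$ that attains equality. Define $\gamma^* := \frac{1}{n}\sum_{j=1}^n \delta_{(S[j], V[\pi(j)])}$. Its first marginal is $\mu_S$ by construction, and its second marginal equals $\frac{1}{n}\sum_j \delta_{V[\pi(j)]} = \sum_{i \in [m]} \mu^*[i]\, \delta_{V[i]} = \nu_{\mu^*}$, so $\gamma^* \in \Gamma(\mu_S, \nu_{\mu^*})$. Its cost is exactly $\frac{1}{n}\sum_{j} \rho(S[j], V[\pi(j)]) = \frac{1}{n}\sum_j \min_i \rho(S[j], V[i])$, which matches the lower bound above. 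Hence $W_1(\mu_S, \nu_{\mu^*})$ attains the universal lower bound, and therefore $\mu^* \in \arg\min_{\mu \in \Delta_m} W_1(\mu_S, \nu_\mu)$.

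There is no real obstacle here; the only mild care needed is handling ties in the nearest-neighbor assignment (fix any measurable tie-breaking rule, which does not affect the cost) and handling possible coincidences among the $S[j]$'s or among the $V[i]$'s (the matrix representation above absorbs this automatically since $\mu_S$ places weight $1/n$ per index $j$, not per distinct value). The argument is purely metric and does not use any structure of $(\Omega, \rho)$ beyond the triangle inequality implicitly through the definition of $W_1$, so it works in any metric space where $\nn$ is well-defined.
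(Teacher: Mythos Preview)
Your proposal is correct and follows essentially the same approach as the paper: both establish the universal lower bound $\frac{1}{n}\sum_j \min_i \rho(S[j],V[i])$ by pulling the minimum out of each row of an arbitrary coupling, and then show that the nearest-neighbor coupling (your $\gamma^*$, the paper's $\pi^*$) is valid for $\mu^*$ and attains this bound. The only cosmetic differences are that the paper writes the coupling as an indicator matrix while you write it as a sum of Dirac masses, and the paper uses the specific smallest-index tie-breaking from Algorithm~\ref{alg:dp-nn-hist} rather than an arbitrary one.
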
 
\paragraph{PE as a practical, sequential version of PSMM.} PSMM’s main practical limitation is constructing a discrete support $\om'$ that adequately covers $\om$. For example, if $\om$ represents the space of images, one option is to discretize the pixel space to build $\om'$, but this yields unrealistic images. To generate realistic elements, a natural alternative is to sample from a generative model and add images to $\om'$ only if they are sufficiently distinct from those already included, aiming to cover as much of $\om$ as possible. However, this process can be slow—especially if the generative model places low probability mass to some regions of $\om$. Moreover, full coverage of $\om$ is unnecessary; we only need to cover the region containing $S$. This motivates a sequential approach: start with an initial estimate $V_0$ of $\om'$, and iteratively refine it to better match the support of $S$. This is precisely what PE does—at each step $t$, $V_t = \vars(S_{t-1})$ can be seen as a refined support estimate, and $S_t$ as a DP synthetic dataset supported on $V_t$ whose empirical distribution approximately minimizes the 1-Wasserstein distance to that of $S$.

\section{Empirical Results}\label{sec:simulations}


\subsubsection*{Simulations}
We begin with simulations where the sample space $\om$ is the unit $\ell_2$ ball in $\RR^2$ and the sensitive dataset is random over $\om \cap \RR^2_+$. We use privacy parameters $\eps = 1, \delta = 10^{-4}$. Our theory indicates that $O(\log(n\eps))$ PE steps ensure convergence; for simulations we set $T = 2\log(n\eps)$. The noise $\sigma$ is then computed with the analytic Gaussian mechanism \cite[Theorem 8]{balle2018improvingGaussianmech}. The remaining parameters are set as in Theorem~\ref{thm:PEconvergence_euclidean}. We post-process noisy histograms by truncating at $H = 0$ and then re-normalizing. 

\paragraph{Dependence on initialization.}
Let $\Gamma_{t} \triangleq \mathbb E[W_1(\mu_S, \mu_{S_{t}} )]$, i.e. the expected Wasserstein-1 distance between the private data and the $t$th iteration of synthetic data. From equation \eqref{eq:descent_of_PE} in the proof of Theorem~\ref{thm:PEconvergence_euclidean}, we obtain $\Gamma_{T} \leq  (1-\gamma)^T(\Gamma_0 -err) + err$ where $err$ increases polynomially with $T$. 
We note from this equation that if $\Gamma_0 \leq err$, then the optimal number of steps for PE is $T = 0$. However, when $\Gamma_0$ is large, our worst-case upper bound will be more accurate. Figure $\ref{fig:initialization_effect}$ illustrates this phenomenon.  
Our findings help to explain prior empirical work. \cite{PE23} found that PE requires more steps to converge and consistently improves over time when $\Gamma_0$ is large. In contrast,  \cite{swanberg2025PEtabulardata} found that in many instances, PE worsens with time, and the optimal number of steps is $1$. 

\paragraph{Hyperparameter selection.}

Theorem~\ref{thm:PEconvergence_euclidean} suggests  setting the number of PE steps $T =O(\log(n\eps/\sqrt{\log(1/\delta)}))$ (independent of $d$).
Similarly, the number of synthetic samples $n_s$ should be chosen to balance the error due to adding noise to the NN histogram in Step~\ref{step:DP1nn_histogram} of Algorithm~\ref{alg:pe} (which increases with number of synthetic samples) with the error due to sampling with replacement in Step~\ref{step:sampling_w_replace} (which decreases with the number of synthetic samples). Figure~\ref{fig:suboptim_params} shows how a suboptimal number of PE steps or synthetic data points degrades data quality. 

\begin{figure}

\subfloat{
  \includegraphics[clip,width=\columnwidth]{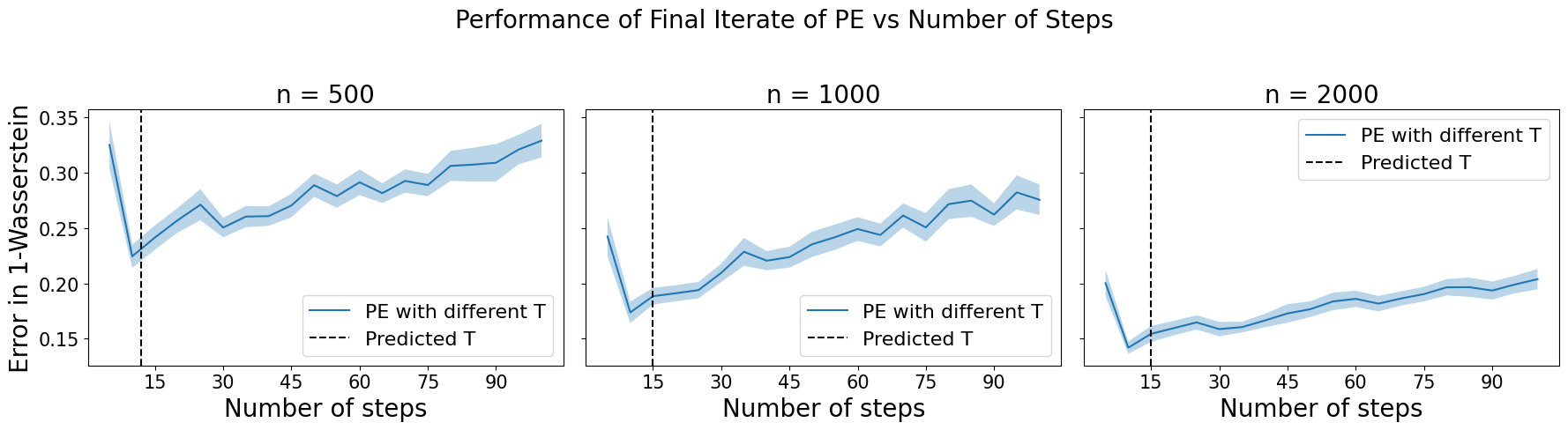}%
}

\subfloat{
  \includegraphics[clip,width=\columnwidth]{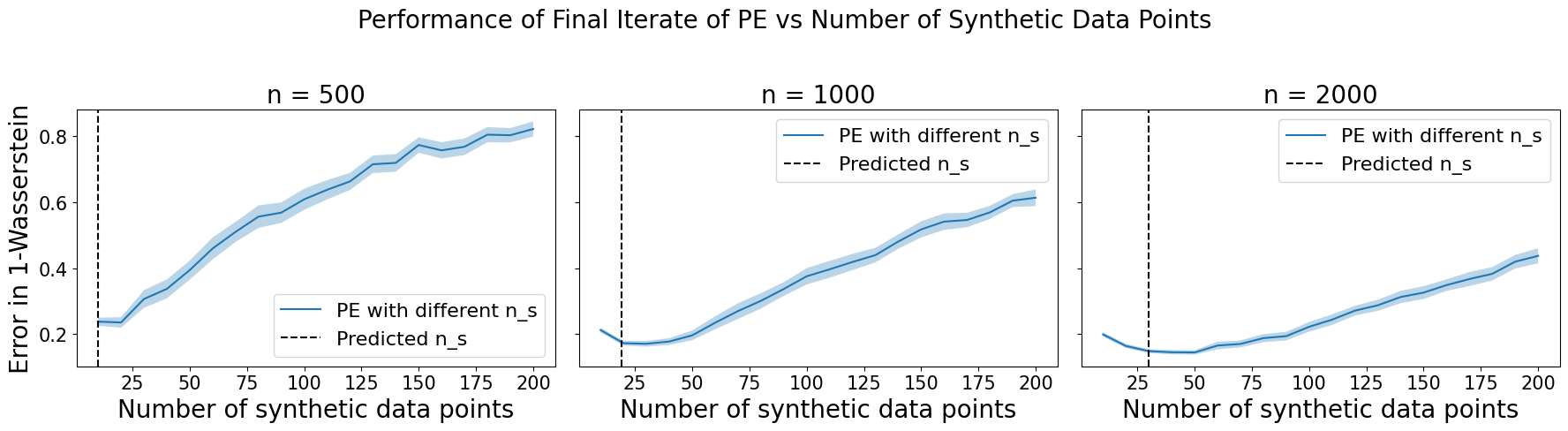}%
}

\caption{Impact of parameters on PE's performance. Top: performance of the last iterate of PE when run for different number of steps; `Predicted $T$' marks the theoretically suggested value $T = 2\log(n\eps)$. Bottom: same setup, but replacing $T$ by the number of synthetic samples $n_s$; `Predicted $n_s$' marks the value of $n_s$ given in Theorem~\ref{thm:PEconvergence_euclidean}. We repeat this for different sensitive sample sizes $n$, averaging over 100 runs. The plots illustrate the accuracy of our theoretical predictions. 
}  
\label{fig:suboptim_params}
\end{figure}

\subsubsection*{Experiments on CIFAR-10}
Next, we evaluate our theoretical predictions on an image modeling task, using two subclasses (Dog and Plane) of images from the CIFAR-10 dataset \cite{cifar10}. We closely follow the experimental setup in \cite{PE23}, using the ImageNet inception embedding \cite{szegedy2016rethinking} and Improved Diffusion \cite{nichol2021improved} trained on ImageNet as $\rand$ and $\vars$\footnote{We use the checkpoint imagenet64\_cond\_270M\_250K.pt from https://github.com/openai/improved-diffusion}. Unlike \cite{PE23}, we create variations by running the diffusion model with different degrees of variation in each iteration, while \cite{PE23} fixed the number of variation degrees per iteration and decreased it across iterations. Figure~\ref{fig:effect_n_CIFAR10}  shows that if we run PE with the hyperparameter setting (including $T$ and $n_s$) derived from Theorem \ref{thm:PEconvergence_euclidean}, then the FID improves (decreases) with more samples. This empirically supports the main claim of our paper: PE converges with enough samples, as long as hyperparameters are set correctly and with appropriate APIs. We also evaluate performance of PE with different choices of $T$, similar to  Figure~\ref{fig:suboptim_params}. Figure~\ref{fig:T_effect_CIFAR} confirms that using the theoretically-informed number of evolution steps $T$ is effective even on real data. In particular, the experiments on CIFAR10 data presented in \cite{PE23} use $T= 20$, which turns out to be suboptimal, while $T= \log(n\eps)$ is a better choice.  

\begin{figure}
\centering
\includegraphics[width=\linewidth]{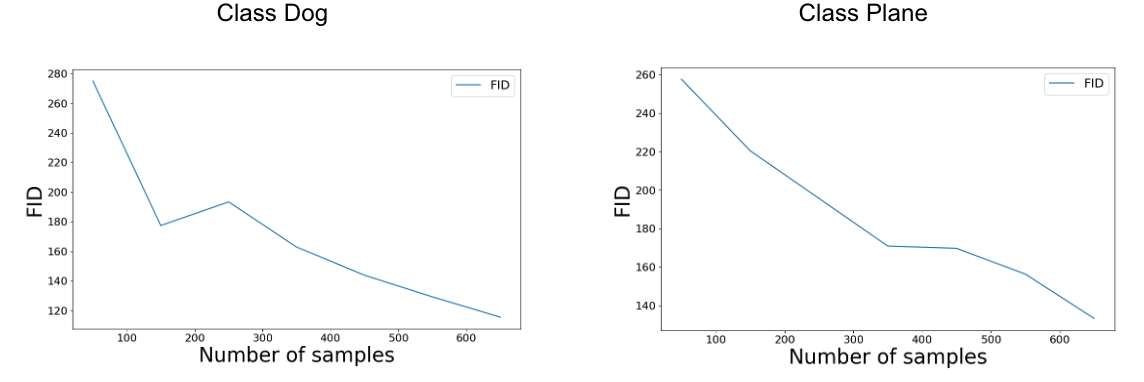}
\caption{We set the privacy parameters to $\eps = 5, \delta = 10^{-4}$. For each $n\in \{100,200,...,600\}$, we set the hyperparameters according to Theorem \ref{thm:PEconvergence_euclidean} and run PE. We repeat the experiment $3$ times and report the average final FID achieved for each value of $n$, and note that it decreases with larger $n$.}
\label{fig:effect_n_CIFAR10}
\end{figure}

\begin{figure}
\centering
\includegraphics[width=\linewidth]{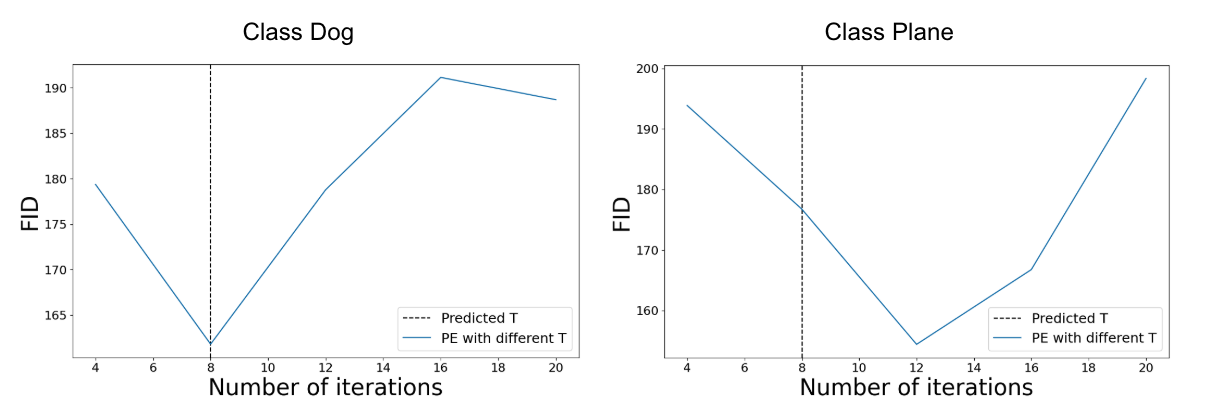}
\caption{We set the privacy parameters to $\eps = 5, \delta = 10^{-4}$.We consider a fixed number of samples $n = 300$. Then, we set the all hyperparameters according to Theorem \ref{thm:PEconvergence_euclidean}, except the number of steps $T$. For each $T \in \{4,8,12,16,20\}$, we run PE $3$ times and report the average final FID achieved.}
\label{fig:T_effect_CIFAR}
\end{figure}

\section{Conclusion} 

We identified key algorithmic and analytical issues in prior theory for Private Evolution (PE) and introduced a new theoretical model with formal convergence guarantees. Our version preserves the core logic of practical PE and offers a plausible explanation to behaviors observed by practitioners. The analysis combines empirical process theory, Wasserstein convergence, and properties of APIs. We also connect PE to the Private Signed Measure Mechanism, offering new insights into its theoretical foundations. Overall, our work significantly deepens the formal understanding of PE.

{\bf Limitations and future work}. The choice of $\vars$ that we presented in Section \ref{sec:prior_pe_theory_details} is not the same as what practical variants of PE utilize. Furthermore, the convergence rate we proved in Theorem \ref{thm:PEconvergence_euclidean} suffers from the curse of dimensionality, requiring many samples to ensure convergence. Future work could expand the set of theoretically tractable $\vars$ for which we understand the convergence of PE or study alternative accuracy metrics that do not suffer from the curse of dimensionality.

\begin{ack}
This work was supported in part by NSF grants CCF-2338772 and CNS-2148359 and the Sloan Foundation. TG was partially funded to attend this conference by the CMU GSA/Provost Conference Funding. 
\end{ack}


\bibliographystyle{abbrv}
\bibliography{ref}

\newpage

\appendix
\section{Related work}\label{sec:related_work}

{\it Private evolution.} There have been multiple follow up works since Lin \emph{et al.} \cite{PE23} introduced the framework of Private Evolution. \cite{PE23, pe2} propose PE for images and text using foundation models as APIs. 
Other works have extended the framework in various ways, including adapting it to the federated setting
\cite{hou24PE_federated}, 
the tabular data setting \cite{swanberg2025PEtabulardata}, 
allowing access to many APIs \cite{zou2025PEmultipleLLMs}, 
and replacing the API with a simulator \cite{pe3}.
In a similar vein, \cite{liu2023genetic_DPsyntheticdata}  proposed zeroth-order optimization methods based on genetic algorithms before \cite{PE23} in order to handle non-differentiable accuracy measures. Despite many empirical works on PE, to the best of our knowledge, its theoretical properties are studied only in \cite{PE23}, under a number of unrealistic assumptions, as discussed in Section \ref{sec:prior_pe_theory_details}.

{\it Theory of DP synthetic data.} 
The utility of synthetic data is typically analyzed by quantifying how well the data can be used to answer queries. More concretely, let $\Q$ be a set of queries and $q(S)$ the answer of $q \in \Q$ on sensitive dataset $S$. Most of the theory on synthetic DP data measures the quality of a synthetic dataset $S'$ with $\max_{q\in \Q}|q(S) - q(S')|$\cite{hardt2012simple, blum2013learning, dwork2015efficient}. When $\Q$ contains all the bounded Lipschitz queries, then the accuracy measure coincides with $W_1(\mu_S, \mu_{S'})$ (see \ref{def:wasserstein_dist}); this case was studied in \cite{he2024onlineDPsyntheticdata, Vershynin23DPsyntheticdata, boedihardjo2024metricDP}. The cases of sparse Lipschitz queries and smooth queries are studied in \cite{donhauser2024certified} and \cite{wang2016smooth_queries}, respectively. \cite{gonzalez2024mirror} assumes that $S$ is sampled according to an unknown distribution $\D$ and the accuracy measure is given by $\max_{q\in \Q}|\Exp_{z\sim \D}[q(z)] - q(S')|$. In this work, we focus on the setting where the accuracy measure corresponds to the Wasserstein-1 distance, i.e., $\Q$ contains all bounded Lipschitz queries. \cite{aydore21DPQRadaptive_projection} gives a practical algorithm along with accuracy guarantees for privately answering many queries, but only works for datasets from a finite data space.

{\it Practice of DP synthetic data.} We provide a non-exhaustive selection of practical works on DP synthetic data. For a more comprehensive survey, see \cite{gong2025dpimagebench, chen2025benchmarking}. Most practical methods consist of training (either from scratch or finetuning) a non-DP generative model with DP-SGD (that is, the gradients are privatized during training). For example, the following methods have been proposed: DP-GAN \cite{xie2018DPgan}, G-PATE \cite{long2021gpate}, DP Normalizing Flows \cite{DP_normalizingflows} and DP Diffusion models \cite{dockhorn2022dpdifussionmodels}.  \cite{cao2021DPGMwithsinkhorn} uses the Sinkhorn divergence as a loss function and also uses DP-SGD for training.  
Several works take inspiration from Optimal Transport (in particular, Wasserstein distances). \cite{DPslicedWasserstein} introduces the Smoothed Sliced Wasserstein Distance, making the loss private (instead of the gradients), and the gradient flow associated with this loss was recently studied in \cite{sebag2023DPslicedWassersteinflow}.  \cite{reshetova2024training_gen_models_with_EOT} studied how to generate DP synthetic data under Local DP with entropic optimal transport, and is one of the few practical works which comes with theoretical convergence guarantees.

In general, most practically-competitive synthetic data algorithms do not come with accuracy guarantees, particularly in continuous data space settings, and algorithms with accuracy guarantees tend to be less practical in high dimensions. The bridge between the theory and the practice of DP Synthetic Data transcends PE.  
\section{More simulations}\label{sec:more_simulations}

First, we provide the figure that shows how our theory captures the dependence on initialization, as explained in Section~\ref{sec:simulations}. 
\begin{figure}[H]
    \centering
    \includegraphics[width=\linewidth]{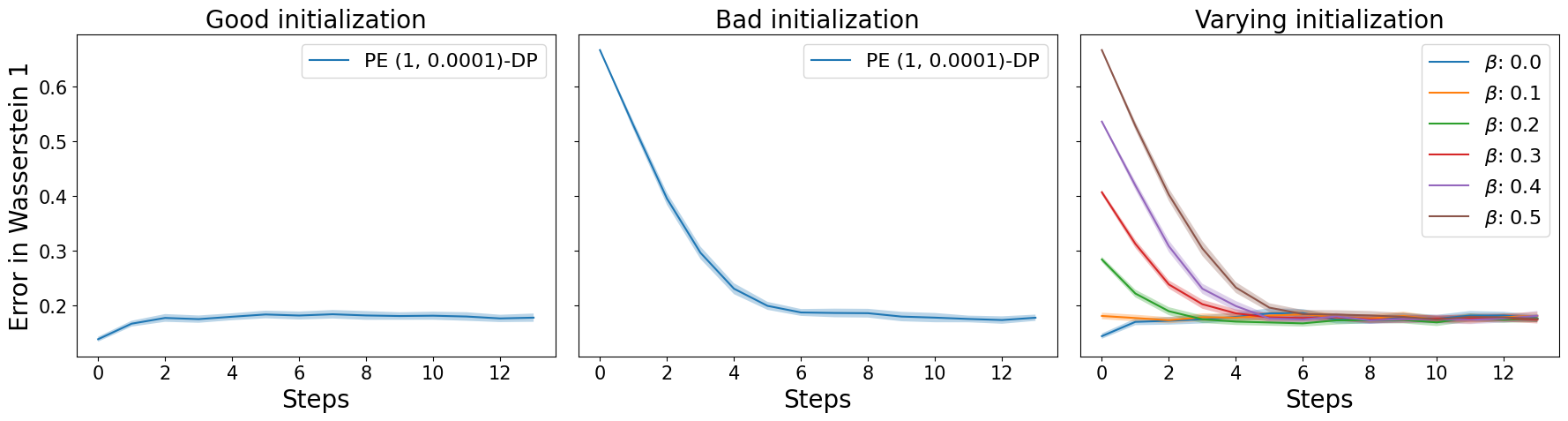}
    \caption{Recall from Section~\ref{sec:simulations} that $\eps = 1,\delta = 10^{-4}$. We set the number of private samples to $n = 1000$. We run PE on different initial datasets created with $\rand$ and plot the accuracy trajectories over $2\log(n\eps) = 12$ iterations. We note that the best number of steps depends heavily on the initialization from $\rand$. The private dataset $S$ is a random dataset in $\om \cap \RR^2_+$ . Left: When $S_0 = S$ (i.e., $\Gamma_0 = 0$, the private data is the same as the initial synthetic data), the optimal number of PE steps is 0.
    Middle: When PE is initialized poorly (e.g., $S_0$ consists of only $(0,0)$, so $\Gamma_0$ is large), more iterations are needed. Right: Interpolating between the previous cases parametrized by $\beta$: $S_0  = (1-2\beta)S$,  PE can improve or degrade performance depending on $\rand$; it never exceeds the worst-case error bound. Results are averaged over 100 runs. Our analysis explains this phenomenon (see text in Section~\ref{sec:simulations}).}
    \label{fig:initialization_effect}
\end{figure}

\subsection{Simulations varying dimension and the privacy parameter}

In the simulations presented in Section~\ref{sec:simulations}, we fixed the privacy parameters to $\eps = 1, \delta = 10^{-4}$ and the dimension to $d=2$. We provide simulations that verify two expected behaviors that can be derived from the convergence rate provided in Theorem \ref{thm:PEconvergence_euclidean} when varying $\eps$ and $d$. Namely, the accuracy degrades as the dimension $d$ grows, and improves when $\eps$ grows. 

\begin{figure}[H]
    \centering
    \includegraphics[width=\linewidth]{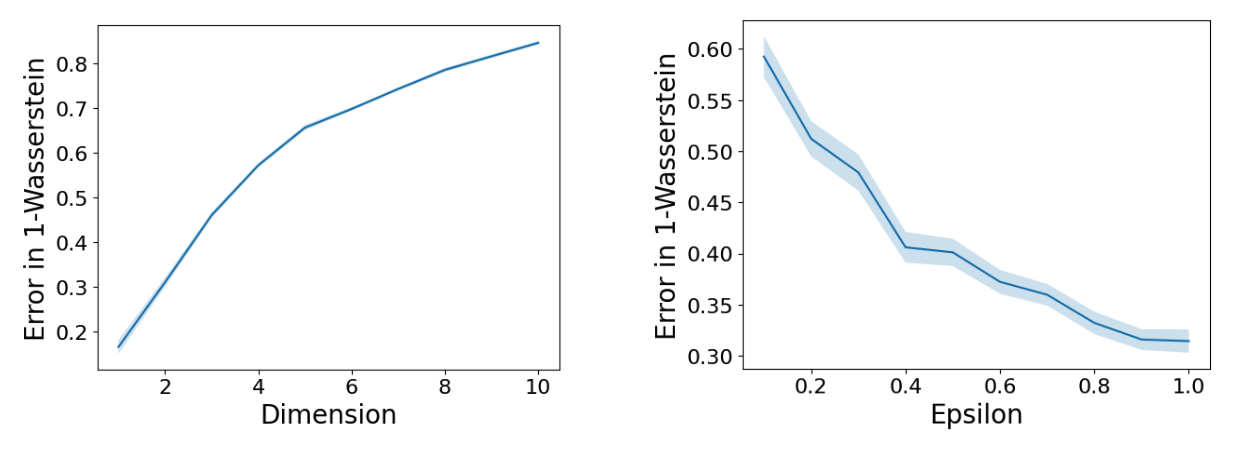}
    \caption{We set $\delta = 10^-4$ and run PE with the hyperparameter setting suggested by Theorem~\ref{thm:PEconvergence_euclidean}. For the plot on the right, we fixed $\eps = 1$ and vary $d\in\{1,2,3,...,10\}$. For each value of $d$, we report the final accuracy of PE averaged over $100$ runs. On the right, we repeat a similar procedure but fixing $d=2$ and varying $\eps \in \{0.1,0.2,...,1\}$.}
    \label{fig:placeholder}
\end{figure}

\section{Comparison between Algorithms~\ref{alg:previous_theoretical_pe},~\ref{alg:pe} and Practical PE}
\label{app:comparison}

\subsection{Practical Private Evolution}

Let us start introducing Algorithm~\ref{alg:practical_pe}, the practical implementation of the PE algorithm from \cite{PE23}. 

\begin{algorithm}
\caption{(Practical) Private Evolution \cite{PE23}}
\label{alg:practical_pe}
\begin{algorithmic}[1]
\STATE \textbf{Input:} sensitive dataset: $S \in \Omega^n$, number of iterations: $T$, number of generated samples: $n_s$, noise multiplier: $\sigma$, distance function: $\rho(\cdot, \cdot)$, threshold $H > 0$
\STATE \textbf{Output:} DP synthetic dataset: $S_T \in \Omega^{n_s}$

\STATE $S_0 \leftarrow \rand(n_s)$
\FOR{$t = 1 \hdots T$}
    \STATE $V_t \leftarrow \vars(S_{t-1})$ 
    \STATE $\hat\mu_t \leftarrow \nn(S,V_t, \rho) \in \Delta_{|V_t|}$ (see Alg.~\ref{alg:dp-nn-hist}) 
    \STATE $\tilde \mu_t \leftarrow \hat\mu_t + \N(0, \sigma^2I_{|V_t|})$  
    \STATE $\mu'_{t} \leftarrow \mu'_{t}/\|\mu'_{t}\|_1$, where $\mu'_{t}[i] \leftarrow \tilde \mu_t[i] \mathbbm{1}_{(\tilde \mu_t[i] \ge H)}$ for every $i\in [|V_t|]$
    \STATE $S_{t}  \leftarrow n_s \text{ samples with replacement from } \mu'_{t}$
\ENDFOR
\RETURN $S_{T}$ 
\end{algorithmic}
\end{algorithm}

\begin{algorithm}
\caption{Nearest Neighbor Histogram ($\nn$)}
\label{alg:dp-nn-hist}
\begin{algorithmic}[1]
\STATE \textbf{Input:} 
datasets: $S,V \in \cup_{m\in \mathbb{N}}\Omega^{m}$, metric on $\Omega$: $\rho(\cdot, \cdot)$
\STATE \textbf{Output:} nearest neighbors histogram on $V$

\STATE $\text{histogram} \leftarrow (0, \ldots, 0) \in \RR^{|V|}$
\FOR{$i = 1, \hdots, |S|$}
    \STATE $k \leftarrow \min \big\{k : k\in \arg \min_{j \in [|V|]} \rho(S[i], V[j])\big\}$ (ties are broken by picking the smallest index in the argmin) 
    \STATE $\text{histogram}[k] \leftarrow \text{histogram}[k] + 1/|S|$
\ENDFOR 
\STATE \textbf{return} $\text{histogram}$
\end{algorithmic}
\end{algorithm} 

\subsection{Comparison between Algorithm~\ref{alg:previous_theoretical_pe} and practical PE}

We outline the main differences between the practical Algorithm~\ref{alg:practical_pe} and the theoretical model in Algorithm~\ref{alg:previous_theoretical_pe}.
Algorithm~\ref{alg:previous_theoretical_pe} initializes $S_0$ with $n$ samples, and with high probability $|S_0| = |S_1|=...= |S_T|$. This is required for their utility proof based on $\eta$-closeness to work. In practice, PE maintains synthetic datasets of size $n_s$ potentially different from $n$, and $n_s$ is part of the input. In our simulations (section~\ref{sec:simulations}) we show that setting $n_s$ correctly is critical for convergence, hence not being able to set it differently from $n$ is a serious algorithmic limitation. 

Once the NN is privatized by adding Gaussian noise and thresholded, Algorithm~\ref{alg:previous_theoretical_pe} creates the next dataset deterministically by adding variations to the next dataset proportionally to the entries of the DP histogram. There are two issues with this. First, calculating the proportions requires exact knowledge of the multiplicity parameter $B$ to create a dataset $\eta$-close to $S$. However, in practice $B$ should be calculated privately from $S$, which gives access to $B$ up to an error. Second, the practical version of PE normalizes the DP NN histogram after thresholding and then samples with replacement variations to create the next dataset. Algorithm~\ref{alg:previous_theoretical_pe} does not take into account these steps, and as a consequence its convergence analysis does not take into account the error incurred by them.  

\subsection{\texorpdfstring{More details on the limitations of the analysis of Algorithm~\ref{alg:previous_theoretical_pe} from \cite{PE23}}{}} 

The theoretical analysis of PE in \cite{PE23} does not reflect how PE operates in practice. Their convergence proof relies on first showing that noiseless PE (Algorithm~\ref{alg:previous_theoretical_pe} with $\sigma = 0$) converges, then arguing that when each data point from the sensitive dataset $S$ repeated $B$ times, with high probability the noisy version behaves exactly the same for large $B$. 

The convergence of noiseless PE is proven as follows. Given $S \in \om^n$ and the initial random $S_0 \in \om^n$, the elements of $S$ and $S_0$ are matched: assign $S_0[i]$ to $S[i]$. Variations of $S_0$, denoted by $V_0$, are created and $S_1[i]$ is chosen as the element from $V_0$ that is closest to $S[i]$, as indicated by the NN histogram. $V_0$ contains the variations of $S_0[i]$, which are likely to contain a point closer to $S[i]$ than $S_0[i]$. Repeating this process, they construct $S_0,S_1,...,S_T$, where $S_t[i]$ is closer to $S[i]$ than $S_{t-1}[i]$. They conclude that for any $\eta>0$ and sufficiently large $T$, it holds that for every $i \in [n]$, $S_T[i]$ is within distance $\eta$ of $S[i]$. We say that $S$ and $S_T$ are $\eta$-close when this happens, denoted by $S =_\eta S_T$. 

To prevent noise from overwhelming the vote signal in the noisy case, \cite{PE23} assumes that each data point in $S$ is repeated $B$ times. This ensures that, with high probability, the NN histogram entries are at least $B/n$ (dominating the added noise) and that the number of times $\mathrm{nint}(n\mu'_{t}[i]/B)B$ that $V_t[i]$ is included in $S_{t+1}$ is equal to $n\hat\mu_t[i]$. In other words, with high probability the noisy NN histogram coincides with the non-DP NN histogram, which clearly violates DP when we remove the unrealistic multiplicity assumption. In fact, the exact assumption they make to argue that $S =_\eta S_T$ is $B \gg \sqrt{d\log(1/\eta)\log(1/\del)}/\varepsilon$ (Theorem $2$ in \cite{PE23}). If we let $B$ be a constant, then their convergence proof is valid only for $\eps = \Omega (\sqrt{d\log(1/\eta)\log(1/\del)})$ or, equivalently, $\eta = \om \big( e^{-\eps^2/(d\log(1/\del))}\big)$. 

\subsection{Comparison between Algorithms~\ref{alg:pe} and~\ref{alg:previous_theoretical_pe}}\label{app:comparison_theoretical_pes}

We note that Algorithm~\ref{alg:pe} overcomes the unrealistic features present in Algorithm~\ref{alg:previous_theoretical_pe}. For example, it can generate random datasets of any number of samples $n_s$, since we do not use $\eta$-closeness arguments to prove its convergence. In addition, it does not require exact knowledge of the multiplicity parameter $B$, since our convergence proof assumes $B=1$, which is usually the case in practice. Finally, Algorithm~\ref{alg:pe} post-processes the noisy NN histogram by projecting it to the spaces of probability measures over the variations and then samples variations according to this probability distribution to generate the next dataset, resembling what is done in practice.

\section{More Definitions}\label{app:more_defs}

\begin{definition}[Covering and packing number \cite{wainwright2019HDS}]
    Let $(\Omega, \rho)$ be a metric space. For $\delta > 0$, a $\delta$-cover of $\Omega$ w.r.t~$\rho$ is a set $\om' \in \om^M$ such that for all $\omega \in \Omega, \exists i \in [M]$ with $\rho(\omega, \om'[i]) \leq \delta$. The covering number definition is the following: 
    \[\N(\Omega, \rho ; \delta) = \min\{n \in \mathbb{N} : \text{ exists a } \delta\text{-covering with } n \text{ elements}\}.\]
    Similarly, a $\delta$-packing of $\Omega$ w.r.t~$\rho$ is a set $\om' \in \om^M$ with the property that $\forall (i, j) \in [M]\times[M]$ with $i \neq j$, $\rho(\om'[j], \om'[i]) \ge \delta$. The packing number is defined as 
    \[\M(\Omega, \rho ; \delta) = \max\{n \in \mathbb{N} : \text{ exists a } \delta\text{-packing with } n \text{ elements}\}.\]
\end{definition}

\begin{definition}[Gaussian and Laplace Complexity]
    Let $\cal F$ be a function class on a metric space $(\Omega, \rho)$. Then, the Gaussian complexity \cite{wainwright2019HDS} of $\F$ is defined as
    \[G_n({\cal F}) := \sup_{z_1,...,z_n \in \Omega} \Exp_{x_1,...,x_n \overset{iid}{\sim} \N(0,1)} \bigg[\sup_{f\in\cal F}\frac{1}{n} \bigg|\sum_{i \in [n]} x_if(z_i)\bigg|\bigg].\]
    The Laplacian complexity \cite{Vershynin23DPsyntheticdata} of $\F$ is defined as
    \[L_n({\cal F}) := \sup_{z_1,...,z_n \in \Omega} \Exp_{x_1,...,x_n \overset{iid}{\sim} Lap(1)} \bigg[\sup_{f\in\cal F}\frac{1}{n} \bigg|\sum_{i \in [n]} x_if(z_i)\bigg|\bigg].\]
\end{definition}

\section{Proof of main result}\label{app:PEconvergence_proof}
In this section we provide the proof of our main result, Theorem \ref{thm:PEconvergence_euclidean}, along with the necessary auxiliary tools. 
\begin{proof}[Proof of Theorem \ref{thm:PEconvergence_euclidean}]
Note that for any $t = 0,...,T-1$
\begin{align*}
    W_1(\mu_S,\mu_{S_{t+1}} ) &= \bl(\mu_S, \mu_{S_{t+1}}) \\
    &\leq  \bl(\mu_S,\hat \mu_{t+1} ) + \bl(\hat \mu_{t+1} ,\tilde \mu_{t+1} ) + \bl(\tilde \mu_{t+1} , \mu'_{t+1} ) + \bl(\mu'_{t+1} , \mu_{S_{t+1}} )\\
    &= W_1(\mu_S,\hat \mu_{t+1} ) + \bl(\hat \mu_{t+1} ,\tilde \mu_{t+1} ) + \bl(\tilde \mu_{t+1} , \mu'_{t+1} ) + W_1(\mu'_{t+1} , \mu_{S_{t+1}} )\\
    &\leq W_1(\mu_S,\hat \mu_{t+1} ) + 2 \bl(\hat \mu_{t+1} ,\tilde \mu_{t+1} ) + W_1(\mu'_{t+1} , \mu_{S_{t+1}} ),
\end{align*}
where the inequality follows from $\bl(\hat \mu_{t+1} ,\tilde \mu_{t+1} ) \ge \bl(\tilde \mu_{t+1} , \mu'_{t+1} )$ by definition of the projection in $\bl$ distance. Denote by $\Exp_t[\cdot]$ the expectation when conditioning on the randomness up to iteration $t$ of PE. Lemmata \ref{lem:improvement_lemma} and \ref{lem:vars_decreases_wasserstein} give 
\[\Exp_t[W_1(\mu_S,\hat \mu_{t+1} )] \leq (1-\gamma)W_1(\mu_S, \mu_{S_{t}}) + \alpha,\]
with $\gamma = \frac{1}{8\pi[(\sqrt{d} + \log(2))^2 + \log(2)]}$. Further, using the definition of $\vars(S_t)$, it is easy to see that $V_t$ has $ 
n_s (2\lceil \log_2(D/\alpha) \rceil + 1)$ elements. Letting $V = n_s (2\lceil \log_2(D/\alpha) \rceil + 1)$, Lemma \ref{lem:BL_error_noisy_histogram} implies
\[\Exp_t[\bl(\hat \mu_{t+1} ,\tilde \mu_{t+1} )] \leq V \sigma G_{V}(\fbl).\]
Putting everything together, we conclude that  
\[\Exp_t[W_1(\mu_S,\mu_{S_{t+1}} )] \leq (1-\gamma)W_1(\mu_S, \mu_{S_{t}}) + \alpha + 2V\sigma  G_{V}(\fbl) + W_1(\mu'_{t+1} , \mu_{S_{t+1}}).\]

Integrating on both sides and denoting $\Gamma_{t} = \Exp[W_1(\mu_S, \mu_{S_{t}} )]$ we obtain
\[\Gamma_{t+1} \leq (1-\gamma)\Gamma_{t} + \alpha + 2V\sigma \hat G_{V}(\fbl) + \empW_{1}(n_s),\]
where 
\[\hat G_{V}(\fbl) = 10D\begin{cases}\frac{\sqrt{\log(2\sqrt{V})}}{\sqrt{V}} & d = 1\\ \frac{\log(2\sqrt{V})^{3/2}}{\sqrt{V}} & d = 2\\ \frac{\sqrt{\log(2V^{1/d}(d/2-1)^{2/d})}}{V^{1/d}(d/2-1)^{2/d}} & d \ge 3\end{cases}\]
is the upper bound on $\Exp[G_{V}(\fbl)]$ from Corollary \ref{cor:Gauss-complexity-euclideanball} and 
\[\hat W_1(n_s) = CD\begin{cases} n_s^{-1/2} & d = 1\\ \log(n_s)n_s^{-1/2} & d = 2\\ n_s^{-1/d} & d \ge 3\end{cases}\] 
is the upper bound on $\Exp[W_1(\mu'_{t+1} , \mu_{S_{t+1}})]$ from Lemma \ref{lem:empiricalOTrates}. 
Hence, after $T$ steps of PE we obtain 
\begin{align}
    \Gamma_{T} &\leq (1-\gamma)^T\Gamma_0 + (\alpha + 2V\sigma \hat G_{V}(\fbl) + \empW_{1}(n_s)) \sum_{i = 0}^{T-1}(1-\gamma)^i \nonumber \\
    &= (1-\gamma)^T\Gamma_0 + (\alpha + 2V\sigma \hat G_{V}(\fbl)+ \empW_{1}(n_s)) \frac{1-(1-\gamma)^T}{1-(1-\gamma)} \nonumber \\
    &= (1-\gamma)^T\left[\Gamma_0 - \frac{\alpha + 2V\sigma \hat G_{V}(\fbl) +\empW_{1}(n_s)}{\gamma}\right] + \frac{\alpha + 2V\sigma \hat G_{V}(\fbl) +\empW_{1}(n_s)}{\gamma}\label{eq:descent_of_PE}\\
    &\leq e^{-\gamma T} D + \frac{\alpha + 2V\sigma \hat G_{V}(\fbl) +\empW_{1}(n_s)}{\gamma},\nonumber
\end{align}
where the last inequality follows from the fact that $\frac{\alpha + 2V\sigma \hat G_{V}(\fbl) +\empW_{1}(n_s)}{\gamma} \ge 0$ and $\Gamma_0 \leq D$. 
Next, note that 
\[2V\sigma  G_{V}(\fbl) + \hat W_1(n_s) = D\begin{cases} 20\sqrt{V}\sigma\sqrt{\log(2\sqrt{V})} + C n_s^{-1/2} & d = 1\\ 20\sqrt{V}\sigma\log(2\sqrt{V})^{3/2} + C\log(n_s)n_s^{-1/2}& d = 2\\ \frac{20V^{1-1/d}\sigma\sqrt{\log(2V^{1/d}(d/2-1)^{2/d})}}{(d/2-1)^{2/d}} + Cn_s^{-1/d} & d \ge 3\end{cases}.\]
Since $V = n_s (2\lceil \log_2(D/\alpha) \rceil + 1)$, picking $n_s = \big(\sigma (2\lceil \log_2(D/\alpha) \rceil + 1)^{1-1/\max\{d,2\}}\big)^{-1}$ gives 
\[2V\sigma  G_{V}(\fbl) + \hat W_1(n_s) \leq \tilde O(D\sigma^{1/\max\{d,2\}}).\]
We conclude that 
\[\Gamma_{T} \leq e^{-\gamma T}D + \tilde{O}\bigg(\frac{\alpha + D\sigma^{1/\max\{d,2\}}}{\gamma}\bigg).\]
Finally, setting $\alpha = D\sigma^{1/\max\{d,2\}}$ and $T \ge \lceil \log(\gamma/[D\sigma^{1/\max\{d,2\}}])/\gamma\rceil$ we obtain 
\[\Gamma_{T} \leq \tilde{O}\bigg(\frac{D\sigma^{1/\max\{d,2\}}}{\gamma}\bigg) = \tilde{O}(d D \sigma^{1/d}).\]
This proves the first result. For the second, note that if $\eps, \delta < 1$, then $\sigma = \frac{4\sqrt{T\log(1.25/\delta)}}{n\eps}$ is enough noise to privatize $T$ iterations of PE by viewing it as the adaptive composition of $T$ Gaussian mechanisms where each of them adds noise to a NN histogram with $\ell_2$ sensitivity $\sqrt{2}/n$ (\cite{dong2022gaussianDP}, Corollary 3.3). Since we need $T \ge \log(\gamma/\sigma^{1/\max\{d,2\}})/\gamma$, it suffices to set $T 
= \frac{\log(n\eps/[4\sqrt{\log(1/\delta)}])}{\max\{d,2\} \gamma} = O(\log(n\eps/\sqrt{\log(1/\delta)}))$ to obtain 
\[\Gamma_t \leq \tilde{O}\left(dD\left(\frac{\sqrt{\log(n\eps/\sqrt{\log(1/\delta)})\log(1/\delta)}}{n\eps}\right)^{1/d}\right).\]
\end{proof}

\subsection{Lemmata used in proof of Theorem \ref{thm:PEconvergence_euclidean}}

\begin{lemma}\label{lem:improvement_lemma}
    Assume $\Omega$ is closed and convex. Then, for any $z_1,z_2\in \Omega$, $\Exp[\min_{z\in \vars(z_1)} \|z-z_2\|_2] \leq (1-\gamma)\|z_1-z_2\|_2 + \alpha$ with $\gamma = \frac{1}{8\pi[(\sqrt{d} + \log(2))^2 + \log(2)]}$, where the expectation is taken w.r.t~$\vars(z_1)$.  
\end{lemma}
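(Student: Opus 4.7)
The plan is to proceed by cases on $r := \|z_1 - z_2\|_2$. If $r \leq \alpha$, the statement is immediate: since $z_1 \in \vars(z_1)$, we have $\min_{z \in \vars(z_1)} \|z - z_2\|_2 \leq r \leq \alpha \leq (1-\gamma)r + \alpha$. All the work is in the complementary case $r > \alpha$.

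For $r > \alpha$, I would first locate the "right" scale $l^* \in \{1, \ldots, \lceil \log_2(\diam(\Omega)/\alpha)\rceil\}$. Writing $C_d = \sqrt{\pi}\bigl[(\sqrt d + \log 2)^2 + \log 2\bigr]$ so that $\sigma_l = \alpha 2^{l-1}/C_d$, I would choose $l^*$ with $\alpha 2^{l^*-1} \in [r/2, r]$, which is possible because the scales $\alpha 2^{l-1}$ form a geometric sequence covering $[\alpha, \diam(\Omega)/2]$ and we are assuming $r \in (\alpha, D]$. This calibrates $\sigma_{l^*}$ so that $\sigma_{l^*} \in [r/(2C_d), r/C_d]$, i.e., $\sigma_{l^*}$ is small enough that $\sigma_{l^*}^2 d$ is controlled by $r^2/(\pi A_d)$, yet large enough to permit nontrivial geometric progress toward $z_2$.

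Next, using that $z_2 \in \Omega$ and that projecting onto the convex set $\Omega$ is 1-Lipschitz toward points in $\Omega$, I would drop the projection and bound
\[\Exp\bigl[\min_{z \in \vars(z_1)}\|z - z_2\|_2\bigr] \leq \Exp\bigl[\min_{k \in [2]}\|z_1 + N^{k,l^*} - z_2\|_2\bigr].\]
Decomposing $N = N_u u + N_\perp$ with $u = (z_2 - z_1)/r$ gives $\|z_1 + N - z_2\|_2^2 = r^2 - 2rN_u + \|N\|_2^2$, and therefore
\[\min_{k \in [2]}\|z_1 + N^{k,l^*} - z_2\|_2^2 \leq r^2 - 2r\max_k N^{k,l^*}_u + \max_k \|N^{k,l^*}\|_2^2.\]
Taking expectations, using the standard identity $\Exp[\max(Z_1,Z_2)] = 1/\sqrt{\pi}$ for i.i.d.\ standard normals to obtain $\Exp[\max_k N^{k,l^*}_u] = \sigma_{l^*}/\sqrt{\pi}$, and controlling $\Exp[\max_k \|N^{k,l^*}\|_2^2]$ via a union bound (or, for a sharper constant, via Gaussian concentration $\|N\|_2 \leq \sigma_{l^*}(\sqrt d + \log 2)$ which is exactly the quantity appearing inside $A_d$), one gets an upper bound on the expected squared minimum distance that is strictly smaller than $r^2$. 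Passing from the squared expectation to the expectation via Jensen's inequality, and then applying $\sqrt{r^2 - x} \leq r - x/(2r)$ for $x \in [0,r^2]$, extracts a contraction factor $(1-\gamma')$ with $\gamma'$ of the same order as $\gamma$.

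The main obstacle will be tracking constants carefully enough to arrive at exactly $\gamma = 1/[8\pi A_d]$: the bound from the $\max_k N^{k,l^*}_u$ term contributes the $1/\sqrt{\pi}$ factor, the Gaussian norm bound contributes the $(\sqrt d + \log 2)^2 + \log 2 = A_d$ factor through the choice of $\sigma_{l^*}$, and the factor $8$ will come from the interplay between the $1/(2r)$ from the $\sqrt{\cdot}$ linearization, the $1/2$ from $\sigma_{l^*} \leq r/C_d$ versus $\sigma_{l^*} \geq r/(2C_d)$, and the $1/\sqrt{\pi}$ from the Gaussian max. Any slack in these bounds (for instance, the $\alpha$-level error coming from placing $l^*$ at the closest dyadic scale rather than exactly at $r$) is exactly what the additive $\alpha$ term on the right-hand side is designed to absorb.
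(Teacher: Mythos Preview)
Your proposal is essentially the same argument as the paper's proof: the same case split on $r\lessgtr\alpha$, the same choice of dyadic scale $l^*$, the same expansion $\|z_1+N-z_2\|^2=r^2-2rN_u+\|N\|^2$, the same two ingredients $\Exp[\max_{k\in[2]}N_u^{k,l^*}]=\sigma_{l^*}/\sqrt{\pi}$ and a chi-squared-max bound $\Exp[\max_k\|N^{k,l^*}\|^2]\leq\sigma_{l^*}^2[(\sqrt d+\log 2)^2+\log 2]$, followed by Jensen and $\sqrt{1-t}\leq 1-t/2$. One small correction: in the paper the additive $\alpha$ is used \emph{only} in Case~1; in Case~2 the dyadic slack $\alpha 2^{l^*-1}\in[r/2,r]$ is absorbed entirely into the multiplicative constant (it costs the factor of $4$ you see in $1/(4\pi A_d)$ before Jensen, hence the $8$ afterwards), so your final sentence mis-attributes where the $\alpha$ goes, but this does not affect the validity of the argument.
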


\begin{proof}
\begin{itemize}
    \item Case 1: $\|z_1-z_2\|_2 \leq \alpha$. Note that $\Exp[\min_{z\in \vars(z_1)} \|z-z_2\|_2] \leq \|z_1-z_2\|_2 = \alpha$, since $z_1 \in \vars(z_1)$.  
    
    \item Case 2: $\|z_1-z_2\|_2 \ge \alpha$. In this case, there exists $l^* \in \{1,...,\lceil \log_2(\diam(\om)/\alpha)\rceil\}$ such that $2^{l^*-1} \alpha\leq \|z_1-z_2\|_2\leq 2^{l^*}\alpha$. 
    Now, we proceed as follows,
    \begin{align*}
        \Exp&\left[\min_{z\in \vars(z_1)} \|z-z_2\|_2^2\right] = \Exp\bigg[\min_{k \in [2], l \in [L]} \|\proj(z_1 + N_t^{k,l}) - z_2\|^2 \bigg]\\
        &\leq \Exp\bigg[\min_{k \in [2], l \in [L]} \|z_1 + N_t^{k,l} - z_2\|^2 \bigg]\\
        &= \Exp\bigg[\min_{k \in [2], l \in [L]} \|z_1-z_2\|_2^2 - 2\langle N_t^{k,l}, z_1 - z_2\rangle + \|N_t^{k,l}\|^2  \bigg]\\
        &\leq \min_{l \in [L]} \Exp\bigg[\min_{k \in [2]} \|z_1-z_2\|_2^2 - 2\langle N_t^{k,l}, z_1 - z_2\rangle + \|N_t^{k,l}\|^2  \bigg] \\
        &\leq \|z_1-z_2\|_2^2 + \min_{l \in [L]} \bigg\{- 2\Exp\bigg[\max_{k \in [2]} \langle N_t^{k,l}, z_1 - z_2\rangle  \bigg] + \Exp\bigg[\max_{k\in[2]}\|N_t^{k,l}\|^2\bigg]\bigg\}\\
        &\leq \|z_1-z_2\|_2^2 + \min_{l \in [L]} \bigg\{- 2\frac{\sigma_l\|z_1-z_2\|_2}{\sqrt{\pi}} + \sigma_{l}^2[(\sqrt{d} + \log(2))^2 + \log(2)]\bigg\}\\
        &\leq \|z_1-z_2\|_2^2 - \frac{2\sigma_{l^*}2^{l^*-1}\alpha}{\sqrt{\pi}} + \sigma_{l^*}^2[(\sqrt{d} + \log(2))^2 + \log(2)]\\
        &= \|z_1-z_2\|_2^2 - \frac{(\alpha 2^{l^*-1})^2}{\pi[(\sqrt{d} + \log(2))^2 + \log(2)]}\\
        &\leq \|z_1-z_2\|_2^2\left(1 - \frac{1}{4\pi[(\sqrt{d} + \log(2))^2 + \log(2)]}\right),
    \end{align*}
    where we used the lower bound for the expected maximum of Gaussians from \cite{kamath2015boundsmaxgaussians} and the upper bound for the expected maximum of chi-squared from \cite{boucheron2013concentration} (Example 2.7), and the facts that $\sigma_l = \frac{\alpha 2^{l-1}}{\sqrt{\pi}[(\sqrt{d} + \log(2))^2 + \log(2)]}$ and $2^{l^*-1} \alpha\leq \|z_1-z_2\|_2\leq 2^{l^*}\alpha$. 
    Jensen's inequality together with $\sqrt{1-t} \leq 1-t/2$ for $t\leq 1$ allow us to conclude. 
\end{itemize}
\end{proof}

\begin{lemma}\label{lem:vars_decreases_wasserstein}
    Let $\om$ be closed and convex and $S, S' \in \cup_{m\ge1}\Omega^m$ be two datasets, $V = \vars(S')$ and $\hat\mu = \nn(S,V, \rho)$ (see Algorithm \ref{alg:dp-nn-hist}). Suppose that for any $z_1\in \Omega, z_2 \in S$, $\Exp[\min_{z\in \vars(z_1)} \rho(z,z_2)] \leq (1-\gamma)\rho(z_1,z_2) + \alpha$. Then, 
    \[\Exp[W_1(\mu_S,\hat\mu)] \leq (1-\gamma)W_1(\mu_S, \mu_{S'}) + \alpha.\]
\end{lemma}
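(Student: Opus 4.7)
The plan is to construct an explicit coupling between $\mu_S$ and $\hat\mu$ whose expected transportation cost is bounded by $(1-\gamma)W_1(\mu_S,\mu_{S'})+\alpha$, and then invoke the definition of $W_1$. The key structural fact I would exploit is that $\hat\mu$ is the nearest-neighbor pushforward of $\mu_S$ onto $V$: by construction of $\nn$, it places mass $1/|S|$ at the NN in $V$ of each $S[j]$. Therefore, using the coupling that matches every $S[j]$ to that NN, we immediately get the one-sided bound
\[
W_1(\mu_S,\hat\mu)\;\le\;\frac{1}{|S|}\sum_{j=1}^{|S|}\min_{i\in[|V|]}\rho(S[j],V[i]).
\]

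Next, I would bring in the structure $V=\cup_{z'\in S'}\vars(z')$. For any fixed index $k$, dropping all variations not produced by $S'[k]$ can only increase the minimum, so $\min_{i}\rho(S[j],V[i])\le \min_{z\in\vars(S'[k])}\rho(S[j],z)$. To exploit this optimally, I would fix an optimal coupling $\gamma^{*}=\sum_{j,k}\gamma^{*}_{jk}\,\delta_{(S[j],S'[k])}$ attaining $W_1(\mu_S,\mu_{S'})$, with marginals $\sum_k\gamma^{*}_{jk}=1/|S|$ and $\sum_j\gamma^{*}_{jk}=1/|S'|$. Then, writing $1=\sum_k|S|\gamma^{*}_{jk}$ and applying the per-index bound under the sum, I obtain
\[
\min_{i\in[|V|]}\rho(S[j],V[i])\;\le\;\sum_{k}|S|\,\gamma^{*}_{jk}\;\min_{z\in\vars(S'[k])}\rho(z,S[j]).
\]

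At this point I would take expectation over the randomness in $\vars(\cdot)$ and invoke the hypothesis $\Exp[\min_{z\in\vars(z_1)}\rho(z,z_2)]\le(1-\gamma)\rho(z_1,z_2)+\alpha$ with $z_1=S'[k]$ and $z_2=S[j]$. Substituting back and summing over $j$, the weights collapse via $\sum_{j,k}\gamma^{*}_{jk}=1$ and $\sum_{j,k}\gamma^{*}_{jk}\rho(S[j],S'[k])=W_1(\mu_S,\mu_{S'})$, yielding the claim.

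I do not expect any serious obstacle: the only delicate point is the dependence structure between the different $\vars(S'[k])$'s, which could complicate taking the minimum over all of $V$ jointly; this is precisely why I first relax to a minimum over a single $\vars(S'[k])$ before taking expectations, so the assumption (which is pointwise in $z_1$) can be applied cleanly one term at a time. A minor bookkeeping point is that $|S|$ and $|S'|$ may differ, which is why the coupling must be expressed through the weights $\gamma^{*}_{jk}$ rather than a permutation; this is handled automatically by the Kantorovich formulation.
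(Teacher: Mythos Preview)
Your proposal is correct and follows essentially the same route as the paper: both bound $W_1(\mu_S,\hat\mu)$ by $\frac{1}{|S|}\sum_j\min_{z\in V}\rho(S[j],z)$ via the nearest-neighbor coupling, then introduce a coupling $\pi'\in\Pi(\mu_S,\mu_{S'})$ to replace $\min_{z\in V}$ by a $\pi'$-weighted average of $\min_{z\in\vars(S'[k])}$, apply the hypothesis termwise, and collapse the sums. The only cosmetic difference is that the paper works with an arbitrary $\pi'$ and takes the infimum at the end, whereas you fix an optimal $\gamma^*$ from the start; also, your ordering (bound $W_1$ pointwise, then take expectation) is slightly cleaner than the paper's $\Exp/\inf$ swap.
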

\begin{proof}
    The general idea is the following. We will exploit the fact that the $1$-Wasserstein distance is defined as an the infimum of expected transport cost over couplings. First, we will bound $W_1(\mu_S,\hat\mu)$ by constructing an specific coupling between $\mu_S,\hat\mu$ that comes from the NN histogram. Then, by using Lemma \ref{lem:improvement_lemma} we will show that the resulting expression is upper bounded by the expected transport cost with respect to any coupling between $\mu_S$ and $\mu_{S'}$, up to an additive $\alpha$ term. By taking infimum over the couplings we will obtain the final expression.

    Let $\hat \pi \in \Pi(\mu_S,\hat \mu)$ be the coupling defined by 
    \[\hat \pi(S[i], V[j]) = \frac{\mathbbm{1}\big(j = \min \big\{k : k\in \arg \min_{j \in [|V|]} \rho(S[i], V[j])\big\}\big)}{|S|},\]
    for every $i\in [|S|], j \in [|V|]$. Let's verify that $\hat \pi \in \Pi(\mu_S,\hat \mu)$. Clearly $\hat \pi(S[i], V[j]) \ge 0$.  Furthemore,
    \[ \sum_{j\in [|V|]}\hat \pi(S[i], V[j]) = \frac{1}{|S|}\]
    equals the mass that $\mu_S$ assigns to $S[i]$, and
    \[\sum_{i\in [|S|]}\hat \pi(S[i], V[j]) = \sum_{j\in [|V|]}\frac{\mathbbm{1}\big(j = \min \big\{k : k\in \arg \min_{j \in [|V|]} \rho(S[i], V[j])\big\}\big)}{|S|}\]
    equals the mass that $\hat \mu$ assigns to $V[j]$, since $\hat\mu$ comes from a nearest neighbor histogram according to Algorithm \ref{alg:dp-nn-hist} ($\hat\mu = \nn(S,V, \rho)$). Hence, the marginals of $\hat \pi$ are $\mu_S$ and $\hat\mu$, which proves that $\hat \pi \in \Pi(\mu_S,\hat \mu)$. Next, note that 
    \begin{align*}
        \Exp[W_1(\mu_S,\hat \mu_{t+1})] &= \Exp\left[\inf_{\pi \in \Pi(\mu_S,\hat \mu_{t+1})}\int_{\Omega\times\Omega} \rho(z_1,z_2)d\pi(z_1,z_2)\right]\\
        &= \Exp\left[\inf_{\pi \in \Pi(\mu_S,\hat \mu_{t+1})}\sum_{i\in[|S|]}\sum_{j\in[|V|]} \rho(S[i],V[j])\pi(S[i],V[j])\right]\\
        &\leq \inf_{\pi \in \Pi(\mu_S,\hat \mu_{t+1})}\Exp\left[\sum_{i\in[|S|]}\sum_{j\in[|V|]} \rho(S[i],V[j])\pi(S[i],V[j])\right]\\
        &\leq \Exp\left[\sum_{i\in[|S|]}\sum_{j\in[|V|]} \rho(S[i],V[j])\hat\pi(S[i],V[j])\right]\\
        &= \Exp\left[\sum_{i\in[|S|]}\sum_{j\in[|V|]} \rho(S[i],V[j])\frac{\mathbbm{1}\big(j = \min \big\{k : k\in \arg \min_{j \in [|V|]} \rho(S[i], V[j])\big\}\big)}{|S|}\right]\\ 
        &= \Exp\left[\sum_{i} \frac{\min_{z \in V}\rho(S[i], z)}{|S|}\right].
    \end{align*}
    To continue the chain of inequalities, let us consider any coupling $\pi' \in \Pi(\mu_S,\mu_{S'})$. By definition it satisfies $\sum_{j\in [|S'|]} \pi'(S[i], S'[j]) = \frac{1}{|S|}$. Hence, 
    \begin{align*}
        \Exp\left[\sum_{i} \frac{\min_{z \in V}\rho(S[i], z)}{|S|}\right] &= \Exp\left[\sum_{i\in [|S|]} \left(\sum_{j\in [|S'|]} \pi'(S[i], S'[j])\right) \min_{z \in V}\rho(S[i], z) \right]\\
        &\leq \Exp\left[\sum_{i\in [|S|]} \sum_{j\in [|S'|]} \pi'(S[i], S'[j]) \min_{z \in \vars(S'[j])}\rho(S[i], z) \right]\\
        &= \sum_{i\in [|S|]} \sum_{j\in [|S'|]} \pi'(S[i], S'[j]) \Exp\left[\min_{z \in \vars(S'[j])}\rho(S[i], z) \right]\\
        &\leq \sum_{i\in [|S|]} \sum_{j\in [|S'|]} \pi'(S[i], S'[j]) \max\{(1-\gamma)\rho(S[i], S'[j]) , \alpha\}\\
        &\leq (1-\gamma)\sum_{i\in [|S|]} \sum_{j\in [|S'|]} \pi'(S[i], S'[j]) \rho(S[i], S'[j]) + \alpha\\
        &= (1-\gamma)\int_{\Omega\times\Omega} \rho(z_1,z_2)d\pi'(z_1,z_2) + \alpha,
    \end{align*}
    where the second inequality comes from the assumption that for any $z_1\in \Omega, z_2 \in S$, $\Exp[\min_{z\in \vars(z_1)} \rho(z,z_2)] \leq (1-\gamma)\rho(z_1,z_2) + \alpha$.
    Putting the inequalities together, we conclude that for any $\pi' \in \Pi(\mu_S,\mu_{S'})$ it holds that 
    \[\Exp[W_1(\mu_S,\hat \mu_{t+1})] \leq (1-\gamma)\int_{\Omega\times\Omega} \rho(z_1,z_2)d\pi'(z_1,z_2) + \alpha.\]
    Taking infimum over the couplings on the right hand side finishes the proof. 
\end{proof}

\begin{lemma}\label{lem:BL_error_noisy_histogram}
    Let $V \in \Omega^{n}$. Let $\mu \in \Delta_{n}$ be a probability measure supported on $V$. Let $Z = (Z_1,...,Z_{n}) \sim \N(0,\sigma^2I_{n})$. Then,
    \[\Exp_Z[\bl(\mu, \mu + Z)] \leq n \sigma G_{n}(\fbl).\]
\end{lemma}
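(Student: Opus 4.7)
The plan is to directly unpack the definition of $\bl$ in terms of the Gaussian noise coordinates, recognize the resulting quantity as a Gaussian process indexed by $\fbl$, and then compare with the definition of Gaussian complexity.

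First I would write $\mu = \sum_{i=1}^n \mu[i]\delta_{V[i]}$ and interpret $\mu+Z$ as the signed measure $\sum_{i=1}^n (\mu[i]+Z_i)\delta_{V[i]}$, both supported on $V$. For any $f\in\fbl$, the integrals telescope: $\int f\, d\mu - \int f\, d(\mu+Z) = -\sum_{i=1}^n Z_i f(V[i])$. Taking the supremum over $f\in\fbl$ yields
\[
\bl(\mu,\mu+Z) \;=\; \sup_{f\in\fbl} \sum_{i=1}^n (-Z_i)\, f(V[i]).
\]
Since $\fbl$ is symmetric under negation ($f\in\fbl \Rightarrow -f\in\fbl$), the supremum over $f$ equals the supremum over $|f|$-style quantities; equivalently, since $-Z \stackrel{d}{=} Z$, the distribution of the right-hand side is unchanged if we drop the minus sign inside.

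Next I would pull out the scale by writing $Z_i = \sigma \tilde Z_i$ with $\tilde Z_i \stackrel{iid}{\sim} \mathcal N(0,1)$, which gives
\[
\Exp_Z[\bl(\mu,\mu+Z)] \;=\; \sigma\, \Exp_{\tilde Z}\!\left[\sup_{f\in\fbl} \sum_{i=1}^n \tilde Z_i f(V[i])\right].
\]
Then I would compare with the definition of $G_n(\fbl)$. Bounding the supremum by the supremum of absolute values and applying the definition,
\[
\Exp_{\tilde Z}\!\left[\sup_{f\in\fbl} \sum_{i=1}^n \tilde Z_i f(V[i])\right] \;\le\; \Exp_{\tilde Z}\!\left[\sup_{f\in\fbl} \bigg|\sum_{i=1}^n \tilde Z_i f(V[i])\bigg|\right] \;\le\; n\, G_n(\fbl),
\]
where the last inequality uses that the specific points $V[1],\dots,V[n]\in\Omega$ are a valid choice in the supremum over $z_1,\dots,z_n$ defining $G_n(\fbl)$. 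Combining the two displays gives $\Exp_Z[\bl(\mu,\mu+Z)] \le n\sigma G_n(\fbl)$, which is the claim.

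There is no real obstacle here; the argument is just careful bookkeeping. The only subtle point worth stating cleanly is that because $\mu$ and $\mu+Z$ share the same discrete support $V$, the difference of integrals collapses exactly to the linear functional $f\mapsto -\sum_i Z_i f(V[i])$ with no dependence on $\mu$ itself, so the bound holds uniformly in the base measure $\mu$. This is precisely what lets us absorb the supremum into a Gaussian-complexity term.
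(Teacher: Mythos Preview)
Your proof is correct and follows essentially the same approach as the paper: unpack $\bl$ over the shared discrete support $V$, rescale the Gaussians, and bound by the definition of $G_n(\fbl)$. You are in fact slightly more careful than the paper about the sign (handling $-\sum_i Z_i f(V[i])$ via symmetry of $\fbl$ or of $Z$) and about inserting the absolute value before invoking the definition of Gaussian complexity.
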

\begin{proof}
    By definition of $\bl$, we have
    \[
    \bl(\mu, \mu + Z) = \sup_{f\in\fbl} \int_\Omega f (d\mu - d(\mu + Z)) = \sup_{f\in\F} \sum_{i \in [n]} f(V[i])Z_i.\]
    Hence, 
    \[\Exp[\bl(\mu, \mu + Z)] = \Exp\left[\sup_{f\in\F} \sum_{i \in [n]} f(V[i])Z_i\right] = n \sigma \Exp\left[\sup_{f\in\F} \frac{1}{n}\sum_{i \in [n]} \frac{f(V[i])Z_i}{\sigma}\right] \leq n \sigma G_{n}(\fbl).\]
\end{proof}

\subsection{Upper bounds}

We need upper bounds on Gaussian complexity terms and on the convergence of empirical measures in $W_1$. Let us start showing an upper bound on the Gaussian complexity via Dudley chaining. This proof follows closely the proof of upper bounds for Laplacian Complexity given in \cite{Vershynin23DPsyntheticdata}. 

\begin{lemma}[Gaussian Complexity bound]\label{lem:gaussian_comp_generalbound}
    Let $(\om, \rho)$ be a metric space and $\F$ a set of functions on $\om$. Suppose $\diam(\om) \leq D$. Then
    \[G_n(\F) \leq C\inf_{\alpha\ge0} \left[\alpha + \int_{\alpha}^{D} \sqrt{\log(\N(\F, \|\cdot\|_{\infty};\beta))}d\beta\right],\]
    where $C$ is an absolute constant. 
\end{lemma}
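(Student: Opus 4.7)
The plan is to apply Dudley's entropy integral to the centered Gaussian process $X_f := \frac{1}{n}\sum_{i=1}^n x_i f(z_i)$ indexed by $f \in \F$, for a fixed (worst-case) choice of $z_1,\ldots,z_n \in \om$. Its increments satisfy $\operatorname{Var}(X_f - X_g) = \frac{1}{n^2}\sum_i (f(z_i)-g(z_i))^2 \leq \frac{1}{n}\|f-g\|_\infty^2$, so the canonical sub-Gaussian pseudometric $d_\Psi(f,g) := \sqrt{\operatorname{Var}(X_f-X_g)}$ obeys $d_\Psi(f,g) \leq \|f-g\|_\infty/\sqrt{n}$. Consequently every $\beta$-cover of $\F$ in $\|\cdot\|_\infty$ is a $(\beta/\sqrt n)$-cover in $d_\Psi$, giving $\N(\F, d_\Psi;\eta) \leq \N(\F,\|\cdot\|_\infty; \sqrt n\,\eta)$, and the $d_\Psi$-diameter of $\F$ is at most $D/\sqrt n$ (since the relevant $\fbl$ lives inside a $\|\cdot\|_\infty$-ball of radius $D$).

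Next I would invoke the truncated form of Dudley's chaining inequality: for any cutoff $\eta_0 \ge 0$ and any anchor $f_0 \in \F$,
\[\Exp\left[\sup_{f\in\F}(X_f - X_{f_0})\right] \leq C\eta_0 + C\int_{\eta_0}^{D/\sqrt n} \sqrt{\log \N(\F, d_\Psi;\eta)}\,d\eta.\]
This is the standard telescoping argument over nested dyadic $d_\Psi$-nets combined with the Gaussian maximal inequality $\Exp[\max_{j\le N} Z_j] \lesssim \sigma\sqrt{\log N}$, with the $C\eta_0$ term absorbing the residual error at scales finer than $\eta_0$. Substituting the covering-number comparison, performing the change of variables $\beta = \sqrt n\,\eta$, and relabelling $\alpha := \sqrt n\,\eta_0$ rewrites the bound as $(C/\sqrt n)\bigl[\alpha + \int_\alpha^D \sqrt{\log \N(\F, \|\cdot\|_\infty;\beta)}\,d\beta\bigr]$. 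Passing from the one-sided supremum to the absolute value in the definition of $G_n(\F)$ costs at most a factor of $2$ by Gaussian symmetry (adjoin $-\F$ and absorb $\Exp[|X_{f_0}|]$ into the $\alpha$ term), and taking the supremum over $z_1,\ldots,z_n \in \om$ is harmless since the bound depends on them only through $D$. Minimising over $\alpha$ then yields the stated inequality.

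I do not anticipate any genuinely hard step: the whole argument is classical empirical-process theory, and its structure mirrors the Laplacian-complexity bound of \cite{Vershynin23DPsyntheticdata}, with Gaussian tails $e^{-t^2/2}$ producing $\sqrt{\log \N}$ in place of their $\log \N$. The only point that requires a little care is showing that the below-cutoff contribution to the chaining truly is $O(\eta_0)$ rather than, say, $O(\eta_0\sqrt{\log \N(\F, d_\Psi;\eta_0)})$; this is routine and follows by bounding the supremum within each terminal $d_\Psi$-ball by that ball's diameter instead of invoking a further maximal inequality.
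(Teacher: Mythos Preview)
Your approach is essentially the paper's: Dudley chaining, with the paper working directly over dyadic $\|\cdot\|_\infty$-nets $T_j$ at scales $\varepsilon_j=2^{-j}$ while you phrase it in the canonical $d_\Psi$ metric and then pass to $\|\cdot\|_\infty$ via the covering-number comparison. The link terms and the change of variables go through exactly as you say.

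The one real gap is the below-cutoff term. Your claim that it is $O(\eta_0)$, justified by ``bounding the supremum within each terminal $d_\Psi$-ball by that ball's diameter'', is false for a Gaussian process: the expected supremum over a $d_\Psi$-ball of radius $\eta_0$ is not controlled by $\eta_0$ alone. (Take $\F=\{f:\|f\|_\infty\le 1\}$ on $n$ distinct points; the $d_\Psi$-diameter is $2/\sqrt n$, yet $\Exp[\sup_{f}X_f]=\Exp[|x_1|]=\sqrt{2/\pi}$ for every $n$, so setting $\eta_0$ equal to the diameter would give a bound $\to 0$.) What the paper does instead is use the explicit pointwise structure of this particular process: at the finest $\|\cdot\|_\infty$-scale $\varepsilon_m$ one has $|X_f - X_{\pi_{m,f}}| \le \varepsilon_m\cdot\frac1n\sum_i|x_i|$ almost surely, whence $\Exp\big[\sup_f|X_f - X_{\pi_{m,f}}|\big]\le \varepsilon_m\,\Exp[|x_1|]\le\varepsilon_m$. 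So the cutoff term is of order $\alpha$ (the $\|\cdot\|_\infty$ scale), not $\alpha/\sqrt n$, and the final bound is $C\big[\alpha+\tfrac{1}{\sqrt n}\int_\alpha^D\sqrt{\log\N(\F,\|\cdot\|_\infty;\beta)}\,d\beta\big]$ rather than your $\tfrac{C}{\sqrt n}\big[\alpha+\int_\alpha^D\dots\big]$. As a side note, the lemma as printed omits the $1/\sqrt n$ before the integral; the paper's own proof derives it and Corollary~\ref{cor:Gauss-complexity-euclideanball} relies on it.
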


\begin{proof}
For each $j \in \mathbb{Z}$, define $\varepsilon_j = 2^{-j}$ and let $T_j$ be a $\varepsilon_j$-covering of $\F$, such that $|T_j| = N(\F, \|\cdot\|_{\infty} ;\varepsilon_j )$ (i.e, the cardinality of the cover is the covering number w.r.t.~$\|\cdot\|_{\infty}$). Fix $f \in \F$ and let $\pi_{j,f}$ be the closest function to $f$ in $T_j$. Since we are assuming that $\Omega$ has a bounded diameter, there exists $j_0 \in \mathbb{Z}$ such that 
for all $k\leq j_0 $, $|T_k| = 1$. Then, for any set $\{z_i\}_{i=1}^n$, 
\[\sup_{f\in\cal F}\frac{1}{n} \bigg|\sum_{i \in [n]} x_if(z_i)\bigg| \leq \sup_{f\in\cal F}\frac{1}{n} \bigg|\sum_{i \in [n]} x_i[f(z_i) - \pi_{m,f}(z_i)]\bigg| + \sum_{k = j_0+1}^m\sup_{f\in\cal F}\frac{1}{n} \bigg|\sum_{i \in [n]} x_i[\pi_{k,f}(z_i) - \pi_{k-1,f}(z_i)]\bigg|,\] for all $m \in \mathbb{Z}, m > j_0$. 

We will give in-expectation bounds on the two terms on the right hand side. First, we trivially have 
\[\Exp\left[\sup_{f\in\cal F}\frac{1}{n} \bigg|\sum_{i \in [n]} x_i[f(z_i) - \pi_{m,f}(z_i)]\bigg|\right] \leq \Exp\left[\frac{1}{n} \sum_{i \in [n]} |x_i|\varepsilon_m\right] \leq \varepsilon_m = 2^{-m}.\]
For the second term, note that
\begin{align*}
    \frac{1}{n}|\pi_{k,f}(z_i) - \pi_{k-1,f}(z_i)| &\leq  \frac{1}{n}|\pi_{k,f}(z_i) - \pi_{k-1,f}(z_i)| + \frac{1}{n}|f(z_i) - f_{k-1}(z_i)|\\
    &\leq \frac{\varepsilon_{k} + \varepsilon_{k-1}}{n} \\
    &\leq \frac{3\varepsilon_k}{n}.
\end{align*}
Hence, $\frac{1}{n} \sum_{i \in [n]} x_i[\pi_{k,f}(z_i) - \pi_{k-1,f}(z_i)]$ is a $(3\varepsilon_k/\sqrt{n})$-subGaussian random variable. We conclude that 
\[\Exp\left[\sup_{f\in\cal F}\frac{1}{n} \bigg|\sum_{i \in [n]} x_i[\pi_{k,f}(z_i) - \pi_{k-1,f}(z_i)]\bigg|\right] \leq C \frac{\varepsilon_k\sqrt{\log(\N(\F, \|\cdot\|_{\infty};\varepsilon_k))}}{\sqrt{n}},\]
since the maximum is over at most $\N(\F, \|\cdot\|_{\infty};\varepsilon_k)^2$ random variables, each of which is $(3\varepsilon_k/\sqrt{n})$-subGaussian. 

Putting everything together, we conclude that 
\begin{align*}
    \Exp\left[\sup_{f\in\cal F}\frac{1}{n} \bigg|\sum_{i \in [n]} x_if(z_i)\bigg|\right] &\leq C\left[2^{-m} + \sum_{k = j_0+1}^m \frac{\varepsilon_k\sqrt{\log(\N(\F, \|\cdot\|_{\infty};\varepsilon_k))}}{\sqrt{n}}\right]\\
    &\leq C'\inf_{\alpha\ge0} \left[\alpha + \frac{1}{\sqrt{n}}\int_{\alpha}^{\infty} \sqrt{\log(\N(\F, \|\cdot\|_{\infty};\beta))}d\beta\right].
\end{align*}
Finally, since $\N(\F, \|\cdot\|_{\infty};\beta) = 1$ for $\beta > D$, the integral above can be computed between $\alpha$ and $D$. This finishes the proof.
\end{proof}

\begin{corollary}
\label{cor:Gauss-complexity-euclideanball}
    Let $\om \subset \RR^d$ and $\rho(\cdot,\cdot) = \|\cdot-\cdot\|_2$. If $\diam(\om)\leq D$, then 
    \[G_n(\fbl) \leq \begin{cases}\frac{9\sqrt{\log(2\sqrt{n})}D}{\sqrt{n}} & d = 1\\ \frac{10D\log(2\sqrt{n})^{3/2}}{\sqrt{n}} & d = 2\\ \frac{10D\sqrt{\log(2n^{1/d}(d/2-1)^{2/d})}}{n^{1/d}(d/2-1)^{2/d}} & d \ge 3\end{cases}.\]
\end{corollary}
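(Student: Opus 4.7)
The plan is to apply Lemma~\ref{lem:gaussian_comp_generalbound} with $\F = \fbl$, so the work reduces to (i) estimating the $\|\cdot\|_\infty$ covering number of $\fbl$ on $\om \subset \RR^d$ with $\diam(\om) \leq D$, and (ii) controlling the resulting Dudley entropy integral by a careful choice of the truncation parameter $\alpha$.

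For step (i) I would use the classical covering construction for bounded Lipschitz classes: take a $\beta/c$-covering of $\om$ in $\ell_2$, whose cardinality is at most $(CD/\beta)^d$ by a standard volume argument on Euclidean balls of bounded diameter, and discretize the range $[-D,D]$ of admissible function values at each center at resolution $\beta$, which costs $O(D/\beta)$ levels per center. The 1-Lipschitz constraint glues compatible discretized assignments into a genuine function on $\om$ uniformly $\beta$-close to the original, yielding
\[\log \N(\fbl, \|\cdot\|_\infty; \beta) \lesssim (D/\beta)^d \log(D/\beta).\]
For step (ii), I would plug this into the bound of Lemma~\ref{lem:gaussian_comp_generalbound} and pull the monotone factor $\sqrt{\log(D/\beta)}$ out of the integral as $\sqrt{\log(D/\alpha)}$ (valid since $\beta \geq \alpha$ on the interval of integration). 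This reduces the problem to computing the ``clean'' integral $\int_\alpha^D (D/\beta)^{d/2} d\beta$, which has three qualitatively different regimes: it is bounded by a constant multiple of $D$ when $d = 1$, equals $D\log(D/\alpha)$ when $d = 2$, and equals $\frac{D^{d/2}(\alpha^{1-d/2} - D^{1-d/2})}{d/2-1} \leq \frac{D(D/\alpha)^{d/2-1}}{d/2-1}$ when $d \geq 3$. This is exactly what drives the case split in the corollary's statement.

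It remains to optimize $\alpha$. For $d \in \{1,2\}$ the integral is essentially dimension-free, so choosing $\alpha = D/\sqrt{n}$ is enough: the first term $\alpha$ contributes $D/\sqrt{n}$, and the second term contributes $\sqrt{\log(2\sqrt n)} \cdot D/\sqrt{n}$ for $d = 1$ and $\log(2\sqrt n)^{3/2} \cdot D/\sqrt{n}$ for $d = 2$, matching the statement. For $d \geq 3$ the polynomial blow-up $(D/\alpha)^{d/2-1}$ forces a genuine balance; setting $\alpha \asymp D/\bigl[(d/2-1)^{2/d} n^{1/d}\bigr]$ so that $\alpha^{d/2} \asymp D^{d/2}/[(d/2-1) \sqrt{n}]$ equalizes the two contributions (up to the $\sqrt{\log(D/\alpha)}$ factor) and produces the claimed $\sqrt{\log(2 n^{1/d}(d/2-1)^{2/d})} \cdot D/[(d/2-1)^{2/d} n^{1/d}]$ rate. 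The main obstacle is bookkeeping: nailing down the explicit numerical constants ($9$ and $10$) and the exact log arguments requires tracking the covering-number constants, verifying the $\sqrt{\log(D/\alpha)}$ extraction tightly, and checking the optimization is indeed attained (rather than just balanced to within a constant) at the declared $\alpha^\star$; a subtle point is handling the boundary cases (e.g.\ $n$ small, so $D/\alpha$ close to $1$) so that the $2$ inside the logarithm is genuinely needed.
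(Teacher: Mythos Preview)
Your proposal is correct and follows essentially the same approach as the paper: apply Lemma~\ref{lem:gaussian_comp_generalbound}, bound $\log \N(\fbl,\|\cdot\|_\infty;\beta)$ by $(CD/\beta)^d\log(C'D/\beta)$ via a covering of $\om$ and discretization of function values (the paper cites \cite{gottlieb2016adaptive} for this step rather than describing it directly), pull the $\sqrt{\log}$ factor out of the Dudley integral, compute $\int_\alpha^D (D/\beta)^{d/2}\,d\beta$ in the three regimes $d=1,2,\geq 3$, and balance $\alpha$ exactly as you indicate. The paper's choices $\alpha = 2\sqrt{5}D/\sqrt{n}$, $5D/\sqrt{n}$, and $5D/[n^{1/d}(d/2-1)^{2/d}]$ match yours up to the explicit constants you flagged as the remaining bookkeeping.
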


\begin{proof}
    From \cite{gottlieb2016adaptive} (Lemma 4.2), we know that 
    \[\N(\F_L,\|\cdot\|_{\infty} ; \beta) \leq \left(\frac{8}{\beta}\right)^{\N(\om, \|\cdot\|_2; \beta/2L)},\] 
    where $\F_L$ is the class of $L$-Lipschitz functions mapping $\om$ to $[-1,1]$. It is easy to see that $D\F_{1/D} = \fbl$, since $\fbl$ contains all the $1$-Lipschitz functions mapping $\om$ to $[-D,D]$. Then, for any $\beta \in [0,D]$, 
    $\log(\N(\fbl,\|\cdot\|_{\infty} ; \beta))$ can be bounded as follows 
    \begin{align*}
       \log(\N(\fbl,\|\cdot\|_{\infty} ; \beta)) &= \log(\N(D\F_{1/D},\|\cdot\|_{\infty} ; \beta))\\
       &=\log(\N(\F_{1/D},\|\cdot\|_{\infty} ; \beta/D))\\
       &\leq \N(\om, \|\cdot\|_2; \beta /2)\log\left(\frac{8D}{\beta}\right) \\
       &\leq \N(DB_2, \|\cdot\|_2; \beta /2)\log\left(\frac{8D}{\beta}\right)\\
       &= \N(B_2, \|\cdot\|_2; \beta/2D)\log\left(\frac{8D}{\beta}\right)\\
       &\leq \left(\frac{4D}{\beta} + 1\right)^d\log\left(\frac{8D}{\beta}\right)\\
       &\leq \left(\frac{5D}{\beta}\right)^d\log\left(\frac{8D}{\beta}\right)
    \end{align*}
    where the last inequality uses the fact that $\beta \in [0,D]$.
    Replacing this expression in the bound given by Lemma \ref{lem:gaussian_comp_generalbound} we get
    \begin{align*}
        G_n(\fbl) &\leq \inf_{\alpha \geq 0} \left(\alpha + \frac{1}{\sqrt{n}}\int_{\alpha}^{D} \left(\frac{5D}{\beta}\right)^{d/2}\sqrt{\log\left(\frac{8D}{\beta}\right)}d\beta\right)\\
        &\leq \inf_{\alpha \geq 0} \left(\alpha + \frac{\sqrt{\log(8D/\alpha)}}{\sqrt{n}}\int_{\alpha}^{D} \left(\frac{5D}{\beta}\right)^{d/2} d\beta\right)\\
        &= \inf_{\alpha \geq 0} \left(\alpha + \frac{\sqrt{\log(8D/\alpha)}(5D)^{d/2}}{\sqrt{n}}\int_{\alpha}^{D} \beta^{-d/2} d\beta\right).
    \end{align*}
    We compute the integral for different values of $d$: 
    \begin{itemize}
        \item $d = 1$. In this case,
        \[\int_{\alpha}^{D} \beta^{-d/2} d\beta = 2 (\sqrt{D} - \sqrt{\alpha}) \leq 2\sqrt{D}.\] 
        It follows that 
        \[G_n(\fbl) \leq \inf_{\alpha \geq 0} \left(\alpha + \frac{\sqrt{\log(8D/\alpha)}2\sqrt{5}D}{\sqrt{n}}\right) \leq 2\frac{\sqrt{\log(2\sqrt{n})}2\sqrt{5}D}{\sqrt{n}},\]
        where the last inequality comes from picking $\alpha = \frac{2\sqrt{5}D}{\sqrt{n}}$ in the infimum. 

        \item $d = 2$. In this case, 
        \[\int_{\alpha}^{D} \beta^{-d/2} d\beta = \log(D) - \log(\alpha) = \log(D/\alpha) \leq \log(8D/\alpha).\]
        It follows that 
        \[G_n(\fbl) \leq \inf_{\alpha \geq 0} \left(\alpha + \frac{\log(8D/\alpha)^{3/2}5D}{\sqrt{n}}\right)\leq 2\frac{\log(2\sqrt{n})^{3/2}5D}{\sqrt{n}},\]
        where the last inequality comes from picking $\alpha = \frac{5D}{\sqrt{n}}$ in the infimum. 

        \item $d \ge 3$. In this case, 
        \[\int_{\alpha}^{D} \beta^{-d/2} d\beta = \frac{\alpha^{1-d/2} - D^{1-d/2}}{d/2 - 1} \leq  \frac{\alpha^{1-d/2}}{d/2 - 1}.\]
        It follows that 
        \begin{align*}
            G_n(\fbl) &\leq \inf_{\alpha \geq 0} \left(\alpha + \frac{\sqrt{\log(8D/\alpha)}(5D)^{d/2}}{\sqrt{n}}\frac{\alpha^{1-d/2}}{d/2 - 1}\right)\\
            &\leq 2\frac{\sqrt{\log(2n^{1/d}(d/2-1)^{2/d})}5D}{n^{1/d}(d/2-1)^{2/d}},
        \end{align*}
        where the last inequality comes from picking $\alpha = \frac{5D}{n^{1/d}(d/2-1)^{2/d}}$ in the infimum. 
    \end{itemize}
\end{proof}

The convergence of empirical measures in $1$-Wasserstein is a topic of interest on its own, that has been studied in the past. We obtain the following corollary from \cite{lei2020empiricalwasserstein}. We note the results in there hold more generally for Banach Spaces. 

\begin{lemma}[Corollary of Theorem 3.1 in \cite{lei2020empiricalwasserstein}]\label{lem:empiricalOTrates}
    Let $\mu$ be a probability measure with a finite discrete support on $\RR^d$ contained in $\{x\in \RR^d: \|x - y\|_2 \leq R\}$ for some $y\in\RR^d$, $\mu_N$ the empirical distribution of $N$ {\it iid} samples from $\mu$. 
    Then, there exists an absolute constant $C$, such that, for all $N\ge 1$,
    \[\Exp[W_1(\mu, \mu_N)] \leq CR\begin{cases} N^{-1/2} & d = 1\\ \log(N)N^{-1/2} & d = 2\\ N^{-1/d} & d \ge 3\end{cases}
    .\] 
\end{lemma}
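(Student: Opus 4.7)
The plan is to reduce the ball-of-radius-$R$ case to the unit-ball case by a scaling argument, and then invoke Theorem~3.1 of \cite{lei2020empiricalwasserstein} in its Euclidean form.

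First, I would exploit the translation and scaling properties of $W_1$. Define the affine map $T(x) = (x-y)/R$ and let $\nu = T_\#\mu$, so that $\nu$ is supported on the closed unit Euclidean ball. Writing $\nu_N = T_\#\mu_N$ for the empirical measure of the iid images $T(X_1),\dots,T(X_N) \sim \nu$, a direct coupling argument (push any optimal coupling for $(\nu,\nu_N)$ forward through $T^{-1}$, and vice versa) yields
\[
W_1(\mu,\mu_N) \;=\; R\cdot W_1(\nu,\nu_N).
\]
Hence it suffices to establish the bound, with the factor $R$ stripped off and an absolute constant $C$, for measures supported on the unit ball of $\RR^d$.

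Next, I would invoke Theorem~3.1 of \cite{lei2020empiricalwasserstein} applied to $\nu$. That theorem bounds $\Exp[W_1(\nu,\nu_N)]$ by a Dudley-type entropy integral over the support of $\nu$, and specialized to a bounded subset of $\RR^d$ it produces exactly the three regimes $N^{-1/2}$ for $d=1$, $\log(N)\,N^{-1/2}$ for $d=2$, and $N^{-1/d}$ for $d\ge 3$. The finite-discrete-support hypothesis of our lemma is strictly stronger than what the theorem requires (any sub-Gaussian or compactly supported measure is allowed), so the hypotheses are immediately met; combining with the scaling identity gives the claimed bound with the prefactor $R$.

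For context, the underlying argument mirrors the empirical-process reasoning already used in this paper for the Gaussian complexity (Lemma~\ref{lem:gaussian_comp_generalbound} and Corollary~\ref{cor:Gauss-complexity-euclideanball}). By Kantorovich--Rubinstein duality one has $W_1(\nu,\nu_N) = \sup_{f\in \mathrm{Lip}_1,\, f(0)=0}(\nu-\nu_N)(f)$, after which a standard symmetrization replaces the process by a Rademacher process; Dudley chaining then bounds its expectation by a metric-entropy integral, where the relevant Kolmogorov--Tikhomirov estimate on the unit ball is $\log \N(\mathrm{Lip}_1, \|\cdot\|_\infty, \beta) \leq C\beta^{-d}\log(1/\beta)$. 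The main technical obstacle is the regime $d\ge 3$: the entropy integral diverges at $0$, so one truncates at a level $\alpha>0$, pays a remainder of order $\alpha$, and balances against the truncated integral which is of order $\alpha^{1-d/2}/\sqrt{N}$; the optimum $\alpha$ of order $N^{-1/d}$ produces the $N^{-1/d}$ rate. The case $d=2$ is borderline and yields the extra $\log N$ factor, while $d=1$ is immediate since the integral converges at $0$.
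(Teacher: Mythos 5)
Your proposal is correct and takes essentially the same route as the paper, which simply cites the lemma as a corollary of Theorem~3.1 in \cite{lei2020empiricalwasserstein} without supplying a proof. Your scaling identity $W_1(\mu,\mu_N) = R\,W_1(\nu,\nu_N)$ via push-forward couplings correctly reduces the radius-$R$ ball to the unit ball, after which invoking Lei's theorem gives the three regimes directly; the Dudley-chaining sketch you provide for context (entropy $\log \N(\mathrm{Lip}_1,\|\cdot\|_\infty,\beta)\lesssim \beta^{-d}$, truncation at $\alpha$, remainder $\alpha$ vs.\ $\alpha^{1-d/2}/\sqrt{N}$, balance at $\alpha\sim N^{-1/d}$) is consistent with the underlying argument and is the same style of chaining the paper uses in Lemma~\ref{lem:gaussian_comp_generalbound} for the Gaussian complexity.
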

\section{Simulations with histogram from Section \ref{sec:beyond_worst_case}}\label{sec:simulations_threshold}

We follow the same simulation setup from Section \ref{sec:simulations}. Namely, we consider the sample space to be the unit $\ell_2$ ball in $\RR^2$. We use privacy parameters $\eps = 1, \delta = 10^{-4}$. We set $T = 2\log(n\eps)$, and the noise $\sigma$ is computed with the analytic Gaussian mechanism (\cite{balle2018improvingGaussianmech}, Theorem 8). The rest of the parameters are set as indicated in Theorem~\ref{thm:PEconvergence_euclidean}. We run Algorithm \ref{alg:pe}, with the only difference that we post-process noisy histograms truncating at $H = 0$ and then re-normalize. In Figure \ref{fig:laplace_hist}, we refer to this algorithm as `PE'.

In Section \ref{sec:beyond_worst_case}, we introduced an alternative histogram that is built by adding Laplace noise (only) to the NN entries that are positive and then truncating the noisy entries that fall below $H = 2\log(1/\delta)/(n\eps) + 1/n$, followed by a re-normalization step so that the noisy histogram induces a probability measure over the variations. Running Algorithm \ref{alg:pe} with this histogram (instead of the one that adds Gaussian noise to all the entries of the NN histogram, thresholds at 0 and re-normalizes) is referred to as `PE with Laplace noise+thresholding' in Figure \ref{fig:laplace_hist}.

We argued in Section \ref{sec:beyond_worst_case} that this alternative DP histogram can work very well when the dataset is highly clustered, but it can also lead to vacuous utility guarantees in less favorable cases. We illustrate this in figure \ref{fig:laplace_hist}. 

\begin{figure}[t]
  \centering
  \begin{minipage}[t]{0.48\textwidth}
    \centering
    \includegraphics[width=\linewidth]{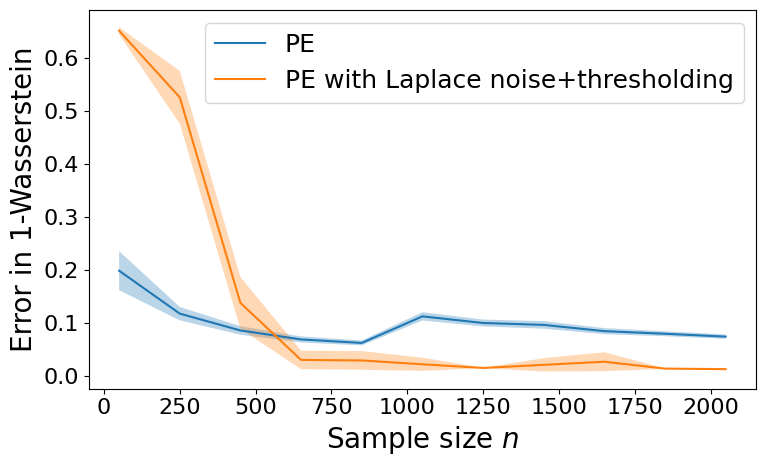}
  \end{minipage}
  \hfill
  \begin{minipage}[t]{0.48\textwidth}
    \centering
    \includegraphics[width=\linewidth]{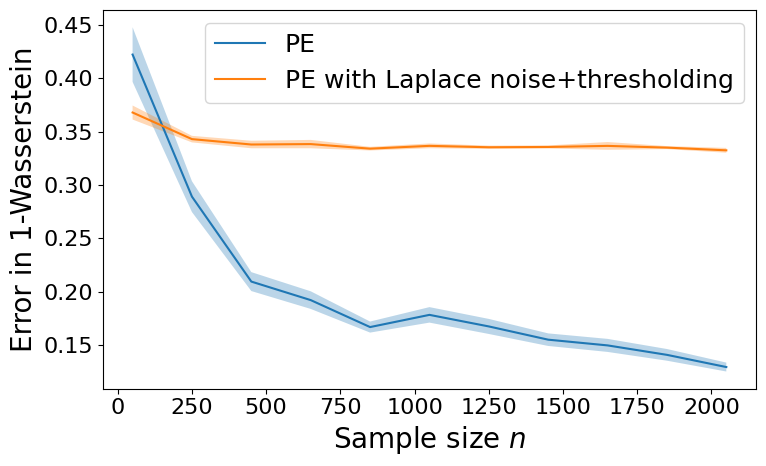}
  \end{minipage}
  \caption{Comparison of the performance of PE with different histograms. Both algorithms, `PE' and `PE with Laplace noise+thresholding' are described in Section \ref{sec:simulations_threshold}. Left: the sensitive dataset is distributed uniformly in an $\ell_2$ ball of radious $0.02$ around the origin (heavily clustered), leading `PE with Laplace noise+thresholding' to preserve the votes signal in the noisy NN histograms, outperforming `PE'. Right: the sensitive dataset is distributed uniformly in an $\ell_2$ ball of radious $0.5$ around the origin (not clustered). The votes signal in the noisy NN histograms is lost by `PE with Laplace noise+thresholding' during the thresholding step. Hence, it performs poorly. Results are averaged over 100 runs}
  \label{fig:laplace_hist}
\end{figure}

\section{Extension to Banach Spaces} \label{sec:PE-nBanac} 

We note that Algorithm \ref{alg:pe} can operate on any Banach space $(\om, \rho)$ as long as it has access to adequate APIs. Below we define a property on the variation API that suffices to prove algorithm convergence.  

\begin{definition}
    Let $1>\gamma > 0, v \in \bbN$ and $\alpha>0$. A (randomized) variation API is a 
    $(\gamma, v,\alpha)$-API for a dataset $S$ if 
    \begin{itemize}
        \item For all $z\in \Omega$, $z \in \vars(z)$ and $|\vars(z)| \leq v$.
        \item For all $z_1 \in \Omega, z_2 \in S$ such that $\rho(z_1,z_2) > \alpha$, 
        \[\Exp_{\vars(\cdot)}\left[\min_{z \in \vars(z_1)} \rho(z,z_2)\right] \leq (1-\gamma) \rho(z_1,z_2). \]
    \end{itemize}
\end{definition}

If $\vars$ satisfies this definition, then we can prove the following result. 
\begin{theorem}[Convergence of PE]\label{thm:PEconvergence_banach}
    Let $(\Omega, \rho)$ be a (sample) Banach space. Suppose $\om$ is compact. Let $1>\gamma > 0$, $v \in \bbN$ and $\alpha>0$. $S_T$ be the output of Algorithm \ref{alg:pe} run on input $S \in \Omega^n, T\in \bbN, n_s \in \bbN, \sigma > 0, \rho(\cdot,\cdot)$. If $\vars(\cdot)$ is an $(\gamma, v, \alpha)$-API for $S$, then 
    \[\Exp\left[W_1(\mu_{S}, \mu_{S_T})\right] \leq (1-\gamma)^T\left[W_1(\mu_{S}, \mu_{S_0}) -  err\right] + err,\]
    where $S_0$ is the dataset created in Line \ref{step:randomAPI}, $err = (\alpha + 2vn_s \sigma \hat G_{vn_s}(\fbl) +\empW_{1}(n_s))/\gamma$, $\empW_{1}(n_s)$ is an upper bound on the rate of convergence of empirical measures from $n_s$ samples to the true measure in $W_1$ and $\hat G_{vn_s}(\fbl)$ is an upper bound on the Gaussian complexity $G_{vn_s}(\fbl)$.
\end{theorem}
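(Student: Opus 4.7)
The plan is to mirror the proof of Theorem~\ref{thm:PEconvergence_euclidean} and abstract away the Euclidean-specific ingredients, packaging them into the parameters $v$, $\alpha$, $\gamma$, $\hat G_{v n_s}(\fbl)$ and $\empW_1(n_s)$. First, I fix an iteration $t$ and apply the triangle inequality for $\bl$ to the signed measures $\hat\mu_{t+1}, \tilde\mu_{t+1}, \mu'_{t+1}$, together with the identity $\bl = W_1$ on probability measures, to obtain
$$W_1(\mu_S, \mu_{S_{t+1}}) \leq W_1(\mu_S, \hat\mu_{t+1}) + \bl(\hat\mu_{t+1}, \tilde\mu_{t+1}) + \bl(\tilde\mu_{t+1}, \mu'_{t+1}) + W_1(\mu'_{t+1}, \mu_{S_{t+1}}).$$
The defining property of the $\bl$-projection onto $\Delta_{|V_{t+1}|}$, together with $\hat\mu_{t+1} \in \Delta_{|V_{t+1}|}$, collapses the middle two terms into $2\bl(\hat\mu_{t+1}, \tilde\mu_{t+1})$, recovering the master inequality of the Euclidean proof.

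Next, I bound each of the three surviving terms in conditional expectation $\Exp_t[\cdot]$. For the first, Lemma~\ref{lem:vars_decreases_wasserstein} requires $\Exp[\min_{z\in\vars(z_1)}\rho(z,z_2)] \leq (1-\gamma)\rho(z_1,z_2) + \alpha$ for all $z_1 \in \Omega, z_2 \in S$; this follows directly from the $(\gamma, v, \alpha)$-API definition (the case $\rho(z_1,z_2) > \alpha$ is exactly the second bullet, while $\rho(z_1,z_2) \leq \alpha$ follows from $z_1 \in \vars(z_1)$), yielding $\Exp_t[W_1(\mu_S, \hat\mu_{t+1})] \leq (1-\gamma)W_1(\mu_S, \mu_{S_t}) + \alpha$. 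For the second, Lemma~\ref{lem:BL_error_noisy_histogram} is stated in an arbitrary metric space and gives $\Exp_t[\bl(\hat\mu_{t+1}, \tilde\mu_{t+1})] \leq |V_{t+1}|\sigma G_{|V_{t+1}|}(\fbl) \leq v n_s \sigma \hat G_{v n_s}(\fbl)$, using $|V_{t+1}| \leq v n_s$ from the API size bound. For the third, since $S_{t+1}$ consists of $n_s$ iid samples from $\mu'_{t+1}$, the hypothesized empirical-$W_1$ rate gives $\Exp_t[W_1(\mu'_{t+1}, \mu_{S_{t+1}})] \leq \empW_1(n_s)$.

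Finally, I assemble these into $\Exp_t[W_1(\mu_S, \mu_{S_{t+1}})] \leq (1-\gamma)W_1(\mu_S, \mu_{S_t}) + C$, where $C = \alpha + 2 v n_s \sigma \hat G_{v n_s}(\fbl) + \empW_1(n_s)$. Taking total expectation, writing $\Gamma_t = \Exp[W_1(\mu_S, \mu_{S_t})]$, and unrolling the linear recursion $\Gamma_{t+1} \leq (1-\gamma)\Gamma_t + C$ gives
$$\Gamma_T \leq (1-\gamma)^T \Gamma_0 + C\sum_{i=0}^{T-1}(1-\gamma)^i = (1-\gamma)^T\bigl[\Gamma_0 - C/\gamma\bigr] + C/\gamma,$$
which is exactly the claim with $err = C/\gamma$. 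There is essentially no new analytical obstacle: the API axioms directly supply the hypothesis of Lemma~\ref{lem:vars_decreases_wasserstein}, and Lemmata~\ref{lem:vars_decreases_wasserstein} and \ref{lem:BL_error_noisy_histogram} already hold in general metric spaces. The only non-trivial work, which the theorem sidesteps by leaving as parameters, is producing concrete estimates for $\hat G_{v n_s}(\fbl)$ and $\empW_1(n_s)$ in the specific Banach geometry of $\om$---e.g., via Dudley chaining against covering numbers of $\om$, and a Banach-space analogue of Fournier--Guillin, respectively.
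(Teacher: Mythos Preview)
Your proposal is correct and follows essentially the same route as the paper: the same $\bl$ triangle-inequality decomposition with the projection collapsing the middle terms, then Lemma~\ref{lem:vars_decreases_wasserstein} (fed by the $(\gamma,v,\alpha)$-API hypothesis), Lemma~\ref{lem:BL_error_noisy_histogram}, the empirical-$W_1$ bound, and unrolling the affine recursion. Your two-case verification that the API axioms imply the hypothesis of Lemma~\ref{lem:vars_decreases_wasserstein} is in fact slightly more explicit than the paper's own argument.
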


The upper bound on the Gaussian complexity that we provided in Lemma \ref{lem:gaussian_comp_generalbound} can be used to find $\hat G_{vn_s}(\fbl)$, since it works for general metric spaces, and the upper bound $\empW_{1}(n_s))$, can be found in \cite{lei2020empiricalwasserstein}. 

The proof of Theorem \ref{thm:PEconvergence_banach} follows similarly to Theorem \ref{thm:PEconvergence_euclidean}. We provide it below for completeness. 

\begin{proof}[Proof of Theorem \ref{thm:PEconvergence_banach}]
Note that for any $t = 0,...,T-1$
\begin{align*}
    W_1(\mu_S,\mu_{S_{t+1}} ) &= \bl(\mu_S, \mu_{S_{t+1}})\\ 
    &\leq  \bl(\mu_S,\hat \mu_{t+1} ) + \bl(\hat \mu_{t+1} ,\tilde \mu_{t+1} ) + \bl(\tilde \mu_{t+1} , \mu'_{t+1} ) + \bl(\mu'_{t+1} , \mu_{S_{t+1}} )\\
    &= W_1(\mu_S,\hat \mu_{t+1} ) + \bl(\hat \mu_{t+1} ,\tilde \mu_{t+1} ) + \bl(\tilde \mu_{t+1} , \mu'_{t+1} ) + W_1(\mu'_{t+1} , \mu_{S_{t+1}} )\\
    &\leq W_1(\mu_S,\hat \mu_{t+1} ) + 2 \bl(\hat \mu_{t+1} ,\tilde \mu_{t+1} ) + W_1(\mu'_{t+1} , \mu_{S_{t+1}} ),
\end{align*}
where the inequality follows from $\bl(\hat \mu_{t+1} ,\tilde \mu_{t+1} ) \ge \bl(\tilde \mu_{t+1} , \mu'_{t+1} )$ by definition of $\mu'_{t+1}$. Denote by $\Exp_t[\cdot]$ the expectation when conditioning on the randomness up to iteration $t$ of PE. Since $\vars(\cdot)$ is an $(\gamma, v, \alpha)$-API, then Lemma \ref{lem:vars_decreases_wasserstein} gives 
\[\Exp_t[W_1(\mu_S,\hat \mu_{t+1} )] \leq (1-\gamma)W_1(\mu_S, \mu_{S_{t}}) + \alpha.\]
Further, since $|V_t| = vn_s$, Lemma  \ref{lem:BL_error_noisy_histogram} and the definition of $\hat G_{n_s}(\fbl)$ imply
\[\Exp_t[\bl(\hat \mu_{t+1} ,\tilde \mu_{t+1} )] \leq vn_s \sigma G_{vn_s}(\fbl) \leq vn_s \sigma \hat G_{vn_s}(\fbl).\]
Finally, by definition of $\empW_{1}(n_s)$, 
\[\Exp_t[W_1(\mu'_{t+1} , \mu_{S_{t+1}} )] \leq \empW_{1}(n_s).\]
Putting everything together: 
\[\Exp_t[W_1(\mu_S,\hat \mu_{t+1} )] \leq (1-\gamma)W_1(\mu_S, \mu_{S_{t}}) + \alpha + 2n_s \sigma \hat G_{n_s}(\fbl) + \empW_{1}(n_s).\]
Integrating on both sides and denoting $\Gamma_{t} = \Exp[W_1(\mu_S,\mu_{S_t} )]$ we obtain
\[\Gamma_{t+1} \leq (1-\gamma)\Gamma_{t} + \alpha + 2vn_s \sigma  \hat G_{vn_s}(\fbl) + \empW_{1}(n_s).\]
Hence, after $T$ steps of PE we obtain 
\begin{align*}
    \Gamma_{T} &\leq (1-\gamma)^T\Gamma_0 + (\alpha + 2vn_s \sigma \hat G_{vn_s}(\fbl) + \empW_{1}(n_s)) \sum_{i = 0}^{T-1}(1-\gamma)^i\\
    &= (1-\gamma)^T\Gamma_0 + (\alpha + 2vn_s \sigma \hat G_{vn_s}(\fbl) + \empW_{1}(n_s)) \frac{1-(1-\gamma)^T}{1-(1-\gamma)}\\
    &= (1-\gamma)^T\left[\Gamma_0 -  \frac{\alpha + 2vn_s \sigma \hat G_{vn_s}(\fbl) + \empW_{1}(n_s)}{\gamma}\right] + \frac{\alpha + 2vn_s \sigma \hat G_{vn_s}(\fbl) +\empW_{1}(n_s)}{\gamma}.
\end{align*}
\end{proof}
\section{Missing proofs}\label{app:missing-proofs}

We start with the proof of Proposition \ref{lem:data_dependent_error_noisy_histogram}. 

\begin{proof}
Recall that $\tilde\mu$ is given by $\tilde\mu [i] = (\hat\mu[i] + L_i)\mathbbm{1}_{(\hat\mu[i]>0, \hat\mu[i] + L_i \ge H)}$ where $\{L_i\}_{i\in[m]} \overset{iid}{\sim} \text{Lap}(2/n\eps)$. This can be seen alternatively as constructing $\tilde\mu$ as follows
\begin{itemize}
        \item If $\hat\mu[i] = 0$, then $\tilde \mu [i] = 0$.
        \item If $\hat\mu[i] > 0$, then $\tilde\mu [i] = \hat\mu[i] + L_i$, where $L_i \sim \text{Lap}(2/n\eps)$. If $\tilde\mu [i] <  2\log(1/\delta)/(n\eps) + 1/n$ (i.e the coordinate is small), then it is truncated: $\tilde\mu[i] = 0$ .
\end{itemize}

Let $\nu$ be the measure after adding noise to $\hat\mu$ but before truncating the small coordinates to obtain $\tilde\mu$. Furthermore, note that all the entries with $\hat\mu[i] = 0$ remain unchanged with our procedure. Hence, the effective support of all the measures is $\hat I = \{i \in [m] : \hat\mu[i] > 0\}$ rather than $V$. Also, define $L(\beta)$ such that the event $E = \{|L_i| \leq L(\beta) \forall i\}$ has probability of at least $1-\beta$. Recall that $H = 2\log(1/\delta)/(n\eps) + 1/n$.  

First, note that 
\[ W_1(\hat\mu ,\mu') = \bl(\hat\mu ,\mu') \leq \bl(\hat\mu ,\nu) + \bl(\nu, \tilde\mu) + \bl(\tilde\mu, \mu').\]

We will control the terms on the right-hand side one by one. 

\begin{itemize}
    \item Let's start with $\bl(\hat\mu ,\nu)$. 
    
    \begin{align*}
        \Exp[\bl(\hat\mu ,\nu)] &= \Exp\left[\sup_{f\in\fbl} \sum_{i \in \tilde I} f(V[i])(\hat\mu[i] + L_i - \hat\mu[i])\right]\\
        &= \frac{2|\tilde I|}{n\eps} L_{|\tilde I|}(\fbl),
    \end{align*}
    where $L_{|\tilde I|}(\fbl)$ is the Laplace complexity of $\fbl$. 

    \item We continue by bounding $\bl(\nu, \tilde\mu)$. 

    \begin{align*}
        \Exp[\bl(\nu, \tilde\mu)] &= \Exp\left[\sup_{f\in\fbl} \sum_{i \in \tilde I} f(V[i])(\hat\mu[i] + L_i - (
        \hat\mu[i] + L_i)\mathbbm{1}_{(\hat\mu[i] + L_i \ge H)})\right]\\
        &\leq D\Exp\left[\sum_{i \in \tilde I: \hat\mu[i] + L_i < H} \hat\mu[i] + L_i\right]\\
        &\leq DB\Exp[|\{i \in \tilde I: \hat\mu[i] + L_i < H\}|]\\
        &\leq DB(\Exp[|\{i \in \tilde I: \hat\mu[i] + L_i < H\}| \mid E]\PP[E] +\Exp[|\{i \in \tilde I: \hat\mu[i] + L_i < H\}|\mid E^C] \PP[E^C])\\
        &\leq DB(|\{i \in \tilde I: \hat\mu[i] - L(\beta) \leq H\}| + |\tilde I|\beta)
    \end{align*}

    \item Finally, let's look at $\bl(\tilde\mu, \mu')$. If $\|\tilde\mu\|_1 = 0$, then $\bl(\tilde\mu, \mu') = 0$. Otherwise, 

    \begin{align*}
        \bl(\tilde\mu, \mu') &= 
        \bl(\tilde\mu,\tilde\mu/\|\tilde\mu\|_1)\\
        &= \sup_{f\in\fbl} \sum_{i \in \tilde I} f(V[i])\left(1 - \frac{1}{\|\tilde \mu\|_1}\right)\tilde\mu[i]\\
        &\leq D \left|1 - \frac{1}{\|\tilde \mu\|_1}\right|\|\tilde\mu\|_1 = D |\|\tilde\mu\|_1 - 1|. 
    \end{align*}
    Next, 
    \begin{align*}
        |1 - \|\tilde\mu\|_1| &= \bigg|\sum_{ i \in \tilde I} \hat\mu[i] - \sum_{ i \in \tilde I: \hat\mu[i] + L_i \ge H} \hat\mu[i] + L_i \bigg| \\
        &= \bigg|\sum_{ i \in \tilde I: \hat\mu[i] + L_i < H} \hat\mu[i] + L_i - \sum_{ i \in \tilde I} L_i \bigg|\\
        &\leq \bigg|\sum_{ i \in \tilde I: \hat\mu[i] + L_i < H} \hat\mu[i] + L_i\bigg| + \bigg|\sum_{ i \in \tilde I} L_i \bigg|\\
        &\leq  B|\{ i \in \tilde I: \hat\mu[i] + L_i < H\}| + \bigg|\sum_{ i \in \tilde I} L_i \bigg|.
    \end{align*}
    We have already argued that 
    \[\Exp[|\{ i \in \tilde I: \hat\mu[i] + L_i < H\}|] \leq |\{i \in \tilde I: \hat\mu[i] - L(\beta) \leq H\}| + |\tilde I|\beta.\]
    Furthemore, since the random variables $L_i$ are iid Laplace we have 
    \begin{align*}
        \Exp\left[\bigg|\sum_{ i \in \tilde I} L_i \bigg|\right] &\leq \sqrt{\Exp\left[\bigg(\sum_{ i \in \tilde I} L_i \bigg)^2\right]}\\
        &= \sqrt{Var\bigg(\sum_{ i \in \tilde I} L_i \bigg)} \\
        &= \sqrt{|\tilde I| Var(L_1)}\\
        &= \sqrt{2|\tilde I| (2/(n\eps))^2} = \frac{2\sqrt{2|\tilde I|}}{n\eps}.
    \end{align*}
    Hence, we conclude that 
    \[\Exp[\bl(\tilde\mu, \mu')] \leq DH(|\{i \in \tilde I: \hat\mu[i] - L(\beta) \leq H\}| + |\tilde I|\beta) + \frac{2D\sqrt{2|\tilde I|}}{n\eps}.\]

\end{itemize}

Putting everything together, we obtain 
\[\Exp[W_1(\hat\mu ,\mu')] \leq \frac{2|\tilde I|}{n\eps} L_{|\tilde I|}(\fbl) + 2DH(|\{i \in \tilde I: \hat\mu[i] - L(\beta) \leq H\}| + |\tilde I|\beta) + \frac{2D\sqrt{2|\tilde I|}}{n\eps}.\]

Finally, observe that by the tail bounds of a Laplace random variable, $L(\beta) = O\left(\frac{\log(|\tilde I|/\beta)}{n\eps}\right)$ suffices for $\PP[E] \ge 1-\beta$.  This concludes the proof. 
\end{proof}

Next, we provide the proof of Lemma \ref{lem:lower_bound_eta_closeness}. 
\begin{proof}
    For simplicity denote the packing number $\M(\Omega, \rho; 2\eta)$ by $M$ in this proof. Consider a $2\eta$-packing of $\Omega$, $\{z_1,...,z_M\}$. Let $S = \{z_1\}^n$. For $k \in \mathbb{Z}_+$, define 
    \[B(k) =  B_\rho(z_1, \eta)^{n-1} \times B_\rho(z_k, \eta)\]
    It is easy to see that if $S =_\eta \A(S) \iff \A(S) \in B(1)$, and if $\A(S)\in B(k)$ for some $k \ge 2$, then $S$ and $\A(S)$ can not be $\eta$-close. Since $\Prob_\A\bigg[S =_\eta \A(S)\bigg] \ge 1-\tau$, then 
    \[\Prob_\A\bigg[\A(S) \in \cup_{k \in \mathbb{Z} \cap [2, M]} B(k)\bigg] \leq\tau.\] Furthermore, $B(i) \cap B(j) = \emptyset$ for all $i\neq j \in \mathbb{Z} \cap [2, M]$, so
    \[\Prob_\A\bigg[\A(S) \in \cup_{k \in \mathbb{Z} \cap [2, M]} B(k)\bigg] = \sum_{k \in \mathbb{Z} \cap [2, M]}\Prob_\A\bigg[\A(S) \in B(k)\bigg].\]
    Hence, there exists a set $B(k^*)$ such with $\Prob_\A\bigg[\A(S) \in B(k^*)\bigg] \leq\tau/(M-1)$. Finally, construct $S' = \{z_1\}^{n-1}\cup \{z_{k^*}\}$. Note that $S$ and $S'$ are neighboring datasets. Then, by the inequalities that we have stated and $(\varepsilon, \delta)$-DP, it follows that
    \[1-\tau \leq \Prob_\A\bigg[S' =_\eta \A(S')\bigg] = \Prob_\A\bigg[\A(S') \in B(k^*)\bigg] \leq e^\varepsilon \Prob_\A\bigg[\A(S) \in B(k^*)\bigg] + \delta \leq \frac{e^\varepsilon \tau }{M-1} + \delta.\]
    Solving for $\varepsilon$, we get $\varepsilon \ge \log((M-1)(1-\tau-\delta)/\tau)$. Finally, note that 
    \[(M-1)(1-\tau-\delta)/\tau \ge M(1-\tau-\delta)/[2\tau] \ge M,\]
    where the last inequality follows from the fact that $1-\tau -\delta \ge 2\tau$, which is a consequence of our assumption that $\delta < 1-3\tau$. This concludes the proof.
\end{proof}

Finally, we give a proof for Proposition \ref{prop:1NN_minimizes_W1}.

\begin{proof}
    Let $\mu \in \Delta_m$ and denote by $\Pi(\mu)$ is the set of couplings between $\frac{1}{n}\sum_{i \in [n]} \delta_{S[i]}$ and $ \sum_{j\in[m]} \mu[j] \delta_{V[j]}$. Then, the following lower bound holds
    \begin{align}
        W_1 \left(\frac{1}{n}\sum_{i \in [n]} \delta_{S[i]}, \sum_{j\in[m]} \mu[j] \delta_{V[j]}\right) &= \inf_{\pi \in \Pi(\mu)} \sum_{i} \left(\sum_j\rho(S[i], V[j]) \pi_{ij}\right)\nonumber\\
        &\ge \inf_{\pi \in \Pi(\mu)} \sum_{i} \left(\bigg(\min_{j \in [m]}\rho(S[i], V[j])\bigg) \sum_j \pi_{ij}\right)\nonumber\\
        &= \sum_{i} \left(\frac{\min_{j \in [m]}\rho(S[i], V[j])}{n} \right).\label{eq:lower_bound_w1}
    \end{align}
In addition, note that if
\[\pi^*_{ij} = \frac{\mathbbm{1}\left(j = \min\{k : k\in \arg\min_{l \in [m]} \rho(S[i], V[l])\}\right)}{n}\]
then (1) $\pi^* \in \Pi(\mu^*)$:
\[\sum_{j}\pi^*_{ij} = \frac{1}{n} \quad, \quad\sum_{i}\pi^*_{ij} = \mu^*[i],\]
and (2)
\[
    \sum_{i} \left(\sum_j\rho(S[i],V[j]) \pi^*_{ij}\right) = \sum_{i} \left(\sum_j\rho(S[i],V[j]) \pi^*_{ij}\right) = \sum_{i} \left(\frac{\min_{j \in [m]}\rho(S[i],V[j])}{n} \right).
\]
(1) and (2) together with the lower bound from \eqref{eq:lower_bound_w1} imply that 
\[(\mu^*, \pi^*) \in \arg\min_{\mu \in \Delta_m, \pi \in \Pi(\mu)} \sum_{i,j} \rho(S[i],V[j]) \pi_{ij}, \]
which is equivalent to the statement we wanted to prove. 
\end{proof}
\section{Details on the Private Signed Measure Mechanism}\label{app:psmm}

The Private Signed Measure Mechanism is an algorithm for differentially private synthetic data generation presented in \cite{Vershynin23DPsyntheticdata}.  It is designed to work under pure differential privacy. For completeness, we provide an extension of it that works under approximate DP. The algorithm and its proof closely resemble the one from \cite{Vershynin23DPsyntheticdata}, with the only difference being that we need to deal with Gaussian noises instead of Laplace in order to achieve approximate DP. As we mentioned in Section \ref{sec:PEandPSMM}, PE can be seen as a sequential version of PSMM, and the gaussian mechanisms has better composition guarantees than the Laplace mechanism. Hence, even though changing Gaussian noise by Laplace in PSMM does not make a big difference, it does make a more notorious difference for PE when seen as a sequential version of PSMM. 

\begin{algorithm}[H]
\caption{Private signed measured mechanism with approx DP}\label{Alg:DPSMM}
\begin{algorithmic}[1]
\REQUIRE Dataset $S = \{z^1,...,z^n\}$, partition $\{\Omega_i\}_{i \in [m]}$, number of synthetic samples $n_s$ 
\STATE Compute private counts: $\tilde n_i = |S\cap \Omega_i| + N_i$,   where $N_i \overset{iid}{\sim} {
\cal N}(0, \frac{\log(1/\delta)}{\varepsilon^2})$.
\STATE Let $\tilde \mu$ be a signed measure such that for $i \in [m]$, $\tilde \mu(\{\omega_i\}) = \tilde n_i / n$ for an arbitrary $\omega_i \in \Omega_i$ and $\tilde \mu(\Omega_i\backslash\{\omega_i\}) = 0$.
\STATE Let $\hat\mu$ be the closest probability measure over $\{\omega_1,...,\omega_m\}$ to $\tilde \mu$ in $\bl$ distance.
\STATE Let $\mu$ be an arbitrary probability measure over $\Omega$ such that $\mu(\Omega_i) = \hat \mu (\{\omega_i\})$ for all $i\in[m]$. \label{lin:computed_measure}
\RETURN $S' \sim\mu^{n_s}$ 
\end{algorithmic}
\end{algorithm}

\begin{theorem}[Guarantees of PSMM under approximate DP]\label{thm:DPSMMguarantees}
    There exists a partition $\{\Omega_i\}_{i \in [m]}$ of $\om$ such that Algorithm \ref{Alg:DPSMM} run on $S \in \om^n$, $\{\Omega_i\}_{i \in [m]}$ outputs an $(\eps,\del)$-DP synthetic dataset $S'$ satisfying 
    \[\Exp[W_1(\mu_S, \mu_{S'})] \leq 2\left(\max_{i\in[m]} \operatorname{diam}(\Omega_i) + \frac{m\sqrt{\log(1/\delta)}}{n\varepsilon}G_m({\cal F})\right) + \Exp[W_1(\mu, \mu_{S'})],\]
    where $\mu$ is the probability measure from Step \ref{lin:computed_measure}.
    Furthermore, if $\om\subset \RR^d$ with $\diam(\om) \leq D$, $\rho =\|\cdot\|_2$, $m = n\eps/[D\sqrt{\log(1/\delta)}]$ and $\diam(\om_i) \leq O(Dm^{-1/\max\{d,2\}})$, then  
    \[\Exp[W_1(\mu_S, \mu)] \leq \tilde{O}\left(D\left(\frac{\sqrt{\log(1/\delta)}}{n\varepsilon}\right)^{1/\max\{2,d\}} + D\left(\frac{1}{n_s}\right)^{1/\max\{2,d\}} \right).\]
\end{theorem}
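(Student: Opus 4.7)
\textbf{Proof proposal for Theorem \ref{thm:DPSMMguarantees}.} The plan is to handle privacy and utility separately. Privacy follows because the vector of cell counts $(|S \cap \Omega_i|)_i$ has $\ell_2$-sensitivity $\sqrt{2}$ (changing a single data point alters at most two counts by one unit each), so injecting $\N(0, \log(1/\delta)/\eps^2)$ noise to each entry corresponds to an appropriately calibrated Gaussian mechanism and yields $(\eps,\delta)$-DP; Steps 2--5 are deterministic post-processings of $(\tilde n_i)_i$ (together with independent sampling randomness for $S'$), so by post-processing the full algorithm is $(\eps,\delta)$-DP. Using the analytic Gaussian mechanism \cite{balle2018improvingGaussianmech} absorbs any leftover constants into $\tilde O(\cdot)$.

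For the first utility inequality, I would introduce a ``snap-to-representative'' intermediate measure $\mu_S^{\mathrm{part}} := \sum_i \frac{|S \cap \Omega_i|}{n} \delta_{\omega_i}$, and note that $\hat\mu$ is also supported on $\{\omega_i\}_{i\in[m]}$ while $\mu$ is a refinement of $\hat\mu$ inside each cell. Triangle inequality on $W_1$ gives
\[
W_1(\mu_S, \mu_{S'}) \leq W_1(\mu_S, \mu_S^{\mathrm{part}}) + \bl(\mu_S^{\mathrm{part}}, \hat\mu) + W_1(\hat\mu, \mu) + W_1(\mu, \mu_{S'}).
\]
The first and third summands are each bounded by $\max_i \diam(\Omega_i)$, since in both cases there is an admissible coupling that transports mass entirely within each cell. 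For the middle term, $\hat\mu$ is the $\bl$-closest probability measure to $\tilde\mu$, and $\mu_S^{\mathrm{part}}$ is itself a probability measure supported on $\{\omega_i\}$, so optimality of $\hat\mu$ plus one more triangle inequality gives $\bl(\mu_S^{\mathrm{part}}, \hat\mu) \leq 2 \bl(\mu_S^{\mathrm{part}}, \tilde\mu)$. The signed measure $\tilde\mu - \mu_S^{\mathrm{part}}$ places mass $N_i/n$ at $\omega_i$ with $N_i \sim \N(0,\log(1/\delta)/\eps^2)$, so invoking the Gaussian complexity bound of Lemma \ref{lem:BL_error_noisy_histogram} (rescaled by the noise standard deviation) yields $\Exp[\bl(\mu_S^{\mathrm{part}}, \tilde\mu)] \leq \frac{m\sqrt{\log(1/\delta)}}{n\eps} G_m(\fbl)$. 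Combining these pieces produces the first claimed inequality.

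For the Euclidean specialization, I would construct $\{\Omega_i\}_{i\in[m]}$ from a minimal cover of $\om$: standard packing estimates in $\RR^d$ produce $m$ cells of diameter at most $O(D m^{-1/d})$ for $d \geq 2$ (and at most $O(D/m)$ for $d = 1$, which trivially beats the stated constraint). Corollary \ref{cor:Gauss-complexity-euclideanball} gives $G_m(\fbl) = \tilde O(D m^{-1/\max\{d,2\}})$, and Lemma \ref{lem:empiricalOTrates} gives $\Exp[W_1(\mu, \mu_{S'})] = \tilde O(D n_s^{-1/\max\{d,2\}})$. Plugging these three ingredients into the first inequality and choosing $m \asymp n\eps/(D\sqrt{\log(1/\delta)})$ balances the partition-diameter term $\max_i \diam(\Omega_i)$ against the Gaussian-complexity term $\frac{m\sqrt{\log(1/\delta)}}{n\eps} G_m(\fbl)$; algebraic simplification then delivers the stated $\tilde O(\cdot)$ rate. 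The main obstacle I anticipate is combinatorial: realizing a partition whose cells have \emph{both} the prescribed cardinality $m$ and the prescribed diameter bound simultaneously in arbitrary $d$; this is standard (dyadic cube partitions of a bounding box of $\om$), but it needs to be stated carefully, and the constants hidden in $\tilde O$ must be tracked through the substitution of $m$.
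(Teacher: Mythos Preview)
Your proposal is correct and follows essentially the same route as the paper: the privacy argument via the $\sqrt{2}$ $\ell_2$-sensitivity of the count vector and post-processing is identical, and your utility decomposition through the intermediate measure $\mu_S^{\mathrm{part}}$ (the paper's $\bar\mu$), together with the optimality of $\hat\mu$ to absorb $\bl(\tilde\mu,\hat\mu)$ into a second copy of $\bl(\mu_S^{\mathrm{part}},\tilde\mu)$, matches the paper's proof step for step. The Euclidean specialization likewise invokes the same two auxiliary results (Corollary~\ref{cor:Gauss-complexity-euclideanball} and Lemma~\ref{lem:empiricalOTrates}) and the same balancing choice of $m$; the paper does not spell out the partition construction you flag as a potential obstacle, so your dyadic-cube remark is a welcome addition rather than a deviation.
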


\begin{remark}
    When running PSMM, we can select $n_s$ arbitrarily large, by the post-processing property of DP. Hence, we can safely assume that the error term $D\left(\frac{\sqrt{\log(1/\delta)}}{n\varepsilon}\right)^{1/\max\{2,d\}}$ dominates in the utility bound presented in Theorem \ref{thm:DPSMMguarantees}. Note that, up to logarithmic factors, this bound improves over the bound that we provide for PE in Theorem \ref{thm:PEconvergence_euclidean} by a factor of $d$. However, it requires $\{\om_i\}_{i\in[m]}$ to be such that  $\max_{i\in[m]}\diam(\om_i) = O(Dm^{-1/\max\{d,2\}})$. Constructing such partition in practice is equivalent to finding an $O(Dm^{-1/\max\{d,2\}})$-net of points $\{\omega_1,....,\omega_m\}$ such that the Voronoi partition induced by them is $\{\om_i\}_{i\in[m]}$. We explained in Section \ref{sec:PEandPSMM} why such net can be difficult to construct in practice. 
\end{remark}

\begin{proof}

Since the $\ell_2$-sensitivity of $f(S) = (|S\cap \Omega_i|)_{i\in[m]]}$ is $\sqrt{2}$, privacy of the measure $\mu$  (from Step \ref{lin:computed_measure}) follows from the Gaussian mechanism. The privacy of $S'$ follows by post-processing. 

The convergence proof is similar to the utility analysis of one step of PE. Let $\bar \mu$ be a probability measure over $\Omega$ such that $\bar \mu(\{\omega_i\}) = n_i/n$, where $n_i = |S \ \cap \ \Omega_i|$, and $\bar \mu(\Omega\backslash\{\omega_1,...,\omega_m\}) = 0$. Let the measures $\tilde\mu, \hat\mu$ and $\mu$ be as given by Algorithm \ref{Alg:DPSMM}. Note that 
\begin{align*}
    W_1(\mu_S, \mu) =  \bl(\mu_S, \mu) &\leq \bl(\mu_S, \bar \mu) + \bl(\bar\mu, \tilde \mu) + \bl(\tilde \mu, \hat \mu) + \bl(\hat\mu, \mu) \\
    &=  W_1(\mu_S, \bar\mu) + \bl(\bar\mu, \tilde \mu) + \bl(\tilde \mu, \hat \mu) + W_1(\hat\mu, \mu)\\
    &\leq W_1(\mu_S, \bar\mu) + 2 \bl(\bar \mu, \tilde \mu) + W_1(\hat\mu, \mu)\\
    &\leq 2\left(\max_{i\in[m]} \operatorname{diam}(\Omega_i) + \bl(\bar \mu, \tilde \mu)\right),
\end{align*}
where the second inequality follows from the fact that $\bl(\bar\mu, \tilde \mu) \ge \bl(\tilde \mu, \hat \mu)$, which is a consequence of $\hat \mu$ being the closest probability measure over $\{\omega_1,...,\omega_m\}$ to $\tilde \mu$ in $\bl$, and in the last inequality we used $\max\{W_1(\mu_S, \bar\mu),  W_1(\hat\mu, \mu) \}\leq \max_{i\in[m]} \operatorname{diam}(\Omega_i)$ (these inequalities are trivial since the pairs of measures $\mu_S, \bar\mu$ and $\hat\mu, \mu$ assign the same amount of probability mass into each region $\om_i$). It remains to bound $\bl(\bar \mu, \hat \mu)$. The following equality
\begin{align*}
    \bl(\bar \mu, \tilde \mu) = \sup_{f\in\F} \int f (d\bar \mu - d\tilde \mu) = \sup_{f\in\F} \sum_{i \in [m]} f(\omega_i)\left(\frac{n_i}{n} - \frac{n_i+N_i}{n}\right) = \sup_{f\in\F} \sum_{i \in [m]} \frac{f(\omega_i) N_i}{n}
\end{align*}
allows to conclude that 
\[\Exp[W_1(\mu_S, \mu)] \leq 2\left(\max_{i\in[m]} \operatorname{diam}(\Omega_i) + \frac{m\sqrt{\log(1/\delta)}}{n\varepsilon}G_m({\cal F})\right).\]
The proof of the first claim is concluded by noting that 
\begin{align*}
    \Exp[W_1(\mu_S, \mu_{S'})] &\leq \Exp[W_1(\mu_S, \mu)] + \Exp[W_1(\mu, \mu_{S'})]  \\
    &\leq 2\left(\max_{i\in[m]} \operatorname{diam}(\Omega_i) + \frac{m\sqrt{\log(1/\delta)}}{n\varepsilon}G_m({\cal F})\right) + \Exp[W_1(\mu, \mu_{S'})].
\end{align*}
For the second claim, recall from Corollary \ref{cor:Gauss-complexity-euclideanball} that 
\[G_m(\fbl) \leq \begin{cases}\frac{9\sqrt{\log(2\sqrt{m})}D}{\sqrt{m}} & d = 1\\ \frac{10D\log(2\sqrt{m})^{3/2}}{\sqrt{m}} & d = 2\\ \frac{10D\sqrt{\log(2m^{1/d}(d/2-1)^{2/d})}}{m^{1/d}(d/2-1)^{2/d}} & d \ge 3\end{cases}.\]
Using this in the inequality from the first claim and using that $\diam(\om_i) = O(Dm^{-1/\max\{2,d\}})$ for all $i\in[m]$ we obtain that 
\[\Exp[W_1(\mu_S, \mu_{S'})] = \tilde{O}\left(Dm^{-1/\max\{2,d\}} + \frac{Dm^{1-1/\max\{2,d\}}\sqrt{\log(1/\delta)}}{n\eps}\right) + \Exp[W_1(\mu, \mu_{S'})].\]
Using the definition $m = n\eps/[D\sqrt{\log(1/\delta)}]$, it follows that 
\[\Exp[W_1(\mu_S, \mu_{S'})] = \tilde{O}\left(D\left(\frac{\sqrt{\log(1/\del)}}{n\eps}\right)^{1/\max\{2,d\}}\right) + \Exp[W_1(\mu, \mu_{S'})].\]
Finally, since $S' \sim \mu^{n_s}$, Lemma \ref{lem:empiricalOTrates} gives the following upper bound on the convergence of the empirical measure in $W_1$
\[\Exp[W_1(\mu, \mu_{S'})] = O(Dn_s^{-1/\max\{2,d\}}).\]
This finishes the proof. 
\end{proof}

\section{Impact Statement}\label{sec:broader impacts}

Our work is theoretical in nature, so it does not have direct societal impacts. However, we believe that our theory can improve the practice of differentially private synthetic data generation, leading to a positive social impact by enabling safe data sharing within and across organizations.

\end{document}